\newtheorem{definition}{Definition}[section]
\newtheorem{lemma}{Lemma}[section]
\newtheorem{theorem}{Theorem}[section]
\newtheorem{corollary}{Corollary}[section]
\newtheorem{proposition}{Proposition}[section]
\newtheorem{fact}{Fact}[section]
\newcommand{\Var}{\textrm{Var}}
\newcommand{\vol}{\textrm{vol}}
\newcommand{\poly}{\textrm{poly}}
\begin{document}

\title{Structural Information Learning Machinery: Learning from Observing, Associating, Optimizing, Decoding, and Abstracting\footnote{The author was partially supported by NSFC grant No. 61932002 and No.
61772503.}}

\author{Angsheng Li\\
State Key Laboratory of Software Development Environment\\
 School of Computer Science,
 Beihang University\\
 Beijing, 100083, P. R. China.}

\maketitle

\begin{abstract}

Both computation and information are the keys to understanding learning and intelligence. However, the studies of computation and information had been largely separated in the academic communities, for which 
a fundamental question in the theoretical underpinnings of information science and computer science is to measure the information that is embedded in a physical system \cite{B2003}. 
The author and his co-author \cite{LP2016a} introduced the notion of {\it encoding tree} as a lossless encoding of a graph and the metric of {\it structural entropy} of graphs. The structural entropy of a graph is the {\it intrinsic information hidden in the graph} that cannot be decoded by any encoding tree or any lossless encoding of the graph. The structural information is defined as a concept of the merging of computation and information.
In the present paper, we propose the model of {\it structural information learning machines} (SiLeM for short), leading to a mathematical definition of learning by merging the theories of computation and information. 
Our model shows that the essence of learning is {\it to gain information}, that to gain information is {\it to eliminate uncertainty} embedded in a data space, and that to eliminate uncertainty of a data space can be reduced to an optimization problem, that is, an {\it information optimization problem}, which can be realized by a general {\it encoding tree method}. The principle and criterion of the structural information learning machines are maximization of {\it decoding information} from the data points observed together with the relationships among the data points, and semantical {\it interpretation} of syntactical {\it essential structure}, respectively. 
A SiLeM machine learns the laws or rules of nature. It observes the data points of real world, builds the {\it connections} among the observed data and constructs a {\it data space}, for which the principle is to choose the way of connections of data points so that the {\it decoding information} of the data space is maximized, finds the {\it encoding tree} of the data space that minimizes the dynamical uncertainty of the data space, in which the encoding tree is hence referred to as a {\it decoder}, due to the fact that it has already eliminated the maximum amount of uncertainty embedded in the data space, interprets the {\it semantics} of the decoder, an encoding tree, to form a {\it knowledge tree}, extracts the {\it remarkable common features} for both semantical and syntactical features of the modules decoded by a decoder to construct {\it trees of abstractions}, providing the foundations for {\it intuitive reasoning} in the learning when new data are observed.
Our SiLeM machines learn from observing, associating, encoding, optimizing, decoding, interpreting, abstracting and intuitive reasoning to realize the maximum gain of information, without any hand-made choice of parameter.

\end{abstract}

\section{\bf Introduction}\label{sec:int}

Turing machines \cite{T1936} capture the mathematical essence of the concept of ``computation", give not only a mathematical definition of the concept computation, but also provides a model to build ``computers". In the 20th century, it had been proved that computers are useful, for which the mission of computer science was to develop efficient algorithms and computing devices.  In the 21st century, computers have been becoming very useful everywhere. The mission of computers has become ``information processing" in the real world. However, there is no a mathematical theory that supports the mission of ``information processing".

At the beginning of artificial intelligence in 1956, one point of view was to regard ``artificial intelligence" as ``complex information processing". Again, there was no mathematical understanding of complex information processing.

In the past more than 70 years, Shannon's information theory is the main principle for us to understand the concept of ``information". However, Shannon's theory fails to support the current ``information processing", especially ``complex information processing".

Shannon's \cite{S1948} metric measures the uncertainty of a probabilistic distribution or a random variable from the probability distribution as

\begin{equation}\label{eqn:Shannon}
H(p_1,\cdots,p_n)=-\sum\limits_{i=1}^np_i\log_2p_i.
\end{equation}

This metric and the associated concept of noise, have provided rich sources for both information theory and technology.
In particular, Shannon's theory solved two fundamental questions in communication theory: What is the ultimate data compression, and what is the ultimate transmission rate of communication. For this reason, some people consider information theory to be a subfield of communication theory. We remark that it is much more. Indeed, information theory plays an important role in many areas, such as statistical physics, computer science, statistical inference, probability and statistics. 

Shannon's metric measures the quantity of uncertainty embedded in a random variable or a probability distribution. We note that either a random variable or a probability distribution is a function. Functions are classical objects in mathematics, representing the correspondence from every individual of a set to an element of the same or another set. However, in the real world, we often have to deal with systems consisting of many bodies and the relationships among the many bodies, referred to as physical systems. To represent such systems, graphs are the general mathematical model. Therefore, graphs are natural extensions of functions, and are general models of representations of real world objects. Shannon's theory indicates that, there is a quantity of uncertainty in random variables. We know that a random variable is in fact a function, and that a function is a special type of graph. Due to the fact that there are uncertainty in random variables and that graphs are natural extensions of functions,  there must exist uncertainty in graphs. However, Shannon's metric fails to measure the quantity of uncertainty embedded in a physical system such as a graph. In 2003, Brooks \cite{B2003} commented that: `` We have no theory, however, that gives us a metric for the information embedded in structure, especially physical structure". In addition, Shannon \cite{S1953} himself realized that his metric of information fails to support the analysis of communication networks to answer questions such as the characterization of optimum communication networks.

As a matter of fact, graph compressing and structure decoding are fundamental questions in structured noisy data analysis. However,
literature on graphical structure compression is scare. Turn \cite{T1984} introduced the problem of succinct representation of general unlabelled graphs. Naor \cite{N1990} provided such a representation when all unlabelled graphs are equally probable. Adler and Mitzenmacher \cite{AM2001} implemented some heuristic experiments for real-world graph compression. Sun, Bolt and Ben-Avraham \cite{SBB2008} proposed an idea similarly to that in \cite{AM2001} to compress sparse graphs. Peshkin \cite{P2007} proposed an algorithm for a graphical extension of the one-dimensional SEQUITUR compression method. Choi and Szpankowski \cite{CS2009} proposed an algorithm for finding the Shannon entropy of a graph generated from the ER model.

To understand the information embedded in a graph, we will need to encode the graph. How to encode a graph? In graph theory, there are parameters related to three types of graph encoding. Each model of these encodings involves assigning vectors to vertices, and the parameter is the minimum length of vectors that suffice. We study the maximum of this parameter over $n$-vertex graphs. The parameters are intersection number, product dimension, and squashed-cube dimension. 
Erd\"os, Goodman and P\'osa \cite{EGP1966} proposed the definition of intersection number and studied the notion. An intersection representation of length $t$ assigns each vertex a $0,1$-vector of length $t$ such that $u$ and $v$ have an edge if and only if their vectors have a $1$ in a common position. Equivalently, it assigns each $x\in V$ a set $S_x\subseteq [t]=\{1,2,\cdots,t\}$ such that for any $u,v$, there is an edge $(u,v)$ if and only if $S_u\cap S_v\not=\emptyset$.  The second parameter is the product dimension. A product representation of length $t$ assigns the vertices distinct vectors of length $t$ so that there is an edge $(u,v)$ if and only if their vectors differ in every position. The product dimension of a graph $G$ is the minimum length of such a representation of $G$. Lova\'sz, Nesetril and Pultr \cite{LNP1980} characterized the $n$-vertex graphs with product dimension $n-1$. The third encoding is to assign vectors to vertices such that distance between vertices in the graph is the number of positions where their vectors differ. Each of these encoding assigns vectors to vertices to preserve certain properties of the graphs. The key point of the encodings is to use the mathematical operations of vectors to 
recover the properties of the graphs. Clearly, operations over vectors are easy and more efficient. Unfortunately, these encodings distort the graphs, although each of them may preserve some specific properties of graphs.

To establish a theory of the information embedded in graphs, we will need a lossless encoding of graphs. Is there a lossless encoding of graphs? 
The author and his co-author \cite{LP2016a} introduced the concept of encoding tree of a graph as a lossless encoding of graphs, and defined the structural entropy of a graph to be the minimum amount of information required to determine the codeword of the vertex in an encoding tree for the vertex that is accessible from random walk with stationary distribution in the graph, under the condition that the codeword of the starting vertex of the random walk is known. The structural entropy of a graph is hence the {\it intrinsic information} embedded in the graph that cannot be decoded by any encoding tree or any lossless encoding of the graph. Measuring the structural entropy of a graph involves finding an encoding tree of the graph under which the information required to determine the codeword of vertices accessible from random walk in $G$ when the codeword of the starting vertex of the random walk is known is minimized. The quantification of the {\it structural entropy} of a graph defined in this way is {\it the intrinsic information hidden in the graph $G$} that cannot be decoded by any encoding tree or any lossless encoding of the graph.
The encoding tree found in this way, that is, minimizing the information hidden in a graph $G$, in the measuring of structural entropy of the graph $G$ hence determines and decodes a structure of $G$ by using which the uncertainty still left or hidden in $G$ has been minimized. We thus call such an encoding tree $T$ of $G$ a {\it decoder of $G$}. The decoder, $T$ say, of graph $G$ is hence an encoding tree of $G$. Since $T$ determines an encoding under which the uncertainty left or still hidden in $G$ is minimized, the syntactic structure $T$ of $G$ certainly supports a semantical or functional modules of $G$. More precisely, a decoder $T$ supports a semantical interpretation of the system $G$. Due to the fact that the decoder is an encoding tree, the semantical interpretation supported by the decoder is hence called a {\it a knowledge tree} of $G$. This provides a general principle to acquire knowledge from observed dataset.
This strategic goal of structural information theory has been successfully verified in real world applications. 
In \cite{LP2016a, LYP2016}, we established a systematical method based on the structural entropy minimization principle, without any hand-made parameter choices, to identify the types and subtypes of tumors. The types and subtypes found by our algorithms of structural entropy minimization are highly consistent with the clinical datasets. In \cite{L+2018}, we developed a method, referred to as deDoC, based on the principle of structural entropy minimization, to find the two- and three-dimensional DNA folded structures. The deDoC was proved the first principle-based, systematical, massive method for us to precisely identify the {\it topologically associating domains} (TAD) from Hi-C data. Remarkably, deDoC finds TAD-like structures from $10$ single cells. This opens a window for us to study single cell biology, which is crucial for potential breakthroughs in both biology and medical sciences. In network theory and network security, 
the concept of structural entropy \cite{LP2016a} has been extended to measure the security of networks \cite{LLPZ2015, LP2016, LZP2016, Liu+2019}.

Structural information, as a result of the merging of the concepts of computation and information, has a rich theory, referred to \cite{LP2016a}. More importantly, the concept of structural information provides a key to mathematically understanding the principle of data analysis, the principle of learning, and even the principle of intelligence. The reasons are as follows: Computing is, of course, an ingredient of learning and intelligence. ``Information", if well-defined, must be the the foundation of intelligence. Mathematically speaking, entropy is the quantity of uncertainty, and information is the amount of uncertainty that has been eliminated. Therefore, both computation and information are the keys for us to understand the mathematical essence of ``intelligence". However, in the past more than 70 years, the studies of computational theory and information theory are largely separated. Consequently, we have no idea on how the two keys of computation and information open the window for us to capture the concept of intelligence. The structural information theory, as a theory of the merging of computation and information, opens such a window.

In the present paper, we propose the model of {\it structural information learning machinery}, written SiLeM. Our structural information learning machines assume that {\it observing} is the basis of learning, that laws or rules are embedded in a noisy system of observed dataset in which each element usually consists of a {\it syntax}, a {\it semantics} and {\it noises}. Our machines learn the laws or rules of real world by {\it observing} the datasets, by using the {\it principle of maximization of information gain} to {\it connect} the datasets and to build a {\it data space}, by a {\it general encoding tree method} to {\it decode} (using the {\it structural entropy minimization principle}) the structural information of a data space to find the {\it decoder} or {\it essential structure} of the data space, by using the semantics of data points to interpret the essential structure or decoder of a data space to build a {\it knowledge tree} of the data space and to {\it unify both syntax and semantics} of the data space, solving the problem of {\it interpretability} of learning, by using {\it remarkable common features} of functional modules to {\it abstract} the decoder or knowledge tree to establish a {\it tree of abstractions}, by using the tree of abstractions in the encoding and optimizing when new data points are observed to realize both {\it intuitive reasoning} and {\it logical reasoning} simultaneously. Our learning model shows that learning from observing is possible, that laws or rules exist in the relationships among the data points observed, that the combination of both syntax and semantics is the principle for solving the interpretability problem of learning, that simultaneously realizing both logical reasoning and intuitive reasoning is possible in learning, that the mathematical essence of learning is to gain information, and maximization of information gain is the principle for learning algorithms that are completely free of hand-made choice of parameters. Our model shows that computing is part of learning. However, computing and learning are mathematically different concepts.

We organize the paper as follows. 
In Section \ref{sec:challenges}, we introduce the challenges of the current machine learning. 
In Section \ref{sec:overview}, we introduce the overview of our structural information learning machines.
In Section \ref{sec:structuralentropy}, we introduce the concepts of structural entropy of graphs \cite{LP2016a}, and prove some new results about the equivalent definitions of the structural entropy.
In Section \ref{sec:extension-Shannon}, we show that the structural entropy is a natural extension of the Shannon entropy from unstructured probability distribution to structured systems, and prove a general lower bound of structural entropy which will be useful for us to understand the present structural information learning machines.
In Section \ref{sec:structuredocoding}, we introduce the concepts of compressing information and decoding information of graphs, establish a graph compressing/decoding principle, and establish an upper bound of the compressing information of graphs.
In Section \ref{sec:decoder}, we introduce the concepts of decoder, knowledge tree and rule abstraction, and establish a structural information principle for clustering and for unsupervised learning.
In Section \ref{sec:observing-connecting-associating}, we establish the structural information principle for connecting and associating data when new dataset is observed.
 In Section \ref{sec:logical-intuitive-reasoning}, we introduce the definition and algorithms for both logical and intuitive reasonings of our learning model. In Section \ref{sec:SiLeM}, we introduce the system of structural information learning machinery. In Section \ref{sec:encoding-tree-method}, we introduce the encoding tree method as a general method for designing algorithms of the structural information learning machinery. In Section \ref{sec:limits-of-SiLeM}, we introduce the limitations of our structural information learning machinery. In Section \ref{sec:con}, we summarize the contributions of the structural information learning machinery, and introduce some potential breakthroughs of the machinery.

\section{The Challenges of Learning and Intelligence}\label{sec:challenges}

Mathematical understanding of learning has become a grand challenge in the foundations of both current and future artificial intelligence.

Statistical learning is a branch with successful theory. Overall, statistical learning is a learning of the approach of the combination of computation and statistics. Statistics provides the principle for statistical results. Computation has two fundamental characters: one is locality, another is structural property. Consider a procedure of aTuring machine, at any time step in the procedure, the machine focuses only on a few states, symbols, and cells on the working tape. This is the character of locality. In addition, due to the fact that algorithms are always closely related to data structure (since, otherwise, the objects are statistical, instead of computational), computation has its second character, structural property. Of course, the approach of the combination of computation and statistics is very successful in both theory and applications. However, nevertheless, statistical learning does not really tell us what is exactly the mathematical essence of learning.

For deep learning, as commented in \cite{LBH2015}:``Unsupervised learning had a catalytic effect in reviving interest in deep learning, but has since been overshadowed by the successes of purely supervised learning. $\cdots$ Human and animal learning is largely unsupervised: we discover the structure of the world by observing it, not by being told the name of every object." 

Both supervised and unsupervised learning have been very successful in many real world applications. However, we have to recognize that we still do not know what is exactly the mathematical essence of learning and intelligence.

In particular, are there machines that learn by observing the real world similar to human learning?
Is there a mathematical definition of learning, similar to the mathematical definition of computing given by Turing \cite{T1936}? What are the fundamental differences between learning and computing? Are intelligences really just function approximations? What are the fundamental differences between learning and intelligence, between learning and computing, between learning and information, and between information and intelligence?

The current learning theory is built based on function approximations.
Functions are essentially mathematical objects, which are defined by mathematical systems. Due to this fact, mathematical functions usually have only syntax, do not have semantics, and do not have noises. If learning or intelligence were just function approximation, then we would learn only mathematics. However, human learns mathematics, physics, chemistry, biology and so on. As a matter of fact, human learns from the real world and learns the laws of the nature. Human learns the laws of the nature principally based on observing, connecting data, associating, computing, interpreting and reasoning, including both logical reasoning and intuitive reasoning. When human beings learn, people use eyes to see, use brain to reason, use hand to calculate, and use mouth to speak aloud etc. When human beings learn, intuitive reasoning is equally important to logical reasoning, if it is not more important. Logical reasoning is actually a type of computation. Thinking of a Turing machine, we note that computation is locally performed, in the sense that, during the procedure of a computation, the machine focuses only on the head of the machine, which points to a cell and moves either to the left or to the right one more cell in a working tape. Computation is certainly a factor of learning. However, human learning includes both computation and intuitive reasoning, where intuitive reasoning is a reasoning by using the laws and knowledges one has already learnt. This argument shows, it is not the case that learning is another type of computing. Intuitively speaking, computation is a mathematical concept, dealing with only mathematical objects, that is, computable functions or computing devices, but learning is a concept dealing with real world objects.

What are the differences between mathematical objects and real world objects?
Mathematical objects largely consist of only syntax. However, real world objects certainly consist of syntax, semantics and noises. Human beings learn different objects, which have different semantics. For instance, the subjects such as mathematics, physics and chemistry etc are different due to the fact that they have different semantics. However, the mathematical essence of the learning of these different subjects could be still the same. If so, this would lead to a mathematical definition of the concept of ``learning".  What is the mathematical definition of ``learning"?

Computer science has been experiencing a big change from the 20th century to the 21st century. In the 20th century, computer science is largely proven to be useful. However, in the 21st century, computer has been proven to be useful everywhere. This changes the universe of computer science from ``mathematics and computing devices" to the ``real world". Computing the real world is roughly stated as ``information processing" from the datasets observed from real world. 

However, there is no a mathematical theory that supports the mission of information processing.
To understand the concept of ``information processing", we look at the information theory. Shannon's information theory perfectly supports the point to point communication. However, it fails to support the analysis of communication networks, as noticed by Shannon himself \cite{S1953}. Apparently, Shannon's information theory fails to support the current information processing practice of computer science.  
Shannon's metric defines entropy as the amount of uncertainty of a random variable, and mutual information as the amount of uncertainty of a random variable, $X$ say, that is eliminated by knowing another random variable, $Y$ say. This means that ``information" is the amount of uncertainty that has been eliminated. However, Shannon's theory deals with only random variables or probability distributions. In addition, although Shannon defined the concept of  ``information" as the amount of uncertainty that has been eliminated, Shannon did not say anything about: Where does information exist? How do we generate information? How do we decode information?

In the 20th century, the studies of computation and information were largely separated, developed in computer science and communication engineering, respectively. The argument above indicates that there is a need of study of the combination of computation and information. In fact, the current society is basically supported by several massive systems each of which consists of a large number of computing devices and communication devices, which calls for a supporting theory in the intersection of computational theory and information theory. Brooks 2003 \cite{B2003} explicitly proposed the question of ``quantification of structural information". In the same paper, Brooks commented that ``this missing metric to be the most fundamental gap in the theoretical underpinnings of information science and of computer science".

The author and his co-author \cite{LP2016a} introduced the notion of encoding tree of graphs as a lossless encoding of graphs, defined the first metric of information that is embedded in a graph, and established the fundamental theory of structural information. 
The structural entropy of a graph is defined as the {\it intrinsic information hidden in the graph} that cannot be decoded by any encoding tree or any lossless encoding of the graph. 
The structural information theory is a new theory, representing the merging of the concepts of computation and information. It allows us to combine the fundamental ideas from both coding theory and optimization theory to develop new theories. More importantly, the new theory points to some fundamental problems in the current new phenomena such as massive data analysis, information theoretical understanding of learning and intelligence.

It is not hard to see that both computation and information, and the combination of the two concepts are the keys to better understand the mathematical essence of learning and intelligence. The separation of the studies of computation and information in the past more than 70 years has hindered the theoretical progress on both learning and intelligence. Structural information theory provides a new chance.

\section{Overview of Structural Information Learning Machines}\label{sec:overview}

In the present paper, we will build a new learning model, namely, the structural information learning machinery. Our model is built based on our structural information theory \cite{LP2016a}. Our learning model is a mathematical model that exactly reflects the merging of computation and information.
Our theory of information theoretical definition of learning here provides new approaches to potential breakthroughs in a wide range of machine learning and artificial intelligence.

The machines of model SiLeM learn the laws or rules of nature by observing the data of the real world. The mathematical essences of SiLeM  are: (1) the essence of learning is {\it to gain information}, (2) to gain information is {\it to eliminate uncertainty}, and (3) according to the principle of structural information theory, to eliminate uncertainty of a data space can be reduced to an optimization problem, that is, an {\it information optimization problem}, by a general {\it encoding tree method}. A SiLeM machine observes the data points of real world, builds the {\it connections} among the observed data, constructs a {\it data space} (for which the principle is to choose the way of connections of data points so that {\it the information gain} from the data space is maximized), finds the {\it encoding tree} of the data space that minimizes the uncertainty of the data space, in which the encoding tree is also referred to as a {\it decoder} due to the fact that it eliminates the maximum amount of uncertainty embedded in the data space, interprets the {\it semantics} of the decoder, an encoding tree, to form a {\it knowledge tree}, extracts the {\it laws or rules} of both the decoder and the knowledge tree. The decoder and knowledge tree of a graph determines a {\it tree of abstractions} which defines the concept of hierarchical abstracting and provides the foundation for {\it intuitive reasoning} in learning. When new dataset are observed, a SiLeM machine updates the decoder, i.e., an encoding tree, by using the tree of abstractions extracted from the decoder and knowledge tree found from the previous data space.

Our SiLeM machines assume that a data point representing a real world object usually consists of a syntax, a semantics and a noise, that the laws or rules of the real world objects are embedded in a noisy data space, that the functional semantics of the data space must be supported by an {\it essential structure} of the data space, and that the essential structure of a data space is the encoding tree of the data space that minimizes the uncertainty left in the data space, or maximumly eliminates the uncertainty embedded in the data space.

A SiLeM machine realizes the mechanism of {\it associating} through linking data to existing data apace and to {\it established knowledge and laws} in the tree of abstractions, a procedure highly similar to human learning, realizes the unification of syntactic and semantical interpretations, solving the problem of interpretability of learning, and more importantly, simultaneously realizes both {\it logical reasoning} (that is, the local reasoning of computation and optimization) and {\it intuitive reasoning} (that is, the global reasoning by using laws and knowledge learnt previously).

The mathematical principle behind the procedure of SiLeM machines is to realize the {\it maximum gain of information}, by linking data points to existing dataset in a way such that the constructed data space contains the maximum amount of {\it decodable information}, the amount of uncertainty that can be eliminated by an encoding tree, or by a lossless encoder, instead of the information hidden in the data space eventually and forever, and by
maximumly eliminating the uncertainty embedded in the data space that is realized by using an {\it information optimization}, which is efficiently achievable by an {\it encoding tree method}.
Our structural information learning machines explore that the essence of learning is {\it to gain information} from the datasets observed, together with the relationships among the data points, that to gain information is {\it to eliminate uncertainty}, and more importantly, to eliminate uncertainty can be reduced to an {\it information optimization problem}, which can be efficiently realized by a general encoding tree method.

%\section{Entropy vs Information}

\section{Structural Entropy of Graphs}\label{sec:structuralentropy}

To develop our information theoretical model of learning, we recall the notion of structural entropy of graphs \cite{LP2016a}.

To define the structural entropy of a graph, we need to encode a graph. It has been a long-standing open question to build a lossless encoding of a graph. In graph theory, there are several encodings of graphs, each of which encodes a graph by assigning high-dimensional vectors to the vertices of the graph. In doing so, operations in graphs can be reduced to operations in vector spaces. However, such encodings usually distort the structure of the graph, due to the fact that the operations of vectors do not exactly reflect the operations in the corresponding graphs. 

Our idea is to encode a graph by a tree. Trees are the simplest graphs in some sense. Why do we use trees to encode a graph? There is no mathematical proof for this. However, we have reasons as follows.

 Suppose that $G$ is a graph observed in the real world. Then $G$ represents the syntactical system of many objects together with the relationships among the objects. In addition, there is a semantics that is associated with, but outside of the system $G$. The semantics of $G$ is the knowledge of system $G$. The knowledge of $G$ is typically a structure of the form of functional modules of system $G$. In this case, the knowledge of system $G$ is a structure of functional modules associated with $G$. What is the structure of the knowledge, or functional modules or semantics of a system $G$?

To answer the questions, we propose the following {\bf hypothesis}:

\begin{enumerate}
\item [(1)] The semantics of a system,  representing the functional modules or roles of the system, has a {\it hierarchical structure}. 

This hypothesis reflects the nature of human understanding for a complex system consisting of many bodies together with the relationships among the many bodies. It is true that given a complex system consisting of a huge number of real world objects together with their relationships, people can only understand it by identifying the functional modules of the complex system by a tree-like structure or by a hierarchical structure. The hierarchical structure of functional modules gives us a hierarchical or tree-like abstractions of the system. We understand a complex system by a high-level abstractions. This means that humans understand the functional modules of a complex system by a hierarchical structure, or by a tree-like structure.

In addition, we assume that human organizes knowledges as a tree structure, and hence that human knowledges have a tree structure.

\item [(2)] The semantics of a system has a supporting syntax, referred to as {\it essential structure} of the system.

This means that semantics certainly has a supporting syntax structure.

\item [(3)] According to (1) and (2) above, the essential structure (syntax) of a system $G$ has a hierarchical structure.

Because the semantics of a system has a tree structure, the supporting syntax must have a tree structure. This supporting tree structure is called the essential structure of the system.

\end{enumerate}

The hierarchical hypothesis implies that the essential structure, that is, the supporting syntax of a complex system $G$ is a tree. This suggests us to encode a complex system by trees. 

Furthermore, we notice that:

\begin{enumerate}

\item [(i)] From the point of view of human understanding of knowledges, human understands complex systems by a functional modules of high-level abstractions.

\item [(ii)] From the point of view of computer science, trees are efficient data structures, representing systems of many objects, and simultaneously allowing highly efficient algorithms.

\item [(iii)] From the point of view of information theory, trees provide the fundamental properties needed for encoding, see the {\it Encoding Tree Lemma} in Lemma \ref{lem:codeword} below.

\end{enumerate}

Nevertheless, in \cite{LP2016a}, we encoded graphs by trees. Specifically, we used the priority tree defined below to encode a complex system.

\subsection{Priority tree}

\begin{definition} (Priority tree) A priority tree is a rooted tree $T$ with the following properties:

\begin{enumerate}
\item [(i)] The root node is the empty string, written $\lambda$.

A node in $T$ is expressed by the string of the labels of the edges from the root to the node.
We also use $T$ to denote the set of the strings of the nodes in $T$.

\item [(ii)] Every non-leaf node $\alpha$ in $T$ has $k\geq 2$ children for some natural number $k$ (depending on $\alpha$) for which the edges from $\alpha$ to its children, or referred to as immediate successors, are labelled by:

$$0<_{\rm L}1<_{\rm L}\cdots <_{\rm L}k-1,$$
\noindent where $x<_{\rm L}y$ denotes that $x$ is to the left of $y$.

({\it Remark: (i)
Unlike Huffman codes \cite{H1952}, we use an alphabet of the form $\Sigma=\{0,1,\cdots, k-1\}$ for some natural number $k$, for each non-leaf tree node $\alpha$. In the Huffman codes, we always use the alphabet $\Sigma=\{0,1\}$. For measuring the number of bits used in an encoding, we usually use binary trees. However, for our purpose, there is no reason to prevent us from using general alphabet $\Sigma=\{0,1,\cdots,k\}$. We are interested in trees in general, instead of binary trees only.

(ii) Different non-leaf nodes in $T$ may have different numbers of immediate successors (or simply, called children), i.e., different $\alpha$'s may have different $k$'s.)}

\item [(iii)] Every tree node $\alpha$ is hence a string of numbers from $0$ to some natural number, $K$ say.

\end{enumerate}

\end{definition}

For two tree nodes $\alpha, \beta$, if $\alpha$ is an initial segment of $\beta$ as string, then we write $\alpha\subseteq\beta$. If $\alpha\subseteq\beta$ and $\alpha\not=\beta$, we write $\alpha\subset\beta$.

(Remark: The motivation of the use of priority tree above is to leave a room for us to develop an encoding tree method, in which order plays a role.)

\subsection{Encoding tree of a graph}

\begin{definition} (Encoding tree of a graph) \label{def:encoding-tree} Let $G=(V,E)$ be a graph. An encoding tree of $G$ is a priority tree $T$ such that for every tree node $\alpha\in T$, there is an associated non-empty subset $T_{\alpha}$ of the vertices $V$ satisfying the following properties:

\begin{enumerate}
\item [(i)] The root node $\lambda$ is associated with the whole set $V$ of vertices of $G$, that is, $T_{\lambda}=V$.
\item [(ii)] For every node $\alpha\in T$, if $\beta_1,\beta_2,\cdots,\beta_k$ are all the children of $\alpha$, then $\{T_{\beta_1},\cdots, T_{\beta_k}\}$ is a partition of $T_{\alpha}$.
\item [(iii)] For every leaf node $\gamma\in T$, $T_{\gamma}$ is a singleton.

\end{enumerate}

\end{definition}

\begin{definition} (Codeword) Let $G=(V,E)$ be a graph, and $T$ be an encoding tree of $G$.

\begin{enumerate}
\item [(i)] For every node $\alpha\in T$, we call $\alpha$ the codeword of set $T_{\alpha}$, and $T_{\alpha}$ the marker of $\alpha$.
\item [(ii)] For a leaf node $\gamma\in T$, if $T_{\gamma}=\{v\}$ for some vertex $v\in V$, then we say that $\gamma$ is the codeword of $v$, and $v$ is the marker of $\gamma$.
\end{enumerate}

\end{definition}

\begin{lemma} \label{lem:codeword} Let $G=(V,E)$ be a graph and $T$ be an encoding tree of $G$. Then:

\begin{enumerate}
\item [(1)] For every leaf node $\gamma\in T$, there is a unique vertex $v$ such that $\gamma$ is the codeword of $v$ and $v$ is the marker of $\gamma$.
\item [(2)] For every vertex $v\in V$, there is a unique leaf node $\gamma\in T$ such that $v$ is the marker of $\gamma$ and $\gamma$ is the codeword of $v$.

\end{enumerate}

\end{lemma}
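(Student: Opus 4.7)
The plan is to treat both parts as direct consequences of unpacking Definition \ref{def:encoding-tree}. My overall strategy will be to show that the collection $\{T_\gamma : \gamma \text{ is a leaf of } T\}$ is a partition of $V$ into singletons, from which both claims follow immediately.

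For part (1), I will simply invoke property (iii) of Definition \ref{def:encoding-tree}: for any leaf $\gamma$, the marker $T_\gamma$ is a singleton $\{v\}$, and this $v$ is manifestly unique because a singleton has exactly one element. The definitions of codeword and marker then yield the claim verbatim.

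For part (2), I will construct the required leaf by descending from the root. Given a vertex $v \in V$, I will trace a path $\lambda = \alpha_0, \alpha_1, \alpha_2, \ldots$ in $T$ as follows. By property (i), $v \in T_{\alpha_0} = V$. Inductively, once $v \in T_{\alpha_i}$ has been established and $\alpha_i$ is not a leaf, property (ii) says the markers of the children of $\alpha_i$ form a partition of $T_{\alpha_i}$; hence there exists exactly one child, call it $\alpha_{i+1}$, with $v \in T_{\alpha_{i+1}}$. The descent terminates at some leaf $\gamma$, at which point property (iii) forces $T_\gamma = \{v\}$, yielding existence. Uniqueness of the leaf follows because the descent path is pinned down at each step by the disjointness of the partition: any leaf $\gamma'$ with $v \in T_{\gamma'}$ must agree with $\gamma$ at every level, hence equals $\gamma$.

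The only mild subtlety I anticipate is the termination of the descent. For the finite graphs considered here this is immediate, since property (iii) combined with the partition property forces the tree $T$ to be finite whenever $V$ is finite. If one wished to allow more general settings, a well-foundedness argument using the strictly shrinking chain $T_{\alpha_0} \supsetneq T_{\alpha_1} \supsetneq \cdots$ (each containment proper by property (ii), which requires $k \geq 2$ children and a non-trivial partition) would suffice. I do not expect any other obstacles; the proof is essentially a formal verification that the three defining properties of an encoding tree conspire to produce a bijection between leaves of $T$ and vertices of $G$.
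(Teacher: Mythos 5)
Your proof is correct and follows the same route as the paper, which simply cites Definition \ref{def:encoding-tree}; you have merely unpacked that citation into an explicit descent argument establishing the bijection between leaves and vertices. The extra care about termination and uniqueness of the descent path is sound but not needed beyond what the definition already guarantees for finite $V$.
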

\begin{proof} By the definition of encoding tree in Definition \ref{def:encoding-tree}.
\end{proof}

By Lemma \ref{lem:codeword}, the set of all the leaves in $T$ is the set of codewords of the vertices $V$. Clearly, we have that an encoding tree $T$ of $G$ is a lossless encoding of $G$. 

More importantly, an encoding tree $T$ of a graph $G$ satisfies the following:

\begin{lemma}\label{lem:coding tree property} (Encoding tree lemma) Given a graph $G$ and an encoding tree $T$ of $G$, the following properties hold:

\begin{enumerate}
\item [(1)] For every node $\alpha\in T$, the marker $T_{\alpha}$ of $\alpha$ is explicitly determined. This means that if we know $\alpha$, then we have already known the marker $T_{\alpha}$, i.e., there is no uncertainty in $T_{\alpha}$ once we know the codeword $\alpha\in T$.

    \item [(2)] For every vertex $x$ in $V$, suppose that we know the codeword of $x$ to be $\alpha$, then we simultaneously know the marker $T_{\beta}$ for all $\beta\subseteq\alpha$, that is, once we know $\alpha$, we know the path from the root $\lambda$ to $\alpha$, hence the markers associated on the path.

        \item [(3)] Let $x$ and $y$ be two vertices. Suppose that $\alpha$ and $\beta$ are the codewords of $x$ and $y$ in $T$, respectively. Let $\gamma$ be the longest $\delta$ such that both $\delta\subseteq\alpha$ and $\delta\subseteq\beta$ hold, that is, $\gamma$ is the longest common initial segment of $\alpha$ and $\beta$. Suppose that we know the codeword $\alpha$ of $x$, and know $y$. But we don't know the codeword $\beta$ of $y$. Then:

            \begin{enumerate}
            \item To {\it determine} (or define) the codeword $\beta$ of $y$ under the condition that we have already known the codeword $\alpha$ of $x$, we only need to determine the path from $\gamma$ to the unknown $\beta$ in the encoding tree $T$.
            \item To {\it describe} the codeword $\beta$ of $y$ in $T$, we must write down $\beta$, even if we know the codeword $\alpha$ of $x$.

            (This means that under the condition of knowing $\alpha$, to determine the codeword $\beta$ of $y$ is different from to describe the codeword $\beta$ of $y$, even if we have already known the codeword $\alpha$ of $x$.)

            \end{enumerate}

\end{enumerate}
\end{lemma}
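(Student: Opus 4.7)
The plan is to prove all three parts directly from Definitions \ref{def:encoding-tree} and the preceding definition of priority tree, since the encoding tree lemma is essentially a structural restatement of properties already built into those notions. Throughout, I would exploit the fact that every node of $T$ is, by definition, the string of labels read along the unique path from the root $\lambda$ to that node, so that as sets, ``nodes of $T$'' and ``certain finite strings over $\{0,1,\ldots,K\}$'' are in canonical bijection.

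For Part (1), I would argue by induction on the length $|\alpha|$ of the string $\alpha \in T$. In the base case, $\alpha = \lambda$ and $T_\lambda = V$ by clause (i) of Definition \ref{def:encoding-tree}. For the inductive step, write $\alpha = \beta i$ where $\beta$ is the parent of $\alpha$ in $T$ and $i$ is the label of the edge from $\beta$ to $\alpha$. By induction $T_\beta$ is determined by $\beta$, and by clause (ii) the children of $\beta$ induce a fixed partition $\{T_{\beta 0}, T_{\beta 1}, \ldots, T_{\beta (k-1)}\}$ of $T_\beta$; hence $T_\alpha = T_{\beta i}$ is determined by $\alpha$. For Part (2), I would then observe that the set of $\beta$ with $\beta \subseteq \alpha$ is precisely the set of prefixes of the string $\alpha$, which are manifestly recoverable from $\alpha$ alone; applying Part (1) to each such prefix $\beta$ yields the associated marker $T_\beta$, and the whole root-to-$\alpha$ path is thereby known.

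For Part (3), I would first note that the longest common initial segment $\gamma$ of $\alpha$ and $\beta$ is precisely the deepest common ancestor of the two leaves $\alpha$ and $\beta$ in $T$, since $T$ is a tree and its node-strings are closed under taking prefixes. Under the hypothesis that $\alpha$ is known and the vertex $y$ is known but $\beta$ is not, Part (2) already gives us the entire initial segment $\gamma$ of $\beta$ together with the marker $T_\gamma$. Because the immediate successors of $\gamma$ induce a partition of $T_\gamma$, the unique child $\gamma'$ of $\gamma$ with $y \in T_{\gamma'}$ is identified by testing membership; iterating this descent yields the unique leaf whose marker is $\{y\}$, which traces out exactly the path from $\gamma$ down to $\beta$. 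Hence to \emph{determine} $\beta$ conditional on $\alpha$, the only new information needed is the segment of $\beta$ below $\gamma$; but to \emph{describe} $\beta$ as a syntactic object, one must spell out its full address from $\lambda$, including the prefix $\gamma$ shared with $\alpha$, since $\beta$ \emph{is} that full string of edge-labels.

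The main subtlety is not combinatorial but conceptual: Parts (1) and (2) are essentially unfoldings of the definition, while Part (3) draws an information-theoretic distinction between ``determining'' $\beta$ (how much extra input is needed on top of $\alpha$) and ``describing'' $\beta$ (how many symbols the name of the node contains). The delicate step in the write-up is therefore to articulate this distinction clearly enough that Part (3) says something substantive, namely that the conditional uncertainty of $\beta$ given $\alpha$ is localized to the subtree rooted at $\gamma$, which is precisely the property the subsequent development of structural entropy will need.
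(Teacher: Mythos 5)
Your proposal is correct and follows essentially the same route as the paper's own proof: part (1) is an unfolding of Definition \ref{def:encoding-tree} (your induction on $|\alpha|$ just makes this explicit), part (2) rests on the uniqueness of the root-to-$\alpha$ path, and part (3) locates $\gamma$ by scanning the prefixes of $\alpha$ for membership of $y$ and then observes that the residual uncertainty lives entirely in the branch below $\gamma$. The only difference is that you spell out the details the paper leaves implicit, including the determine-versus-describe distinction, which the paper states but does not elaborate inside the proof.
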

\begin{proof}
For (1). By the definition of encoding tree $T$, whenever we define a tree node $\alpha\in T$, a subset $T_{\alpha}$ is explicitly defined.

For (2). Given a tree node $\alpha$, we have already known the path from the root node $\lambda$ to $\alpha$, because the path is unique. This implies that for every tree node $\delta$ on the path between $\lambda$ and $\alpha$, we have already known the associated marker $T_{\delta}$.

For (3). To find $\gamma$, we only need to find the find the longest $\delta$ from $\alpha$ along the path from $\alpha$ to the root node $\lambda$ such that $y\in T_{\delta}$. Then we know that $\beta$ must be some leaf node in a subtree of $T$ with root $\gamma$.

Therefore, the uncertainty to determine $\beta$ under the condition of knowing $\alpha$ and $y$ occurs only in a branch from $\gamma$ to some leaf node in $T$. 

\end{proof}

The advantage of the encoding tree is captured by the encoding tree properties in Lemma \ref{lem:coding tree property}. The key to our definition of structural entropy is to use the encoding tree properties in Lemma \ref{lem:coding tree property} to reduce the uncertainty for determining the codeword of the vertex that is accessible from random walk with stationary distribution in the graph under the condition that the codeword of the starting vertex of the random walk is known.

Lemma \ref{lem:coding tree property} (3) holds for any pair $(x,y)$ of vertices. However, in our definition of structural entropy, we will only use this property for the pairs $(x,y)$ with edges between the two endpoints. Because the structural entropy measures the information of random walks in a graph, that is, the dynamical information embedded in the graph.
For two vertices $x$ and $y$, if there is an edge from $x$ to $y$, then Lemma \ref{lem:coding tree property} (3)
indicates that to determine the codeword of the vertex, $y$ say, accessible from random walk when we know the codeword of the starting vertex, $x$ say, of the random walk is different from describing the codeword of the vertex $y$ accessible from random walk even if we have already known the codeword of the vertex $x$ at which the random walk starts. It is because of the difference between determining (defining) the codeword and describing (writing down) the codeword of the vertex accessible from random walk under the condition of the known codeword of the starting vertex of the random walk, our definition of structural entropy becomes different from the Shannon entropy. Therefore, Lemma \ref{lem:coding tree property} plays a crucial role in the definition of our metric of the structural entropy of a graph.

\subsection{Structural entropy of a graph given by an encoding tree}

Li and Pan \cite{LP2016a} introduced the notion of structural entropy of a graph.

\begin{definition} \label{def:structural entropy-T-O} (Structural entropy of a graph by an encoding tree, Li and Pan \cite{LP2016a}) Let $G=(V,E)$ be a graph, and $T$ be an encoding tree of $G$.
We define the structural entropy of $G$ with respect to encoding tree $T$ as follows:

\begin{equation}\label{eqn:H-T-O}
\mathcal{H}^{T}(G)=-\sum\limits_{\substack{\alpha\in T\\ \alpha\not=\lambda}}\frac{g_{\alpha}}{{\rm vol}(G)}\cdot \log_2\frac{{\rm vol}(\alpha)}{{\rm vol}(\alpha^{-})},
\end{equation}
where $g_{\alpha}=|E(\bar{T_{\alpha}}, T_{\alpha})|$, that is, the number of edges from the complement of $T_{\alpha}$, i.e., $\bar{T_{\alpha}}$, to $T_{\alpha}$, ${\rm vol}(G)$ is the volume of $G$, that is, the total degree of vertices in $G$, ${\rm vol}(\beta)$ is the volume of the vertices set $T_{\beta}$, and $\alpha^{-}$ is the parent node of $\alpha$ in $T$.

\end{definition}

%\subsection{The intuition of $\mathcal{H}^{T}(G)$}

%In this part, we establish a result 

To understand Equation (\ref{eqn:H-T-O}), we observe the following properties of the metric $\mathcal{H}^T(G)$:

\begin{enumerate}
\item [(1)] For every tree node $\alpha\in T$, $T_{\alpha}$ is the set of vertices associated with $\alpha$.
By Lemma \ref{lem:coding tree property} (1), once we know $\alpha$, we have already known the set $T_{\alpha}$.

\item [(2)] Suppose that we know tree node $\alpha$, then by Lemma \ref{lem:coding tree property} (2), we have already known all $\delta\subseteq\alpha$, meaning that $\delta$ is an initial segment of $\alpha$ as string.

\item [(3)] For each node $\alpha\in T$ with $\alpha\not=\lambda$, since $\alpha^{-}$ is the parent node of $\alpha$ in $T$, the probability that the vertex $v\in V$ from random walk with stationary distribution in $G$ is in $T_{\alpha}$ under the condition that $v\in T_{\alpha^{-}}$ is $\frac{{\rm vol}(\alpha)}{{\rm vol}(\alpha^{-})}$. Therefore the entropy (or uncertainty) of $v\in T_{\alpha}$ under the condition that $v\in T_{\alpha^{-}}$ is $-\log_2\frac{{\rm vol}(\alpha)}{{\rm vol}(\alpha^{-})}$.

\item [(4)] For every node $\alpha\in T$, $g_{\alpha}$ is the number of edges that random walk with stationary distribution arrives at $T_{\alpha}$ from vertices $\bar{T_{\alpha}}$, the vertices outside of $T_{\alpha}$. Therefore, the probability that a random walk with stationary distribution is from outside of $T_{\alpha}$ to vertex in $T_{\alpha}$ is $\frac{g_{\alpha}}{{\rm vol} (G)}$.

\end{enumerate}

Intuitively, $\mathcal{H}^T(G)$ is the amount of information required to determine the codeword of the vertex accessible from random walk with stationary distribution under the condition that the codeword of the starting vertex of the random walk is known. This intuition can be strictly proven. For this, we introduce an equivalent form of the structural entropy of a graph with respect to an encoding tree.

Suppose that $G=(V,E)$ is an undirected connected graph, and $T$ is an encoding tree of $G$.

 Consider a step of random walk with stationary distribution in $G$. Let $X$ and $Y$ be the random variables representing the codewords of the starting vertex $x$ and the arrival vertex $y$, respectively, of the random walk.

 Let $\alpha$ and $\beta$ be the codewords of $x$ and $y$, respectively. We consider the entropy of $\beta$ when we know $\alpha$. We denote this entropy by:

\begin{equation}\label{eqn:random walk}
\widetilde{H}(Y=\beta |X=\alpha).
\end{equation}

 Notice that the codeword $\alpha$ is a leaf node in $T$. By Lemma \ref{lem:coding tree property}, we know $T_{\delta}$ for all the nodes $\delta\subseteq\alpha$, i.e., the initial segments of $\alpha$ as strings.

Let $\gamma$ be the longest node $\delta\in T$ with $\delta\subseteq\alpha$ such that $y\in T_{\delta}$ holds. Then we know that $\gamma$ is an initial segment of the codeword $\beta$ of $y$ in $T$. To determine the codeword of $y$ in $T$, we only need to find the branch from $\gamma$ to a leaf node $\beta\in T$ such that $y\in T_{\beta}$. According to the analysis above, the information of $Y=\beta$ under the condition of knowing $X=\alpha$ is:

\begin{equation*}
\widetilde{H}(Y=\beta|X=\alpha)=-\sum\limits_{\substack{\delta\in T\\ \gamma\subset\delta\subseteq\beta}}\log_2\frac{{\rm vol}(\delta)}{{\rm vol}(\delta^{-})},
\end{equation*}
where $\gamma=\alpha\cap\beta$ is the node in $T$ at which $\alpha$ and $\beta$ branch in $T$, or $\gamma$ is the longest common initial segment of $\alpha$ and $\beta$.

Intuitively, $\widetilde{H}(Y=\beta |X=\alpha)$ is the amount of information to determine $\beta$ under the condition that $\alpha$ is known, where $\beta$ is the codeword of the vertex accessible from random walk from the vertex whose codeword is $\alpha$.

We notice that, only if  both $y\in T_{\delta}$ and $x\not\in T_{\delta}$ occur, we need to determine the codeword of $T_{\delta}$ in $T_{\delta^{-}}$, for which the amount of information required is $-\log_2\frac{{\rm vol}(\delta)}{{\rm vol}(\delta^{-})}$. So, intuitively, $\widetilde{H}(Y=\beta|X=\alpha)$ is the amount of information, in terms of the codeword of $T_{\delta}$ in $T_{\delta^{-}}$, required to determine the codeword of $y$ under the condition that the codeword of $x$ is known. Note that we use the codewords of nodes in the encoding tree to measure the amount of information. This is the reason why we use the notation $\widetilde{H}(\cdot)$ to distinguish from the classic conditional entropy notation $H(\cdot)$.

Define

\begin{equation*}
\widetilde{H}^T(G)=\frac{1}{{\rm vol} (G)}\sum\limits_{\substack{e=(x,\ y)\\ x,\ y\in V}}\widetilde{H}(Y=\beta|X=\alpha),
\end{equation*}
where $X$ is the codeword of vertex $x$, and $Y$ is the codeword of vertex $y$, accessible from random walk from $x$.

$\widetilde{H}^T(G)$ is then the average information for determining the codeword of the vertex accessible from random walk under the condition that the codeword of the starting vertex is known.

Our definition of $\mathcal{H}^T(G)$ in Definition \ref{def:structural entropy-T-O} is actually $\widetilde{H}^T(G)$.

\begin{lemma} \label{lem:Structural entropy-I} Let $G=(V,E)$ be a connected simple graph, and $T$ be an encoding tree of $G$. Then

\begin{equation}
\mathcal{H}^T(G)=\widetilde{H}^T(G).
\end{equation}

\end{lemma}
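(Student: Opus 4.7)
The plan is to prove the identity by a Fubini-style interchange of the double summation defining $\widetilde{H}^T(G)$. I would first expand
\begin{equation*}
\widetilde{H}^T(G) \;=\; -\frac{1}{\vol(G)}\sum_{(x,y):\,\{x,y\}\in E}\ \sum_{\substack{\delta\in T\\ \gamma\subset\delta\subseteq\beta}} \log_2 \frac{\vol(\delta)}{\vol(\delta^{-})},
\end{equation*}
where $\alpha,\beta$ are the codewords of $x,y$ and $\gamma$ is their longest common initial segment, and then reorganize so that the outer sum runs over non-root nodes $\delta\in T$ while the inner sum counts the edges that activate the $\delta$-term.

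The key step is the combinatorial translation of the constraint $\gamma\subset\delta\subseteq\beta$ into vertex-membership form. By the nested marker structure from Definition~\ref{def:encoding-tree} together with the Encoding Tree Lemma~\ref{lem:coding tree property}, the relation $\delta\subseteq\beta$ is equivalent to $y\in T_\delta$, while $\gamma\subset\delta$ forces $\delta\not\subseteq\alpha$, i.e., $x\in \bar{T_{\delta}}$, because $\gamma$ is the longest common ancestor of $\alpha$ and $\beta$ in $T$. Hence for each ordered pair $(x,y)$ with $\{x,y\}\in E$, the $\delta$-term fires precisely when the edge crosses the cut induced by $T_\delta$ with $y$ sitting on the inside.

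Interchanging the two summations then yields
\begin{equation*}
\widetilde{H}^T(G) \;=\; -\frac{1}{\vol(G)}\sum_{\substack{\delta\in T\\ \delta\neq\lambda}} N_\delta\, \log_2 \frac{\vol(\delta)}{\vol(\delta^{-})},
\end{equation*}
where $N_\delta$ counts the ordered pairs with $\{x,y\}\in E$, $y\in T_\delta$, and $x\in \bar{T_{\delta}}$. Each undirected cut edge contributes exactly one such ordered pair (the one whose inside endpoint is labelled $y$), so $N_\delta=|E(\bar{T_{\delta}},T_\delta)|=g_\delta$, matching Definition~\ref{def:structural entropy-T-O}. The only real subtlety I anticipate is a bookkeeping one: being explicit about the ordered-versus-unordered convention in the sum $\sum_{e=(x,y)}$ so that the edge-count produces $g_\delta$ rather than $2g_\delta$, consistent with the random-walk interpretation in which a single step from $x$ to $y$ corresponds to traversing an ordered edge with probability $1/\vol(G)$. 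Beyond fixing this convention, the argument is a pure rearrangement of sums, with the dictionary between tree branches in $T$ and vertex cuts in $G$ supplied entirely by Lemma~\ref{lem:coding tree property}.
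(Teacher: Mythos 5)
Your proposal is correct and follows essentially the same route as the paper's own proof: expand $\widetilde{H}^T(G)$ as a double sum over edges and tree nodes, translate the condition $\gamma\subset\delta\subseteq\beta$ into ``$y\in T_{\delta}$ and $x\in\bar{T_{\delta}}$'', and interchange the order of summation so that the inner count becomes $g_{\delta}$. The ordered-pair bookkeeping you flag is exactly the convention the paper uses implicitly (each undirected cut edge contributes once, oriented with its inside endpoint as the arrival vertex $y$), so there is no gap.
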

\begin{proof}
According to the definition of $\widetilde{H}(Y=\beta|X=\alpha)$, for every vertex $x$ and vertex $y$, for which there is an edge from $x$ to $y$, and $x$ and $y$ have codewords $\alpha$ and $\beta$ in $T$, respectively. Let $\gamma=\alpha\cap\beta$, that is, $\gamma$ is the longest initial segment of both $\alpha$ and $\beta$, then for every $\delta$, if $\gamma\subset\delta\subseteq\beta$, then the edge $(x,y)$ is in the cut from $\bar{T_{\delta}}$ to $T_{\delta}$. Therefore the edge from $x$ to $y$ contributes $-\log_2\frac{{\rm vol}(\delta)}{{\rm vol}(\delta^{-})}$ to $\widetilde{H}(Y=\beta|X=\alpha)$.

This ensures that

\begin{eqnarray*}
\widetilde{H}^T(G)&=&\frac{1}{{\rm vol} (G)}\sum\limits_{\substack{e=(x,\ y)\\ x,\ y\in V}}\widetilde{H}(Y=\beta|X=\alpha)\\
&=&-\frac{1}{{\rm vol}\ (G)}\sum\limits_{\substack{e=(x,\ y)\\  c (x)=\alpha,\  c(y)=\beta}}\sum\limits_{\substack{\delta\in T, \gamma=\alpha\cap\beta\\ \gamma\subset\delta\subseteq\beta}}\log_2\frac{{\rm vol}(\delta)}{{\rm vol}\ (\delta^{-})} \\
&=&-\frac{1}{{\rm vol}\ (G)}\sum\limits_{\delta\in T,\ \delta\not=\lambda}\sum\limits_{\substack{e=(x,y)\\ c(x)=\alpha,\ c(y)=\beta\\ \alpha\cap\beta=\gamma\subset\delta}}\log_2\frac{{\rm vol}(\delta)}{{\rm vol}(\delta^-)}\\
&=&-\frac{1}{{\rm vol}\ (G)}\sum\limits_{\delta\in T,\ \delta\not=\lambda}\sum\limits_{\substack{e=(x,y)\\ x\not\in T_{\delta},\ y\in T_{\delta}}}\log_2\frac{{\rm vol}(\delta)}{{\rm vol}(\delta^-)}\\
&=&-\sum\limits_{\alpha\in T,\ \alpha\not=\lambda}\frac{g_{\alpha}}{{\rm vol}(G)}\cdot \log_2\frac{{\rm vol}(\alpha)}{{\rm vol}(\alpha^{-})}\\
&=&\mathcal{H}^T(G),
\end{eqnarray*}
where $c(z)$ is the codeword of $z$ in $T$, $g_{\alpha}=|E(\bar{T_{\alpha}}, T_{\alpha})|$, that is, the number of edges from the complement of $T_{\alpha}$, i.e., $\bar{T_{\alpha}}$, to $T_{\alpha}$, ${\rm vol}(G)$ is the volume of $G$, that is, the total degree of vertices in $G$, ${\rm vol}(\beta)$ is the volume of the vertices set $T_{\beta}$, and $\alpha^{-}$ is the parent node of $\alpha$ in $T$.

\end{proof}

According to Lemma \ref{lem:Structural entropy-I}, $\mathcal{H}^T(G)$ measures the information required to determine the codeword given by $T$ of the vertex in $V$ that is accessible from random walk with stationary distribution in $G$, under the condition that the codeword of the starting vertex of the random walk is known.

Figure 1\footnote{The author would like to express thanks to his ph D student Qifu Hu for helping with the Figures 1, 2, 3, 4, 6.} explicitly explains the intuition of structural entropy of $G$ given in Lemma \ref{lem:Structural entropy-I}.

\begin{figure}
\centering

    \includegraphics[width=0.4\textwidth]{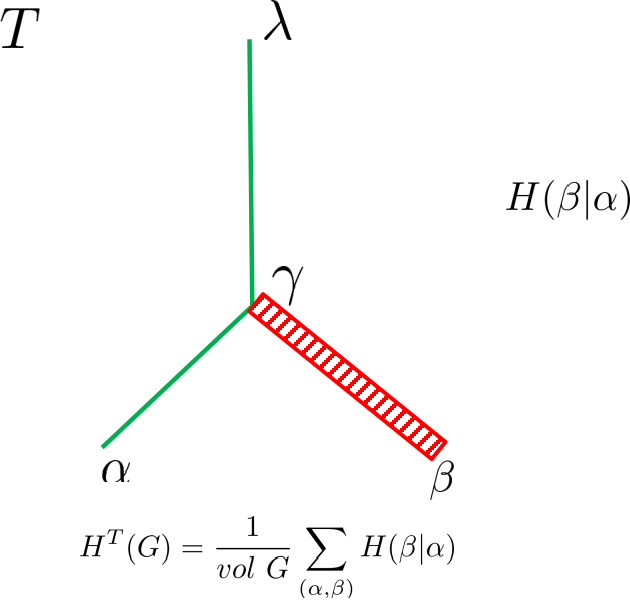}
    \caption{Structural Entropy. The notation $H(\beta |\ \alpha)$ is an abbreviation of $\widetilde{H}(Y=\beta |X=\alpha)$. Figure 1 shows that structural entropy corresponds to the quantity of uncertainty to determine the path from $\gamma$ and $\beta$. According to Figure 1, we notice that highly abstracted concepts such as entropy and information can be explicitly and intuitively represented. This fact itself is already very interesting and useful. }
    %\label{fig:Learning Model}
\end{figure}

In Figure 1, $\alpha$ is the codeword of a vertex $x$, and $\beta$ is the codeword of the vertex $y$ that is accessible from random walk from starting vertex $x$. Since the random walk starts from $x$, we assume that we have already known the codeword $\alpha$ of $x$. The advantage of encoding tree $T$ is that once we know $\alpha$, we know the path from the root node $\lambda$ to $\alpha$, meaning that we know the associated set $T_{\delta}$ for all the tree nodes $\delta$ on the path from $\lambda$ to $\alpha$. Suppose that $y$ is a neighbor of $x$ in $G$. Then we find the longest initial segment $\delta$ of $\alpha$ such that $y\in T_{\delta}$, denoted by $\gamma$. By the choice of $\gamma$, we know that the codeword of $y$ and $\alpha$ branch at $\gamma$. Since we have already known $\alpha$ and hence $\gamma$. To determine the codeword of $y$, we only need to determine the segment from $\gamma$ to the codeword of $y$. This is the amount of information $\widetilde{H}(Y=\beta |X=\alpha)$, where $X$ is the random variable representing the codeword of the starting vertex, and $Y$ represents the codeword of the vertex accessible from random walk. Then by Lemma \ref{lem:Structural entropy-I},
$\mathcal{H}^T(G)$ is the weighted average of all the $\widetilde{H}(Y=\beta |X=\alpha)$ over all the edges of $G$. 

From Figure 1, we know that an optimal encoding tree $T$ should ensure that for every edge $(x,y)$ of $G$, if $\alpha$ and $\beta$ are the codewords of $x$ and $y$, respectively in Figure 1, then there is only a short path between $\gamma$ and $\beta$, and the path is easy to determine, in the sense that, the uncertainty for determining $\beta$ once we know $\gamma$ is small.

[Remark: Principally speaking, 
Lemma \ref{lem:Structural entropy-I} itself could even be developed and extended to a general principle for network communications, in which an optimal encoding tree can be designed as a type of ``oracle" to guide the interactions and communications in massive communication networks. However, this needs a new project to develop.]

\subsection{Structural entropy}

\begin{definition}\label{def:structural-entropy-O} (Structural entropy of a graph, Li and Pan \cite{LP2016a}) Let $G=(V,E)$ be a graph.

\begin{enumerate}
\item [(1)] The structural entropy of $G$ is defined as

\begin{equation}
\mathcal{H}(G)=\min_{T}\{\mathcal{H}^T(G)\},
\end{equation}
where $T$ ranges over all the encoding trees of $G$.

[{\it Remark: Our structural entropy of a graph requires to find an encoding tree $T$ such that the $\mathcal{H}^T(G)$ in Equation (\ref{eqn:H-T-O}) is minimized.
Currently, there is no algorithm achieving the optimum structural entropy. However there are nearly linear time greedy algorithms for approximating the optimum encoding tree, with remarkable applications \cite{LP2016a, LYP2016, L+2018}.
}]

\item [(2)] For natural number $k$, the $k$-dimensional structural entropy of $G$ is defined as

\begin{equation}
\mathcal{H}^k(G)=\min_{T}\{\mathcal{H}^T(G)\},
\end{equation}
where $T$ ranges over all the encoding trees of $G$ of height at most $k$.

[{\it Remark}: This allows us to study the structural entropy of different dimensions. In practice, 2- or 3-dimensional structural information roughly corresponds to objects in the 2- or 3-dimensional space, respectively. ]

\item [(3)] Restricted structural entropy of a graph. For a type of encoding trees $\mathcal{T}$, we define the structural entropy of $G$ with respect to the type $\mathcal{T}$ to be the minimum of $\mathcal{H}^T(G)$ for all the encoding trees of type $\mathcal{T}$, written

\begin{equation}
\mathcal{H}^{\mathcal{T}}(G)=\min_{T}\{\mathcal{H}^T(G)\},
\end{equation}
where $T$ ranges over all the encoding trees in $\mathcal{T}$.
\end{enumerate}

\end{definition}

To better understand Definition \ref{def:structural-entropy-O}, We look at Figures 2 and 3.

\begin{figure}\label{fig:structural information decoding}
\centering

    \includegraphics[width=0.9\textwidth]{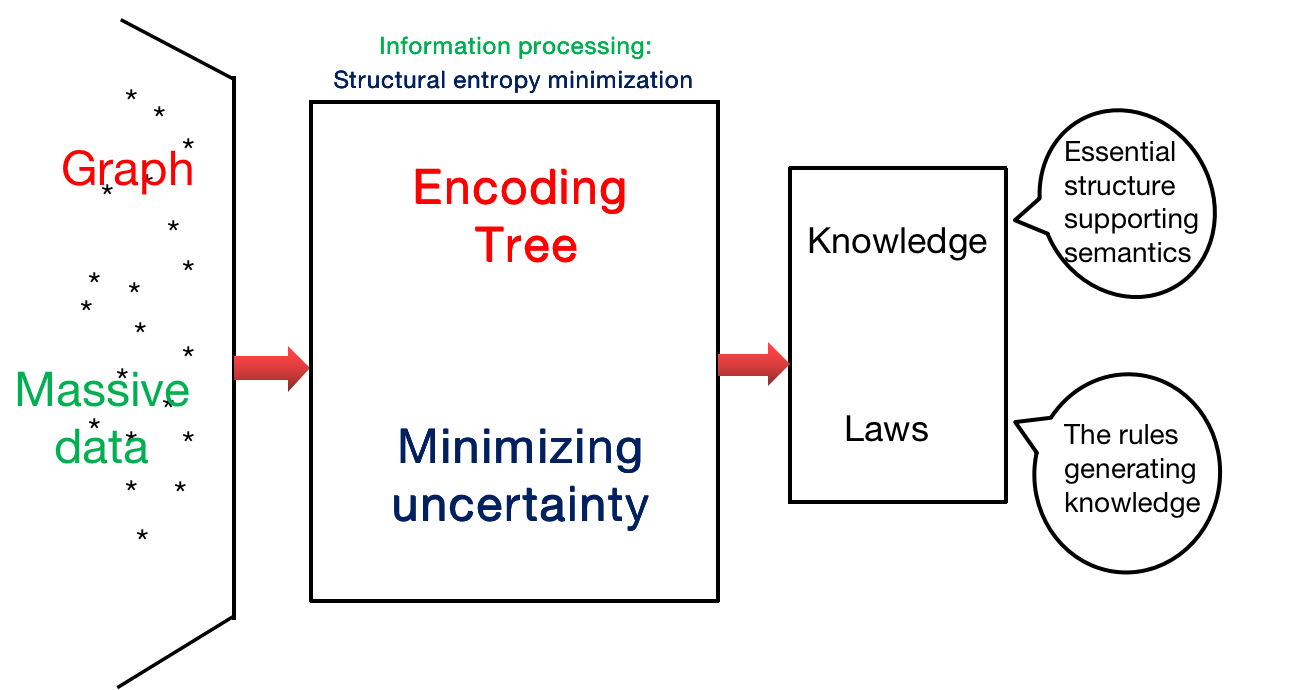}
    \caption{Structural information decoding}
    %\label{fig:Learning Model}
\end{figure}

Figure 2 describes the procedure of the encoding/decoding by finding the encoding tree that minimizes the structural entropy of a graph. 

According to Figure 2, the encoding/decoding of a graph $G$ proceeds as follows:

\begin{enumerate}
\item [(1)] Find an encoding tree $T$ of a given type $\mathcal{T}$ such that the structural entropy $\mathcal{H}^T(G)$ is minimized, or approximately minimized.

\item [(2)] Due to the fact that the structural entropy of $G$ given by $T$ is minimized, the encoding tree $T$ must be the encoding of an essential structure of $G$ that supports a semantics of $G$.

\item [(3)] $T$ is hence such a syntax of $G$ supporting the semantics of $G$. So by interpreting $T$, we are able to find the knowledge of $G$, referred to as a {\it knowledge tree} of $G$, written $KT(G)$.

\item [(4)] Since the encoding tree $T$ found this way is an essential structure of $G$ and $KT(G)$ is the knowledge tree of $G$, from both $T$ and $KT(G)$, we are able to extract the rules that generate $T$ and $KT(G)$. This set of rules is regarded as laws of $G$. 

\item [(5)] Figure 2 shows that structural entropy minimization is a principle for information processing, in which encoding tree is both an encoder and a decoder.

\item [(6)] The most important feature of encoding tree is that encoding trees are lossless encoders of graphs, and that trees are highly efficient data structures, supporting efficient algorithms.

\item [(7)] The encoding/decoding using the encoding tree of graphs implies that encoding not only eliminates uncertainty embedded in a complex system, but also provides efficient data structures for algorithms. This suggests a new direction of the combination of coding theory and algorithms to study the role of encoding in the design of algorithms.
\item [(8)] Structuring of unstructured massive dataset is a principle for data analysis.
\end{enumerate}

Figure 3 below shows that due to the definition of the structural entropy in Definition \ref{def:structural-entropy-O}. The optimization of the structural entropy could be restricted to various types of encoding trees. For each of such a type, there is a new optimization problem. All these optimization problems lead to new optimization problems. Due to the definition of the structural entropy, these optimization problems have new characters. On one hand, the goal is a sum of log functions, which is highly similar to the convex optimizations. However, the objects are graphs, which are combinatorial objects. This new feature makes the optimization problems extremely interesting. In fact, in real world applications, although the objects are combinatorial, the strategies of convex optimization usually work perfectly well in both efficiency and quality.
Therefore, the structural entropies in Definition \ref{def:structural-entropy-O} lead to various optimization problems, referred to as {\it information optimization problems}. Information optimization is hence a new direction between convex optimization and combinatorial optimization, calling for a new theory.

\begin{figure}
\centering

    \includegraphics[width=0.7\textwidth]{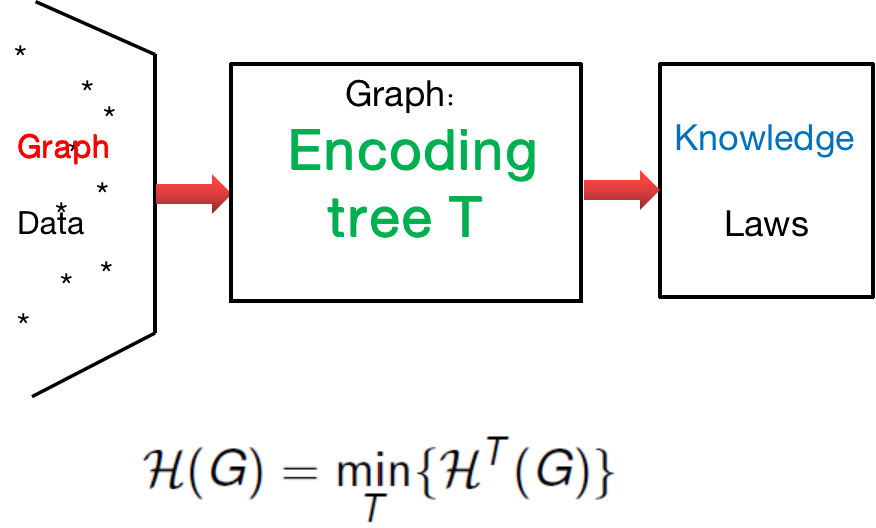}
    \caption{Information Optimization: Encoding eliminates uncertainty. This leads to a new direction for optimization that is between convex optimization and combinatorial optimization.}
    %\label{fig:Learning Model}
\end{figure}

The metric $\mathcal{H}(G)$ has the following intuitions:

\begin{itemize}
\item Intuitively speaking, the structural entropy $\mathcal{H}(G)$ of $G$ is the least amount of information required to determine the codeword of the vertex in an encoding tree that is accessible from random walk with stationary distribution in $G$, under the condition that the codeword of the vertex at which random walk starts is known.

\item Mathematically speaking, the structural entropy $\mathcal{H}(G)$ of $G$ is essentially the {\it intrinsic information hidden in $G$} that cannot be decoded by any encoding tree or any lossless encoding of $G$.

\item The structural entropy $\mathcal{H}(G)$ of $G$ is the information that {\it determines} and {\it decodes} the encoding tree $T$ of $G$ that minimizes the uncertainty in positioning the vertex that is accessible from random walk in graph $G$, when an encoding tree $T$ is given as an ``oracle".

Therefore, $\mathcal{H}(G)$ is not only a measure of structural information, but decodes the structure of $G$ that minimizes the uncertainty in the communications in the graph, which can be regarded as the ``essential structure" (or {\it decoder of $G$}, for short) of the graph.

\item Due to the fact that the encoding tree $T$ minimizes the uncertainty hidden in $G$, $T$ eliminates the uncertainty embedded in $G$. $T$ is both an {\it encoder} and a {\it decoder} of $G$.

\item $T$ is a syntax structure of $G$ finding from the syntax system of $G$. Since $T$ minimizes the uncertainty hidden in $G$, $T$ supports a semantics of $G$. The semantics of $G$ interpreted from $T$ is a {\it knowledge tree}, written KT, of $G$.
\item The knowledge tree KT of $G$ provides a {\it tree of abstractions} of system $G$. This means that a decoder $T$ of $G$ provides not only a knowledge tree of $G$, but also an {\it abstracting tree} consisting of a hierarchical system of abstractions of $G$. This gives rise to not only an abstracting of $G$, but also a hierarchical system of abstracting of $G$, corresponding to high-level abstractions of $G$. This observation is crucial for our structural information learning machinery (SiLeM).

\item A decoder and the corresponding knowledge tree determines and generates a {\it tree of abstractions} that can be regarded as the basics of intuitive reasoning in learning.

\end{itemize}

The $k$-dimensional structural entropy $\mathcal{H}^k(G)$ of $G$ has the similar intuitions as above.

We remark that the structural entropy has a rich theory with remarkable applications, more details are referred to \cite{LP2016a, LYP2016, L+2018}.

%The $k$-dimensional structural entropy of a graph gives rise to a hierarchy of the structural entropy of the graph.

\subsection{A general lower bound of structural entropy}\label{subsec:lowerbound}

To establish our structural information learning theory, we prove a new lower bound of the structural entropy of graphs.

 Given a graph $G=(V,E)$, and a subset $S$ of $V$, the conductance of
$S$ in $G$ is given by

\begin{equation} \label{eqn:phisubset}
\Phi (S)=\frac{|E(S,\bar{S})|}{\min\{ {\rm vol}(S), {\rm vol
}(\bar{S})\}},
\end{equation}

\noindent where $E(S,\bar{S})$ is the set of edges with one
endpoint in $S$ and the other in the complement of $S$, i.e.
$\bar{S}$, and ${\rm vol}(X)$ is the sum of degrees $d_x$ for all
$x\in X$. The conductance of $G$ is defined to be the minimum of
$\Phi(S)$ over all subsets $S$'s, that is:

\begin{equation} \label{eqn:phiG}
\Phi (G)=\min\limits_{S\subset V}\{\Phi (S)\}.
\end{equation}

\begin{theorem} (Lower bound of structural entropy of a graph)\label{thm:SEprinciple-New} For any undirected and connected graph $G$, the structural entropy of $G$ satisfies the following lower bound:

\begin{equation}
\mathcal{H}(G)\geq\Phi (G)\cdot (\mathcal{H}^1(G)-1),
\end{equation}
\noindent where $\Phi (G)$ is the conductance of $G$, and $\mathcal{H}^1(G)$ is the one-dimensional structural entropy of $G$.
\end{theorem}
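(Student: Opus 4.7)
Fix an arbitrary encoding tree $T$ of $G$; I aim to prove the bound $\mathcal{H}^T(G) \geq \Phi(G)(\mathcal{H}^1(G)-1)$, from which the theorem follows by taking the minimum over $T$. Write $p_\alpha = \mathrm{vol}(T_\alpha)/\mathrm{vol}(G)$ so that the structural entropy from Definition \ref{def:structural entropy-T-O} reads
\begin{equation*}
\mathcal{H}^T(G) \;=\; \sum_{\alpha\in T,\,\alpha\neq\lambda} \frac{g_\alpha}{\mathrm{vol}(G)}\,\log_2\frac{p_{\alpha^-}}{p_\alpha}.
\end{equation*}
The starting point is the conductance inequality $g_\alpha = |E(T_\alpha,\bar{T_\alpha})|\geq \Phi(G)\cdot\min\{\mathrm{vol}(T_\alpha),\mathrm{vol}(\bar{T_\alpha})\}$, applied at every non-root node of $T$.

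To exploit this, I would partition the non-root nodes into \emph{light} nodes (those with $p_\alpha\leq 1/2$), for which $g_\alpha/\mathrm{vol}(G)\geq \Phi(G)\,p_\alpha$, and \emph{heavy} nodes (those with $p_\alpha>1/2$), for which $g_\alpha/\mathrm{vol}(G)\geq \Phi(G)(1-p_\alpha)$. A simple observation is that the heavy nodes form a downward chain from the root: each internal node of $T$ can have at most one heavy child since two children with volume exceeding $\mathrm{vol}(G)/2$ would exceed the parent's volume. Enumerate this (possibly empty) chain as $\eta_1,\eta_2,\ldots,\eta_m$, with $p_{\eta_0}:=p_\lambda=1 > p_{\eta_1} > \cdots > p_{\eta_m} > 1/2$.

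The next ingredient is the telescoping identity
\begin{equation*}
\sum_{\alpha\in T,\,\alpha\neq\lambda} p_\alpha\log_2\frac{p_{\alpha^-}}{p_\alpha} \;=\; \mathcal{H}^1(G),
\end{equation*}
which follows by splitting $\log_2(p_{\alpha^-}/p_\alpha)$, grouping $p_\alpha\log_2 p_{\alpha^-}$ by parent (using $\sum_{\alpha^-=\beta}p_\alpha=p_\beta$), and noticing that only the leaf contributions survive, giving precisely the Shannon entropy $-\sum_v \tfrac{d_v}{\mathrm{vol}(G)}\log_2\tfrac{d_v}{\mathrm{vol}(G)}=\mathcal{H}^1(G)$. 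Substituting the case-wise conductance bounds yields
\begin{equation*}
\mathcal{H}^T(G) \;\geq\; \Phi(G)\Bigl[\mathcal{H}^1(G) \;-\; \sum_{i=1}^m (2p_{\eta_i}-1)\log_2\frac{p_{\eta_{i-1}}}{p_{\eta_i}}\Bigr],
\end{equation*}
since each heavy node $\eta_i$ replaces a weight $p_{\eta_i}$ by $1-p_{\eta_i}$, changing the contribution by $-(2p_{\eta_i}-1)\log_2(p_{\eta_{i-1}}/p_{\eta_i})$.

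The last step is to bound the heavy-chain correction. Since $0<2p_{\eta_i}-1<1$ and each log factor is positive, I can drop the prefactor and telescope:
\begin{equation*}
\sum_{i=1}^m (2p_{\eta_i}-1)\log_2\frac{p_{\eta_{i-1}}}{p_{\eta_i}} \;<\; \sum_{i=1}^m \log_2\frac{p_{\eta_{i-1}}}{p_{\eta_i}} \;=\; \log_2\frac{1}{p_{\eta_m}} \;<\; \log_2 2 \;=\; 1,
\end{equation*}
using $p_{\eta_m}>1/2$. Combining gives $\mathcal{H}^T(G)>\Phi(G)(\mathcal{H}^1(G)-1)$, and minimizing over $T$ yields the theorem.

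\textbf{Main obstacle.} The delicate point is the treatment of heavy nodes: the naive substitution $g_\alpha\geq \Phi(G)\,\mathrm{vol}(T_\alpha)$ fails once $\mathrm{vol}(T_\alpha)>\mathrm{vol}(G)/2$, and absent a clean structural observation, one might worry that many heavy nodes could accumulate a large correction. The key insight that makes the argument work is the chain structure of the heavy set, which allows the correction to be collapsed by telescoping into a single term bounded by $1$; this is exactly where the additive $-1$ in the theorem comes from.
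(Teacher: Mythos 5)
Your proposal is correct and follows essentially the same route as the paper's own proof: the same light/heavy split of tree nodes according to whether $\mathrm{vol}(T_\alpha)$ exceeds half of $\mathrm{vol}(G)$, the same two-sided use of the conductance inequality, the same identification of $-\sum_{\alpha\neq\lambda}\frac{V_\alpha}{V_\lambda}\log_2\frac{V_\alpha}{V_{\alpha^-}}$ with $\mathcal{H}^1(G)$ via additivity, and the same observation that the heavy nodes form a single root-anchored chain so that the correction term telescopes to $\log_2(1/p_{\eta_m})<1$. No gaps.
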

\begin{proof} Let $T$ be an encoding tree of $G$.

Define

\begin{equation}
A=\{\alpha\ |\ \alpha\in T, \alpha\not=\lambda, V_{\alpha}>\frac{1}{2}V_{\lambda}\}.
\end{equation}

\begin{equation}
B=\{\alpha\ |\ \alpha\in T, \alpha\not=\lambda, V_{\alpha}\leq\frac{1}{2}V_{\lambda}\}.
\end{equation}

For every tree node $\alpha\in A$, we have

\begin{equation*}
\Phi_{\alpha}=\Phi (T_{\alpha})=\frac{g_{\alpha}}{V_{\lambda}-V_{\alpha}}\geq \Phi (G).
\end{equation*}

Therefore,

$$g_{\alpha}\geq \Phi (G)\cdot (V_{\lambda}-V_{\alpha}).$$

Based on this, we have

\begin{eqnarray}\label{formula:A}
&&-\sum\limits_{\alpha\in A}\frac{V_{\alpha}}{V_{\lambda}}\cdot\frac{g_{\alpha}}{V_{\alpha}}\cdot\log_2\frac{V_{\alpha}}{V_{\alpha^-}} \nonumber \\
&\geq& -\sum\limits_{\alpha\in A}\frac{V_{\alpha}}{V_{\lambda}}\cdot\frac{\Phi(G)[V_{\lambda}-V_{\alpha}]}{V_{\alpha}}\cdot\log_2\frac{V_{\alpha}}{V_{\alpha^-}}\nonumber\\
&=&\Phi (G)\cdot [-\sum\limits_{\alpha\in A}\frac{V_{\alpha}}{V_{\lambda}}\cdot (\frac{V_{\lambda}}{V_{\alpha}}-1)\cdot\log_2\frac{V_{\alpha}}{V_{\alpha^-}}] \nonumber\\
&=&\Phi (G)[-\sum\limits_{\alpha\in A}(1-\frac{V_{\alpha}}{V_{\lambda}})\cdot\log_2\frac{V_{\alpha}}{V_{\alpha^-}}] \nonumber\\
&=&\Phi (G)\cdot [-\sum\limits_{\alpha\in A}\frac{V_{\alpha}}{V_{\lambda}}\cdot\log_2\frac{V_{\alpha}}{V_{\alpha^-}}-\sum\limits_{\alpha\in A}(1-\frac{2V_{\alpha}}{V_{\lambda}})\cdot\log_2\frac{V_{\alpha}}{V_{\alpha^-}}].
\end{eqnarray}

For every tree node $\alpha\in B$, we have

\begin{equation}
\Phi_{\alpha}=\Phi (T_{\alpha})=\frac{g_{\alpha}}{V_{\alpha}}\geq\phi (G).
\end{equation}

Therefore,

\begin{equation}\label{formula:B}
-\sum\limits_{\alpha\in B}\frac{V_{\alpha}}{V_{\lambda}}\cdot\frac{g_{\alpha}}{V_{\alpha}}\cdot\log_2\frac{V_{\alpha}}{V_{\alpha^-}}\geq\Phi (G)\cdot [-\sum\limits_{\alpha\in B}\frac{V_{\alpha}}{V_{\lambda}}\cdot\log_2\frac{V_{\alpha}}{V_{\alpha^-}}].
\end{equation}

Let

\begin{equation}
\Delta= \sum\limits_{\alpha\in A}(1-\frac{2V_{\alpha}}{V_{\lambda}})\cdot\log_2\frac{V_{\alpha}}{V_{\alpha^-}}.
\end{equation}

Since for every $\alpha\in A$, $\frac{1}{2}V_{\lambda}<V_{\alpha}<V_{\lambda}$, so

$$0<\frac{2V_{\alpha}}{V_{\lambda}}-1<1.$$

Therefore,

\begin{eqnarray}
\Delta&=&\sum\limits_{\alpha\in A}(1-\frac{2V_{\alpha}}{V_{\lambda}})\cdot\log_2\frac{V_{\alpha}}{V_{\alpha^-}} \nonumber\\
&=&-\sum\limits_{\alpha\in A}(\frac{2V_{\alpha}}{V_{\lambda}}-1)\cdot\log_2\frac{V_{\alpha}}{V_{\alpha^-}}\nonumber\\
&<&-\sum\limits_{\alpha\in A}\log_2\frac{V_{\alpha}}{V_{\alpha^-}}.
\end{eqnarray}

By the definition of $A$ and $B$, we have that at every level of the coding tree, there is at most one node in $A$, that for every $\alpha\in A$, the parent node $\alpha^-$ is either in $A$ or equal to $\lambda$ and that every child of a node in $B$ must be also in $B$. Therefore, all the tree nodes $\alpha$ in $A$ are in a single branch of the coding tree $T$.

Suppose that
$\alpha_1\subset\alpha_2\subset\cdots\subset\alpha_l$
are all the nodes $\alpha$ in $A$.

Then:

\begin{equation}
\lambda=\alpha_0\subset\alpha_1\subset\alpha_2\subset\cdots\subset\alpha_l.
\end{equation}

Therefore,

\begin{eqnarray}
\Delta&<&-\sum\limits_{\alpha\in A}\log_2\frac{V_{\alpha}}{V_{\alpha^-}} \nonumber\\
&=&-\log_2\prod\limits_{i=1}^l\frac{V_i}{V_{i-1}},\ V_i=V_{\alpha_i} \nonumber\\
&=&-\log\frac{V_l}{V_0}=\log\frac{V_{\lambda}}{V_l}.
\end{eqnarray}

Since $\frac{1}{2}V_{\lambda}<V_l<V_{\lambda}$,
\begin{equation}\label{formula:Delta}
0<\Delta<1.
\end{equation}

According to the inequalities in (\ref{formula:A}), (\ref{formula:B}) and the analysis of $\Delta$, we have:

\begin{eqnarray}
\mathcal{H}^T(G)&=&-\sum\limits_{\alpha\in T, \alpha\not=\lambda}\frac{g_{\alpha}}{V_{\lambda}}\log_2\frac{V_{\alpha}}{V_{\alpha^-}} \nonumber\\
&=&-\sum\limits_{\alpha\in T, \alpha\not=\lambda}\frac{V_{\alpha}}{V_{\lambda}}\cdot\frac{g_{\alpha}}{V_{\alpha}}\cdot\log_2\frac{V_{\alpha}}{V_{\alpha^-}} \nonumber\\
&=& -\sum\limits_{\alpha\in A}\frac{V_{\alpha}}{V_{\lambda}}\cdot\frac{g_{\alpha}}{V_{\alpha}}\cdot\log_2\frac{V_{\alpha}}{V_{\alpha^-}} -\sum\limits_{\alpha\in B}\frac{V_{\alpha}}{V_{\lambda}}\cdot\frac{g_{\alpha}}{V_{\alpha}}\cdot\log_2\frac{V_{\alpha}}{V_{\alpha^-}} \nonumber\\
&\geq&\Phi (G)\cdot [-\sum\limits_{\alpha\in T, \alpha\not=\lambda}\frac{V_{\alpha}}{V_{\lambda}}\cdot\log_2\frac{V_{\alpha}}{V_{\alpha^-}}-\sum\limits_{\alpha\in A}(1-\frac{2V_{\alpha}}{V_{\lambda}})\cdot\log_2\frac{V_{\alpha}}{V_{\alpha^-}}] \nonumber\\
&=&\Phi (G)\cdot (\mathcal{H}^1(G)-\Delta)\nonumber \\
&>&\Phi (G)\cdot (\mathcal{H}^1(G)-1).
\end{eqnarray}

Theorem \ref{thm:SEprinciple-New} follows.
\end{proof}

Our structural information learning machines will be developed based on the structural information theory. Now we finished the necessary theory of structural information for us to develop our learning machines.

We notice that, the structural information learning machines need Shannon's theory as well. For this, we will build the Shannon metric by using the encoding trees to unify the fundamentals of both Shannon's information theory and ours structural information theory.

\section{Structural Entropy Naturally Extends the Shannon Entropy}\label{sec:extension-Shannon}

%\subsection{Module function of a graph}

The structural entropies of a graph in Definitions \ref{def:structural entropy-T-O} and \ref{def:structural-entropy-O} are often misunderstood as the average length of the codeword of the vertex that is accessible from random walk with stationary distribution in the graph. We argue that this is not the case.

\subsection{Structural entropy with a module function}

To better understand the question, we introduce a variation of the structural entropy. It depends on a module function of a graph.

\begin{definition}\label{def:module function} (Module function) Let $G=(V,E)$ be a connected graph. Let ${\rm vol}(G)$ be the volume of $G$. A module function of $G$ is a function $g$ of the form:

\begin{equation}
g:\ 2^V\ \rightarrow \{0, 1, \cdots, {\rm vol}(G)\}.
\end{equation}
\end{definition}

We define the structural entropy of a graph with a module function as follows.

\begin{definition} \label{def:structural entropy-T} (Structural entropy of a graph with a module function by an encoding tree) Let $G=(V,E)$ be a graph, $g$ be a module function of $G$, and $T$ be an encoding tree of $G$.
We define the structural entropy of $G$ with module function $g$ by encoding tree $T$ as follows:

\begin{equation}\label{eqn:H-T}
\mathcal{H}^{T}_g(G)=-\sum\limits_{\alpha\not=\lambda, \alpha\in\ T}\frac{g(T_\alpha)}{{\rm vol}(G)}\cdot \log_2\frac{{\rm vol}(\alpha)}{{\rm vol}(\alpha^{-})},
\end{equation}
where ${\rm vol}(G)$ is the volume of $G$, ${\rm vol}(\beta)$ is the volume of the vertices set $T_{\beta}$, and $\alpha^{-}$ is the parent node of $\alpha$ in $T$.

\end{definition}

%\subsection{structural entropy with a module function}

\begin{definition}\label{def:structural-entropy} (Structural entropy of a graph with a module function) Let $G=(V,E)$ be a graph, and $g$ be a module function of $G$.

\begin{enumerate}
\item [(1)] The structural entropy of $G$ with module function $g$ is defined as

\begin{equation}
\mathcal{H}_g(G)=\min_{T}\{\mathcal{H}^T_g(G)\},
\end{equation}
where $T$ ranges over all the encoding trees of $G$.

\item [(2)] For natural number $k$, the $k$-dimensional structural entropy of $G$ with module function $g$ is defined as

\begin{equation}
\mathcal{H}^k_g(G)=\min_{T}\{\mathcal{H}^T_g(G)\},
\end{equation}
where $T$ ranges over all the encoding trees of $G$ of height less than or equal to $k$.

\end{enumerate}

\end{definition}

The formula $\mathcal{H}^{T}_g(G)=-\sum\limits_{\alpha\not=\lambda, \alpha\in\ T}\frac{g(T_\alpha)}{{\rm vol}(G)}\cdot \log_2\frac{{\rm vol}(\alpha)}{{\rm vol}(\alpha^{-})}$ is a generalization of $\mathcal{H}^T(G)$ in Definition \ref{def:structural entropy-T-O} with the function $g$ here being an arbitrarily given module function, while the function $g$ in Definition \ref{def:structural entropy-T-O} is the cut module function, that is, the number of edges in the cut.

In Definitions \ref{def:structural entropy-T} and \ref{def:structural-entropy}, the structural entropy of graph $G$ depends on a choice of a module function $g$. It is possible that there are many interesting choices for the module function $g$. We list a few of these as example:

\begin{enumerate}
\item [(i)] For a subset $X$ of vertices $V$, $g(X)$ is the volume of $X$. In this case, $g$ is called the {\it volume module function}.
\item [(ii)] For each subset $X$ of $V$, $g(X)$ is the weights in the cut $(X,\bar{X})$ in $G$. In this case, we say that $g$ is the {\it cut module function}.
\item [(iii)] For a directed graph $G$ and for each subset $X$ of $V$, $g(X)$ is the weights of the flow from $\bar{X}$ to $X$. In this case, we call $g$ the {\it flow module function}.

For directed graphs, the flow module function would be essential to the structural entropy of the graphs.
\end{enumerate}

In particular, there are module functions with additivity, with which the structural entropy collapses to the Shannon entropy.

\begin{definition}\label{def:additive function} (Additive module function) Let $G=(V,E)$ be a connected, simple graph with $n$ vertices and $m$ edges, and $g$ be a module function of $G$.
 We say that $g$ is an additive module function if for any disjoint sets $X$ and $Y$ of $V$,

\begin{equation}
g(X\cup Y)=g(X)+g(Y).
\end{equation}

\end{definition}

\begin{theorem} \label{thm:additive function theorem} \label{thm:additivity function theorem} (Structural entropy of a graph with an additive function) Let $G=(V,E)$ be a connected, simple graph with $n$ vertices, and $m$ edges, and let $g$ be an additive module function of $G$. For any encoding tree $T$ of $G$, if $g$ satisfies the boundary condition
\begin{equation}	
g(T_{\alpha})=d_{\alpha},\ \text{if\ \rm $\alpha$\ is\ a\ leaf},
\end{equation}	
then
\begin{equation}
\mathcal{H}^T_g(G)=-\sum\limits_{i=1}^n\frac{d_i}{2m}\cdot\log_2\frac{d_i}{2m},
\end{equation}
where $d_{\alpha}$ is the degree of the vertex with codeword $\alpha$, and $d_i$ is the degree of vertex $i$ in $G$.
\end{theorem}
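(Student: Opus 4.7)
The plan is to reduce the defining sum of $\mathcal{H}^T_g(G)$ to the classical Shannon form by two maneuvers: first, invoke additivity together with the leaf boundary condition to identify $g(T_\alpha)$ with ${\rm vol}(\alpha)$ throughout $T$; second, swap the order of summation so that each contribution is collected along the unique root-to-leaf path, where the logarithmic factors telescope.

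Step one would be a short bottom-up induction on $T$. For a leaf $\gamma$ the hypothesis $g(T_\gamma)=d_\gamma$ is exactly $g(T_\gamma)={\rm vol}(\gamma)$. For an internal node $\alpha$ with children $\beta_1,\dots,\beta_k$, clause (ii) of Definition \ref{def:encoding-tree} says that $\{T_{\beta_1},\dots,T_{\beta_k}\}$ partitions $T_\alpha$; additivity of $g$ combined with the inductive hypothesis then yields
\begin{equation*}
g(T_\alpha)=\sum_{j=1}^k g(T_{\beta_j})=\sum_{j=1}^k {\rm vol}(\beta_j)={\rm vol}(\alpha).
\end{equation*}

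Step two is where the shape of $T$ dissolves. Substituting $g(T_\alpha)={\rm vol}(\alpha)$ into Equation (\ref{eqn:H-T}) and using ${\rm vol}(G)=2m$ turns the formula into
\begin{equation*}
\mathcal{H}^T_g(G)=-\sum_{\substack{\alpha\in T\\ \alpha\neq \lambda}}\frac{{\rm vol}(\alpha)}{2m}\,\log_2\frac{{\rm vol}(\alpha)}{{\rm vol}(\alpha^-)}.
\end{equation*}
I would then rewrite ${\rm vol}(\alpha)$ as the sum of $d_\gamma$ over the leaves $\gamma$ with $\alpha\subseteq \gamma$, and exchange the two summations. For each fixed leaf $\gamma$ (marker of some vertex $i$, so $d_\gamma=d_i$), the remaining inner sum ranges over the non-root ancestors of $\gamma$, i.e.\ the non-empty proper initial segments of $\gamma$ together with $\gamma$ itself. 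Along this unique root-to-leaf path the logarithms telescope:
\begin{equation*}
\sum_{\substack{\alpha\subseteq \gamma\\ \alpha\neq \lambda}}\log_2\frac{{\rm vol}(\alpha)}{{\rm vol}(\alpha^-)}=\log_2\frac{{\rm vol}(\gamma)}{{\rm vol}(\lambda)}=\log_2\frac{d_i}{2m}.
\end{equation*}
Reassembling then produces $\mathcal{H}^T_g(G)=-\sum_{i=1}^n\frac{d_i}{2m}\log_2\frac{d_i}{2m}$, which is independent of $T$.

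The argument is essentially bookkeeping; there is no genuine estimate and no appeal to Theorem \ref{thm:SEprinciple-New} or Lemma \ref{lem:Structural entropy-I}. The only conceptual observation is that additivity plus the leaf boundary condition forces $g$ to coincide with volume on every node of $T$, after which the encoding tree cancels by telescoping. The one mildly delicate bookkeeping point I would spell out carefully is the equivalence ``$\gamma$ is a leaf in $T_\alpha$'' $\Longleftrightarrow$ ``$\alpha\subseteq \gamma$'' used in the index swap, which is immediate from the encoding-tree conventions but deserves an explicit sentence so that the telescoping range is unambiguous.
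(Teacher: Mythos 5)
Your proposal is correct. It differs from the paper's proof mainly in how the cancellation is organized. The paper never identifies $g$ with the volume function on the whole tree: it splits $\log_2\frac{V_{\alpha}}{V_{\alpha^-}}$ into $\log_2 V_{\alpha}-\log_2 V_{\alpha^-}$ and then regroups the $\log_2 V_{\alpha^-}$ sum by parents, using additivity only once, locally, in the form $\sum_{\alpha:\,\alpha^-=\beta}g(T_{\alpha})=g(T_{\beta})$; after this one-level shift everything except the leaf terms and a single $\log_2(2m)$ from the root cancels, and the boundary condition is invoked only at the leaves (and, implicitly, at the root to get $g_{\lambda}=2m$). You instead prove the global identity $g(T_{\alpha})={\rm vol}(\alpha)$ by bottom-up induction and then telescope $\sum_{\lambda\subset\alpha\subseteq\gamma}\log_2\frac{{\rm vol}(\alpha)}{{\rm vol}(\alpha^-)}=\log_2\frac{d_i}{2m}$ along each root-to-leaf path after swapping the order of summation. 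Both arguments are pure bookkeeping of the same strength; what your route buys is that it makes explicit the structural fact the paper leaves implicit (additivity plus the leaf boundary condition force $g$ to be the volume module function on every node of $T$), which immediately exhibits the theorem as the statement that the volume module function yields a $T$-independent, Shannon-type entropy, i.e.\ it connects directly to Definition \ref{def:length of random accessible vertex}, Corollary \ref{thm:length of graph T} and Theorem \ref{thm:Shannon entropy}. The paper's route is marginally shorter since it avoids the induction. Your flagged bookkeeping point --- that for a leaf $\gamma$ with marker $\{v\}$ one has $v\in T_{\alpha}$ if and only if $\alpha\subseteq\gamma$ --- is indeed the one step worth a sentence, and it follows from the partition clause (ii) of Definition \ref{def:encoding-tree} exactly as you say.
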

\begin{proof} By Definition \ref{def:structural entropy-T}, noting that for every $\alpha\in T$, let $g_{\alpha}=g(T_{\alpha})$ and $V_{\alpha}={\rm vol}(\alpha)$, we have:

\begin{eqnarray}
\mathcal{H}^T_g(G)&=&-\sum\limits_{\alpha\not=\lambda, \alpha\in\ T}
\frac{g_{\alpha}}{2m}\cdot \log_2\frac{V_{\alpha}}{V_{\alpha^{-}}} \nonumber\\
&=&-\sum\limits_{\alpha\not=\lambda, \alpha\in\ T}
\frac{g_{\alpha}}{2m}\cdot \log_2V_{\alpha} + \sum\limits_{\substack{\alpha\not=\lambda\\ \alpha\in\ T}}
\frac{g_{\alpha}}{2m}\cdot \log_2V_{\alpha^{-}} \nonumber\\
&=&-\sum\limits_{\alpha\not=\lambda, \alpha\in\ T}
\frac{g_{\alpha}}{2m}\cdot \log_2V_{\alpha} + (\sum\limits_{\substack{\alpha\in T\\ \text{non-leaf}}}
\frac{g_{\alpha}}{2m}\cdot \log_2V_{\alpha}+\log_2(2m)), \ \text{by the additivity of $g$} \nonumber\\
&=&-\sum\limits_{i=1}^n\frac{d_i}{2m}\cdot\log_2\frac{d_i}{2m}.
\end{eqnarray}

The theorem follows.
\end{proof}

\begin{definition}\label{def:length of random accessible vertex} (The length of the vertex accessible from random walk with stationary distribution)
For a connected, simple graph $G=(V,E)$ of $n$ vertices and $m$ edges. Let $g$ be the volume module function of $G$ defined as: for any set $X$ of vertices $V$, $g(X)$ is the volume of $X$.
Suppose that $T$ is an encoding tree of $G$. Then:

\begin{equation}\label{eqn:Length-G-T}
H^T_g(G)=-\sum\limits_{\substack{\alpha\in T\\ \alpha\not=\lambda}}\frac{V_{\alpha}}{2m}\cdot\log_2\frac{V_{\alpha}}{V_{\alpha^{-}}},
\end{equation}
where $V_{\beta}$ is the volume of $T_{\beta}$, $\alpha^{-}$ is the parent node of $\alpha$ in $T$.

\end{definition}

{\it Remarks: In this case, ${H}^T_g(G)$ is the amount of information required to describe the codeword in $T$ of the vertex that is accessible from random walk with stationary distribution in $G$, and is a lower bound of the average length of the codeword (in $T$) of the vertex that is accessible from random walk with stationary distribution in $G$.}

\begin{corollary} \label{thm:length of graph T}\label{thm:additivity theorem} For any connected and simple graph $G=(V,E)$ with $n$ vertices and $m$ edges. For the module function $g(X)=\sum\limits_{x\in X}d_x$, where $d_x$ is the degree of $x$ in $G$, and for any encoding tree $T$ of $G$,

\begin{eqnarray}
{H}^T_g(G)&=&-\sum\limits_{i=1}^n\frac{d_i}{2m}\cdot\log_2\frac{d_i}{2m} \nonumber\\
&=&\mathcal{H}^1(G),
\end{eqnarray}
where $d_i$ is the degree of vertex $i$ in $G$, $\mathcal{H}^1(G)$ is the one-dimensional structural entropy of $G$ \cite{LP2016a}.

\end{corollary}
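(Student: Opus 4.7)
The plan is to derive the corollary as a direct consequence of Theorem \ref{thm:additive function theorem} together with an explicit evaluation of $\mathcal{H}^1(G)$ on the unique (up to relabelling) height-one encoding tree of $G$.

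First, I would verify that the module function $g(X) = \sum_{x \in X} d_x$ satisfies the two hypotheses required to apply Theorem \ref{thm:additive function theorem}. Additivity is immediate: for disjoint $X, Y \subseteq V$,
\begin{equation*}
g(X \cup Y) = \sum_{x \in X \cup Y} d_x = \sum_{x \in X} d_x + \sum_{x \in Y} d_x = g(X) + g(Y).
\end{equation*}
The boundary condition is also immediate from Definition \ref{def:encoding-tree}(iii): for every leaf $\alpha \in T$, $T_\alpha = \{v\}$ is a singleton, so $g(T_\alpha) = d_v = d_\alpha$, matching the hypothesis of Theorem \ref{thm:additive function theorem}. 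Applying the theorem then yields the first equality
\begin{equation*}
H^T_g(G) = \mathcal{H}^T_g(G) = -\sum_{i=1}^n \frac{d_i}{2m} \cdot \log_2 \frac{d_i}{2m},
\end{equation*}
where I am using the fact (evident from Definitions \ref{def:structural entropy-T} and \ref{def:length of random accessible vertex}) that the symbols $\mathcal{H}^T_g(G)$ and $H^T_g(G)$ denote the same quantity.

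For the second equality, I would compute $\mathcal{H}^1(G)$ directly from Definition \ref{def:structural-entropy-O}(2) on a concrete height-one encoding tree. Such a tree $T_0$ has root $\lambda$ with $T_\lambda = V$ together with $n$ leaf children $\alpha_1, \dots, \alpha_n$ satisfying $T_{\alpha_i} = \{i\}$, and this is the only admissible shape for a height-one encoding tree. Then $\mathrm{vol}(\lambda) = 2m$, $\mathrm{vol}(\alpha_i) = d_i$, and since $G$ is simple the cut-count satisfies $g_{\alpha_i} = |E(\overline{T_{\alpha_i}}, T_{\alpha_i})| = d_i$ (every edge incident to $i$ has its other endpoint in $V \setminus \{i\}$). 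Substituting into Equation (\ref{eqn:H-T-O}) gives
\begin{equation*}
\mathcal{H}^{T_0}(G) = -\sum_{i=1}^n \frac{d_i}{2m} \cdot \log_2 \frac{d_i}{2m},
\end{equation*}
which coincides with the expression above; since this is the value on every height-one encoding tree, it equals $\mathcal{H}^1(G)$, completing the argument.

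There is no genuine obstacle here: the result is essentially a specialization of the additivity theorem. The only point that requires a moment of care is the dual role of the function $g$ in the two appearances of the formula $-\sum_i (d_i/2m)\log_2(d_i/2m)$: on the left-hand side it is the volume module function appearing in Definition \ref{def:length of random accessible vertex}, whereas in $\mathcal{H}^1(G)$ the underlying module function (implicit in Definition \ref{def:structural entropy-T-O}) is the cut module function $g_\alpha = |E(\overline{T_\alpha}, T_\alpha)|$. The two functions agree at the leaves of any encoding tree of a simple graph, which is precisely what forces the two quantities to coincide.
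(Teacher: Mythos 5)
Your proposal is correct and follows essentially the same route as the paper: verify that $g(X)=\sum_{x\in X}d_x$ is additive and satisfies the leaf boundary condition, then invoke Theorem \ref{thm:additive function theorem}. The only addition is your explicit evaluation of $\mathcal{H}^1(G)$ on the unique height-one encoding tree (using $g_{\alpha_i}=d_i$ for a simple graph), which the paper instead takes as known from \cite{LP2016a}; this is a harmless and slightly more self-contained way to obtain the second equality.
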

\begin{proof} Note that for any non-leaf node $\alpha\in T$, $V_{\alpha}=\sum\limits_{\beta\in T, \beta^{-}=\alpha}V_{\beta}$, that is, $V_{\alpha}$ is an additive module function. The result follows from Theorem \ref{thm:additive function theorem}.
\end{proof}

Corollary \ref{thm:length of graph T} shows that

\begin{itemize}

\item The information to {\it describe} the codeword of an encoding tree of the vertex that is accessible from random walk with stationary distribution in $G$ is independent of any encoding tree $T$ of $G$, and
\item The {\it minimum average length}, written $L(G)$, of the codeword in an encoding tree of the vertex that is accessible from random walk with stationary distribution is greater than or equal to (or lower bounded by) the one-dimensional structural entropy $\mathcal{H}^1(G)$ \cite{LP2016a}, or the Shannon entropy of the degree distribution of the graph. This means that

    \begin{equation}
    L(G)=\Omega (\log_2n),
    \end{equation}
where $n$ is the number of vertices in $G$.

This property is in sharp contrast to the structural entropy. In fact, there are many graphs $G$ such that the two-dimensional structural entropy $\mathcal{H}^2(G)=O(\log_2\log_2n)$, referred to \cite{LP2016a}.
\end{itemize}

The proof of Theorem \ref{thm:additive function theorem} also shows the reason why the structural entropies in Definitions \ref{def:structural entropy-T-O} and \ref{def:structural-entropy-O} depend on the encoding trees of a graph. The reason is that, the cut module function $g$ in Definition \ref{def:structural entropy-T-O} fails to have the additivity, since for any two disjoint vertex sets $X$ and $Y$, if there are edges between $X$ and $Y$, then $g(X\cup Y)<g(X)+g(Y)$. This ensures that the structural entropy $\mathcal{H}^T(G)$ in Definition \ref{def:structural entropy-T-O} depends on the encoding tree $T$ of $G$. For this reason, the structural entropy provides the foundation for a new direction of information theory with rich theory and remarkable applications \cite{LYP2016, L+2018}.

\subsection{Shannon entropy is independent of encoding trees}

Given a probability distribution $p=(p_1, p_2,\cdots, p_n)$, the structural entropy can be naturally defined on $p$ as follows:

\begin{definition} (Encoding tree of $p$) An {\it encoding tree of $p$} is a rooted tree as before. The root node $\lambda$ (the empty string) is associated with the set of all the items $\{1, 2,\cdots, n\}$. Every tree node $\alpha$ is associated with a subset $T_{\alpha}$ of $\{1,2,\cdots,n\}$. For every tree node $\alpha$, if $\beta_1,\beta_2,\cdots,\beta_l$ are all the children of $\alpha$, then $\{T_{\beta_1}, T_{\beta_2},\cdots, T_{\beta_l}\}$ is a partition of $T_{\alpha}$. Of course, every leaf node $\gamma$ in the tree is associated with a singleton $\{i\}$ for some $i\in\{1,2,\cdots,n\}$, that is, $T_{\gamma}=\{i\}$. If $T_{\gamma}=\{i\}$, then we say that $\gamma$ is the {\it codeword of $i$}.

\end{definition}

\begin{definition} (The structural entropy of $p$ given by an encoding tree $T$ of $p$) Let $p=(p_1,p_2,\cdots, p_n)$ be a probability distribution and $T$ be an encoding tree of $p$. Then {\it the structural entropy of $p$ given by $T$} is defined as

\begin{equation}\label{eqn:structural entropy of p}
\mathcal{H}^T(p)=-\sum\limits_{\alpha\in T, \alpha\not=\lambda}\frac{g_{\alpha}}{V_{\lambda}}\cdot\log_2\frac{V_{\alpha}}{V_{\alpha^{-}}},
\end{equation}
where for every tree node $\alpha$, $g_{\alpha}=V_{\alpha}=\sum\limits_{i\in T_{\alpha}}p_i$.

\end{definition}

\begin{theorem} (Shannon entropy is independent of encoding tree)\label{thm:Shannon entropy} Given a probability distribution $p=(p_1,p_2,\cdots,p_n)$, let $T$ be an encoding tree of $p$. Then:

\begin{equation}
\mathcal{H}^T(p)=H(p),
\end{equation}
where $H(p)$ is the Shannon entropy of $p$.
\end{theorem}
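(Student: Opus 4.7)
The plan is to reuse the telescoping/additivity argument already established in Theorem \ref{thm:additive function theorem} (and Corollary \ref{thm:length of graph T}), since the setup for $\mathcal{H}^T(p)$ is essentially a probabilistic mirror of it. Specifically, for a probability distribution the module-like function $V_\alpha = \sum_{i \in T_\alpha} p_i$ is clearly additive: for disjoint subsets $X, Y$ of $\{1,\dots,n\}$ one has $V_{X \cup Y} = V_X + V_Y$. Moreover, $V_\lambda = \sum_i p_i = 1$, and at each leaf $\gamma$ with $T_\gamma = \{i\}$, $V_\gamma = p_i$. So the boundary condition analogous to $g(T_\alpha) = d_\alpha$ at leaves is automatically met, with $d_i$ replaced by $p_i$ and $2m$ replaced by $1$.

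First, I would rewrite
\begin{equation*}
\mathcal{H}^T(p) = -\sum_{\alpha \in T,\, \alpha \neq \lambda} V_\alpha \log_2 V_\alpha \;+\; \sum_{\alpha \in T,\, \alpha \neq \lambda} V_\alpha \log_2 V_{\alpha^-},
\end{equation*}
using $g_\alpha = V_\alpha$ and $V_\lambda = 1$. Next I would reindex the second sum by parents: for each non-leaf node $\beta \in T$, its children $\alpha$ satisfy $\alpha^- = \beta$, and by additivity $\sum_{\alpha^- = \beta} V_\alpha = V_\beta$. Hence the second sum equals $\sum_{\beta \text{ non-leaf}} V_\beta \log_2 V_\beta$. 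The $\beta = \lambda$ term vanishes since $\log_2 V_\lambda = \log_2 1 = 0$.

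Splitting the first sum into leaf and non-leaf contributions (both excluding $\lambda$) and cancelling the non-leaf parts against the reindexed second sum leaves only the leaf contribution:
\begin{equation*}
\mathcal{H}^T(p) = -\sum_{\gamma \text{ leaf}} V_\gamma \log_2 V_\gamma = -\sum_{i=1}^n p_i \log_2 p_i = H(p).
\end{equation*}
The argument is purely a telescoping identity driven by additivity of $V_\alpha$, so it is independent of which encoding tree $T$ is chosen.

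There is no real obstacle here; the only thing to be careful about is bookkeeping the roles of $\lambda$ and of leaves versus non-leaves in the two sums, so that the cancellation is made cleanly and one does not miscount the $\lambda$-term. In effect, the theorem is an immediate specialization of Theorem \ref{thm:additive function theorem} to the additive module function $V_\alpha$ on a ``graph'' whose total volume is $1$ and whose leaf volumes are the $p_i$'s, and it pinpoints exactly why the structural entropy reduces to the Shannon entropy precisely when the module function is additive (as noted after Corollary \ref{thm:length of graph T}), whereas the cut-based module function in Definition \ref{def:structural entropy-T-O} fails additivity and thereby yields genuinely new, tree-dependent information.
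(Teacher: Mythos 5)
Your proof is correct and is essentially the paper's own argument: the paper proves this theorem by citing the proof of Theorem \ref{thm:additivity function theorem}, which is exactly the telescoping/reindexing-by-parents cancellation driven by additivity of $V_\alpha$ that you carry out, with $V_\lambda=1$ killing the root term and only the leaf contributions $-\sum_i p_i\log_2 p_i$ surviving. Your explicit bookkeeping of the $\lambda$-term and the leaf/non-leaf split is a faithful (and slightly more detailed) rendering of the same computation.
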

\begin{proof}
By the proof of Theorem \ref{thm:additivity function theorem}.
\end{proof}

Theorem \ref{thm:Shannon entropy} ensures that the structural entropy on the unstructured probability distribution degrades to the Shannon entropy. Therefore, the structural entropy is a natural extension of Shannon's entropy from unstructured data to structured systems.

Combining the structural information theory and Shannon's information theory together allows us to define the new concepts of compressing information and decoding information of graphs, and to establish the principles of graph compressing and graph decoding.
The new principles build the foundation of our structural information learning machines.

\section{Compressing Information and Decoding Information Principle}\label{sec:structuredocoding}

By definition, the structural entropy $\mathcal{H}(G)$ of $G$ is {\it the intrinsic information hidden in $G$}. Therefore,
$\mathcal{H}(G)$ is the amount of information deeply hidden in $G$ that cannot be decoded eventually, anyway.

In the classical information theory, we need to measure the compression ratio of a random variable or a probability distribution, interpreted as data. In structural information theory, we need to answer the question of how much information embedded in a graph that can be compressed, and that can be decoded. In this section, we answer these questions by using the structural entropy of graphs.

Let $G$ be a connected and undirected graph. We have shown that the one-dimensional structural entropy of $G$ is the Shannon entropy of the degree distribution of $G$.
For this reason, we define:

\begin{definition} (Shannon entropy of a graph)\label{def:Shannonentropy} Let $G$ be an undirected and connected graph.
We define the Shannon entropy of $G$ to be the one-dimensional structural entropy of $G$, written as

\begin{equation}
H(G)=\mathcal{H}^1(G).
\end{equation}
\end{definition}

The Shannon entropy of $G$, or the one-dimensional structural entropy of $G$ can be understood as the {\it amount of uncertainty} that is embedded in $G$.

It has been a grand challenge to define the compressing information of a graph. Here we define such a metric. It is defined by using the encoding trees of the graph.

\subsection{Compressing information of a graph}\label{subsec:compressing}

\begin{definition} (Compressing information of a graph given by an encoding tree) 
\label{def:compressing information given by an encoding tree} Given an undirected and connected graph $G=(V,E)$, let $T$ be an encoding tree of $G$. We define the compressing information of $G$ given by $T$ as

\begin{equation}\label{eqn:compressing-tree}
\mathcal{C}^T(G)=-\sum\limits_{\substack{\alpha\in T\\ \alpha\not=\lambda}}\frac{V_{\alpha}-g_{\alpha}}{V_{\lambda}}\log_2\frac{V_{\alpha}}{V_{\alpha^-}},
\end{equation}
where $g_{\alpha}$ is the number of edges from vertices outside of $T_{\alpha}$ to vertices in $T_{\alpha}$, $V_{\beta}$ is the volume of $T_{\beta}$, and $\alpha^-$ is the parent node of $\alpha$ in the encoding tree $T$.
\end{definition}

The intuition of Definition \ref{def:compressing information given by an encoding tree} is as follows:

\begin{enumerate}

\item [(i)] We interpret the encoding tree $T$ as an {\it encoder} of $G$.

\item [(ii)] In Equation (\ref{eqn:compressing-tree}),
\begin{enumerate}

\item [(a)] $\frac{V_{\alpha}-g_{\alpha}}{V_{\lambda}}$ is the probability that random walk keeps staying in the same module $T_{\alpha}$,
\item  [(b)] $-\log_2\frac{V_{\alpha}}{V_{\alpha^-}}$ is the information of $\alpha$ within $\alpha^-$, and
\item [(c)] $-\frac{V_{\alpha}-g_{\alpha}}{V_{\lambda}}\log_2\frac{V_{\alpha}}{V_{\alpha^-}}$ measures the information that random walk in $G$ keeps staying in the same module $T_{\alpha}$.
\end{enumerate}

\item [(iii)] According to (ii) above, if $\mathcal{C}^T(G)$ is large, then the uncertainty of random walk in $G$ is reduced by the encoding tree $T$, and
\item [(iv)] By (iii) above, $T$ is regarded an a {\it compressor} of $G$.
\end{enumerate}

Similar to the intuitive interpretation of the notion of structural entropy in Lemma \ref{lem:Structural entropy-I}, the compressing information of $G$ given by an encoding tree $T$ can be interpreted as the average of {\it mutual information} of the codewords of edges taken by a random walk in $G$. This intuition is given in Figure 4 below.

We can prove the intuition mathematically.

 Consider a step of random walk with stationary distribution in $G$. Let $X$ and $Y$ be the random variables representing the codewords of the starting vertex $x$ and the arrival vertex $y$, respectively, of the random walk.

 Suppose that $\alpha$ and $\beta$ are the codewords of $x$ and $y$, respectively.

 We consider the {\it mutual information} between $X=\alpha$ and $Y=\beta$, denoted by:

\begin{equation}\label{eqn:mutual-information-random walk}
\widetilde{I}(X=\alpha;Y=\beta).
\end{equation}

 Notice that the codeword $\alpha$ is a leaf node in $T$. By Lemma \ref{lem:coding tree property}, if we know $\alpha$, then we know $T_{\delta}$ for all the nodes $\delta\subseteq\alpha$, i.e., the initial segments of $\alpha$ as strings.

Let $\gamma$ be the longest node $\delta\in T$ such that both $\delta\subseteq\alpha$ and $\delta\subseteq\beta$ hold. We know that once we know $\alpha$, we have already known $\gamma$, and that $\gamma$ is the part of $\beta$ that we have already known. Therefore $\gamma$ is the part shared by $\alpha$ and $\beta$. This means that the mutual information between $\alpha$ and $\beta$ is the information required to determine $\gamma$.
 
 According to the analysis above, the mutual information of $X=\alpha$ and $Y=\beta$ is:

\begin{equation*}
\widetilde{I}(X=\alpha;Y=\beta)=-\sum\limits_{\substack{\delta\in T\\ \lambda\subset\delta\subseteq\gamma}}\log_2\frac{{\rm vol}(\delta)}{{\rm vol}(\delta^{-})},
\end{equation*}
where $\gamma=\alpha\cap\beta$ is the node in $T$ at which $\alpha$ and $\beta$ branch in $T$, or $\gamma$ is the longest common initial segment of $\alpha$ and $\beta$.

Intuitively, $\widetilde{I}(X=\alpha;Y=\beta)$ is the mutual information between $X=\alpha$ and $Y=\beta$, that is, the information of $Y=\beta$ that is contained in $X=\alpha$. In another word, it is the information required to determine the node $\gamma$ at which $\alpha$ and $\beta$ branch.

We notice that, to determine $\gamma$ is to determine $\delta$ for all $\delta$ with $\lambda\subset\delta\subseteq\gamma$. For each such a $\delta$, both $x\in T_{\delta}$ and $y\in T_{\delta}$ occur, we need to determine the codeword of $T_{\delta}$ in $T_{\delta^{-}}$, for which the amount of information required is $-\log_2\frac{{\rm vol}(\delta)}{{\rm vol}(\delta^{-})}$. So, intuitively, $\widetilde{I}(X=\alpha;Y=\beta)$ is the mutual information between $X=\alpha$ and $Y=\beta$, in terms of the codeword of $T_{\delta}$ in $T_{\delta^{-}}$.

Define

\begin{equation*}
\widetilde{I}^T(G)=\frac{1}{{\rm vol} (G)}\sum\limits_{\substack{e=(x,\ y)\\ x,\ y\in V}}\widetilde{I}(X=\alpha;Y=\beta),
\end{equation*}
where $X$ is the codeword of vertex $x$, and $Y$ is the codeword of vertex $y$, accessible from random walk from $x$.

$\widetilde{I}^T(G)$ is then the average mutual information between all the channels represented by the edges of $G$.

Our definition of $\mathcal{C}^T(G)$ in Definition \ref{eqn:compressing-tree} is actually $\widetilde{I}^T(G)$.

\begin{lemma} \label{lem:Compressing information} Let $G=(V,E)$ be a connected simple graph, and $T$ be an encoding tree of $G$. Then

\begin{equation}
\mathcal{C}^T(G)=\widetilde{I}^T(G).
\end{equation}

\end{lemma}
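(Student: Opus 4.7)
The plan is to mimic the proof of Lemma~\ref{lem:Structural entropy-I} almost verbatim, with the role played there by cut edges replaced here by its complement, namely the interior edges of $T_\delta$. I would first substitute the expanded definition $\widetilde{I}(X=\alpha; Y=\beta) = -\sum_{\lambda\subset\delta\subseteq\gamma}\log_2(V_\delta/V_{\delta^-})$ into the average
\[
\widetilde{I}^T(G) \;=\; \frac{1}{V_\lambda}\sum_{e=(x,y)}\widetilde{I}(X=\alpha; Y=\beta),
\]
yielding a double sum over edges $(x,y)$ of $G$ and over tree nodes $\delta\neq \lambda$, with each term weighted by $-\log_2(V_\delta/V_{\delta^-})$.

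The second step is to interchange the order of summation so that the outer sum runs over non-root tree nodes $\delta$ and the inner sum runs over ordered pairs $(x,y)$ that are edges of $G$. By Lemma~\ref{lem:coding tree property}, the condition $\delta\subseteq\gamma_{xy}=\alpha\cap\beta$ is equivalent to $\delta\subseteq\alpha$ and $\delta\subseteq\beta$ holding simultaneously, which in turn is equivalent to both vertices $x$ and $y$ lying inside $T_\delta$. Hence the inner sum collapses to $-\log_2(V_\delta/V_{\delta^-})$ multiplied by the number of ordered edges lying entirely inside $T_\delta$.

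The key combinatorial identity is that this count equals $V_\delta - g_\delta$: of the $V_\delta$ edge-endpoints incident to vertices of $T_\delta$, exactly $g_\delta$ correspond to cut edges (each contributing one endpoint inside $T_\delta$), while the remaining $V_\delta - g_\delta$ endpoints come from interior edges, and each such undirected interior edge contributes two endpoints, i.e., two ordered pairs. Substituting this count and pulling out $1/V_\lambda$ reproduces Equation~(\ref{eqn:compressing-tree}) exactly and concludes the proof.

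The main obstacle, if any, is the ordered-versus-unordered edge convention in $\sum_{e=(x,y)}$; this is precisely the same bookkeeping handled in the proof of Lemma~\ref{lem:Structural entropy-I}, where the asymmetric constraint on $x$ and $y$ selects one direction per cut edge (giving $g_\delta$). Here the symmetric constraint ``both $x,y\in T_\delta$'' selects both directions of each interior edge (giving $V_\delta-g_\delta$). Beyond that routine check, the argument is a mechanical swap of summations and requires no new idea.
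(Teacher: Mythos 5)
Your proposal is correct. The one point that needs care is exactly the one you flagged: the sum $\sum_{e=(x,y)}$ is over ordered pairs (which is what makes the normalization by ${\rm vol}(G)=2m$ and the cut count $g_\delta$ in Lemma \ref{lem:Structural entropy-I} come out right), and under the symmetric constraint $x,y\in T_\delta$ each interior edge of $T_\delta$ is counted twice, so the count is $2e(T_\delta)=V_\delta-g_\delta$ by the degree-sum identity $V_\delta=2e(T_\delta)+g_\delta$. With that, the interchange of summation immediately yields Equation (\ref{eqn:compressing-tree}).

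The paper reaches the same endpoint by a slightly different decomposition: it first writes
$\widetilde{I}(X=\alpha;Y=\beta)=-\sum_{\lambda\subset\delta\subseteq\beta}\log_2\frac{V_\delta}{V_{\delta^-}}-\widetilde{H}(Y=\beta\,|\,X=\alpha)$,
i.e., mutual information as the full description information of $\beta$ minus the conditional information, and then evaluates the two averaged sums separately --- the first collapses to $-\sum_{\delta}\frac{V_\delta}{{\rm vol}(G)}\log_2\frac{V_\delta}{V_{\delta^-}}$ (the additive-function computation of Theorem \ref{thm:additive function theorem}), and the second is exactly Lemma \ref{lem:Structural entropy-I}, contributing $+\sum_{\delta}\frac{g_\delta}{{\rm vol}(G)}\log_2\frac{V_\delta}{V_{\delta^-}}$. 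So the paper obtains the coefficient $V_\delta-g_\delta$ as a difference of two previously computed sums, whereas you obtain it in one step by counting the ordered interior edges of $T_\delta$ directly. Your route is self-contained and arguably cleaner; the paper's route makes the information-theoretic identity $I=H(\beta)-H(\beta|\alpha)$ explicit and recycles the earlier lemma. Both are valid, and the combinatorial content is identical.
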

\begin{proof}
According to the definition of $\widetilde{I}(X=\alpha;Y=\beta)$, for every vertex $x$ and vertex $y$, for which there is an edge from $x$ to $y$, and $x$ and $y$ have codewords $\alpha$ and $\beta$ in $T$, respectively. Let $\gamma=\alpha\cap\beta$, that is, $\gamma$ is the longest initial segment of both $\alpha$ and $\beta$, then for every $\delta$, if $\lambda\subset\delta\subseteq\gamma$, then 
both $x$ and $y$ are in $T_{\delta}$. Therefore the edge from $x$ to $y$ contributes $-\log_2\frac{{\rm vol}(\delta)}{{\rm vol}(\delta^{-})}$ to $\widetilde{I}(X=\alpha;Y=\beta)$.

Note that for any $X=\alpha$, and $Y=\beta$,

$$\widetilde{I}(X=\alpha;Y=\beta)=-\sum\limits_{\substack{\delta\\ \lambda\subset\delta\subseteq\beta}}\frac{\log V_{\delta}}{V_{\delta^-}}-\widetilde{H}^T(Y=\beta| X=\alpha).$$

This ensures that

\begin{eqnarray*}
\widetilde{I}^T(G)&=&\frac{1}{{\rm vol} (G)}\sum\limits_{\substack{e=(x,\ y)\\ x,\ y\in V}}\widetilde{I}(X=\alpha;Y=\beta)\\
&=&-\frac{1}{{\rm vol}(G)}\sum\limits_{\substack{e=(x,y)\\ x, y\in V}}\sum\limits_{\substack{\delta\\ \lambda\subset\delta\subseteq\beta}}\log\frac{V_{\delta}}{V_{\delta^-}}-\frac{1}{{\rm vol}(G)}\sum\limits_{\substack{e=(x,y)\\ x,y\in V}}\widetilde{H}(Y=\beta|X=\alpha)\\
&=&-\sum\limits_{\substack{\alpha\\ \alpha\not=\lambda}}\frac{V_{\alpha}}{{\rm vol}(G)}\log\frac{V_{\alpha}}{V_{\alpha^-}}+\sum\limits_{\substack{\alpha\\ \alpha\not=\lambda}}\frac{g_{\alpha}}{{\rm vol}(G)}\log\frac{V_{\alpha}}{V_{\alpha^-}}\\
&=&-\sum\limits_{\substack{\alpha\\ \alpha\not=\lambda}}\frac{V_{\alpha}-g_{\alpha}}{{\rm vol}(G)}\log\frac{V_{\alpha}}{V_{\alpha^-}}\\
&=&\mathcal{C}^T(G),
\end{eqnarray*}
where $g_{\alpha}=|E(\bar{T_{\alpha}}, T_{\alpha})|$, that is, the number of edges from the complement of $T_{\alpha}$, i.e., $\bar{T_{\alpha}}$, to $T_{\alpha}$, ${\rm vol}(G)$ is the volume of $G$, that is, the total degree of vertices in $G$, ${\rm vol}(\beta)$ is the volume of the vertices set $T_{\beta}$, and $\alpha^{-}$ is the parent node of $\alpha$ in $T$.

\end{proof}

Lemma \ref{lem:Compressing information} shows that given an encoding tree $T$ of $G$, the compressing information of $G$ by $T$ is the average mutual information over all the possible channels presented by the edges of $G$ under the encoding given in the encoding tree $T$.

\begin{figure}
\centering

    \includegraphics[width=0.4\textwidth]{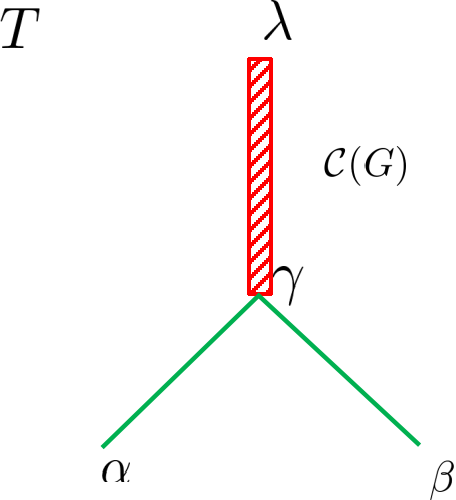}
    \caption{Compression Information. Suppose that $\alpha$ and $\beta$ are the codewords of vertices $x$ and $y$, respectively, and $(x,y)$ is an edge of $G$. We regard the edge $(x,y)$ as a channel of communications in $G$. We use $X$ to denote the codeword of vertex $x$ and $Y$ to denote the codeword of vertex $y$. Since communications in $G$ are randomly performed. Suppose that $X=\alpha$ and $Y=\beta$. Then the mutual information $I(X;Y)$ is the information gain for $Y$ when we know $X=\alpha$. In this case, $I(X;Y)$ is defined by the information required to determine the tree node $\gamma$ at which $\alpha$ and $\beta$ branch. The compressing information of $G$ given by encoding tree $T$ is the weighted average of $I(X;Y)$ for all the edges in $G$.}
    %\label{fig:Learning Model}
\end{figure}

The intuition of Lemma \ref{lem:Compressing information} is given in Figure 4 below.
In Figure 4, $\alpha$ is the codeword of the vertex from which random walk in $G$ starts, and $\beta$ is the codeword of the vertex that is accessible from random walk from $x$ in $G$. This means that $(x,y)$ is an edge in $G$. The mutual information of $\alpha$ and $\beta$ is determined by the path from the root $\lambda$ to $\gamma$, where $\gamma$ is the node in $T$ at which $\alpha$ and $\beta$ branch. The compressing information of $G$ by $T$ is the average amount of mutual information of the codewords of the two endpoints of edges for all the edges in $G$. We can also understand an edge $(x,y)$ of $G$ as a communication channel. The information needed to determine the node $\gamma$ is the information explicitly shared by the codewords $\alpha$ and $\beta$. Then the compressing information is the weighted average of the information shared by all the edges in $G$.

By using Definition \ref{def:compressing information given by an encoding tree}, we are able to define the compressing information of $G$.

\begin{definition} (Compressing information of a graph) \label{def:compressing-information} Let $G=(V,E)$ be an undirected and connected graph. We define the compressing information of $G$ as

\begin{equation}\label{eqn:CI}
\mathcal{C}(G)=\max_T\{\mathcal{C}^T(G)\},
\end{equation}
where $T$ ranges over all the encoding trees of $G$.

\end{definition}

The same as structural entropy, we can define various restricted types of compressing information of a graph.

\begin{definition} ($k$-dimensional compressing information of a graph) \label{def:compressing-information-k} Let $G=(V,E)$ be an undirected and connected graph, and $k$ be a natural number. We define the $k$-dimensional compressing information of $G$ as

\begin{equation}\label{eqn:CI-k}
\mathcal{C}^k(G)=\max_T\{\mathcal{C}^T(G)\},
\end{equation}
where $T$ ranges over all the encoding trees of $G$ of height less than or equal to $k$.
\end{definition}

\noindent Intuitively, $\mathcal{C}(G)$ is the amount of information that has been compressed by the optimum encoding tree $T$ of $G$.

Of course, for a type $\mathcal{T}$ of encoding trees, we can define the $\mathcal{T}$-type compressing information of $G$ as 

\begin{equation}\label{eqn:CI-type}
\mathcal{C}^{\mathcal{T}}(G)=\max_{T\in\mathcal{T}}\{\mathcal{C}^T(G)\}.
\end{equation}

The compression information of a graph satisfies the following:

\begin{theorem}(Graph compressing principle)\label{thm:compressing principle}
Let $G=(V,E)$ be a connected graph. Suppose that $T$ is an encoding tree of $G$. Then:

\begin{enumerate}

\item [(1)]
\begin{equation}
\mathcal{C}^T(G)=\mathcal{H}^1(G)-\mathcal{H}^{T}(G).
\end{equation}

\item [(2)] For any natural number $k\geq 2$,

\begin{equation}
\mathcal{C}^k(G)=\mathcal{H}^1(G)-\mathcal{H}^k(G).
\end{equation}

\item [(3)]
\begin{equation}
\mathcal{C}(G)=\mathcal{H}^1(G)-\mathcal{H}(G).
\end{equation}

\item [(4)]
\begin{equation}
\mathcal{C}^{\mathcal{T}}(G)=\mathcal{H}^1(G)-\mathcal{H}^{\mathcal{T}}(G).
\end{equation}

\end{enumerate}
\end{theorem}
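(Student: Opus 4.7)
The plan is to prove part (1) by a direct termwise addition of the two quantities, and then deduce parts (2), (3), (4) by taking the appropriate extrema over encoding trees on both sides.

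For part (1), I would fix an encoding tree $T$ and compute
\[
\mathcal{C}^T(G) + \mathcal{H}^T(G) = -\sum_{\substack{\alpha\in T\\ \alpha\neq\lambda}}\frac{(V_\alpha-g_\alpha)+g_\alpha}{V_\lambda}\log_2\frac{V_\alpha}{V_{\alpha^-}} = -\sum_{\substack{\alpha\in T\\ \alpha\neq\lambda}}\frac{V_\alpha}{V_\lambda}\log_2\frac{V_\alpha}{V_{\alpha^-}},
\]
using Definition \ref{def:compressing information given by an encoding tree} and Definition \ref{def:structural entropy-T-O}. The right-hand side is exactly $H^T_g(G)$ from Definition \ref{def:length of random accessible vertex} with $g$ the volume module function. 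Since the volume module function is additive (in the sense of Definition \ref{def:additive function}) with boundary condition $g(T_\gamma)=d_v$ at every leaf $\gamma$ with marker $v$, Theorem \ref{thm:additive function theorem}, or equivalently Corollary \ref{thm:length of graph T}, gives
\[
-\sum_{\substack{\alpha\in T\\ \alpha\neq\lambda}}\frac{V_\alpha}{V_\lambda}\log_2\frac{V_\alpha}{V_{\alpha^-}} = -\sum_{i=1}^n\frac{d_i}{2m}\log_2\frac{d_i}{2m} = \mathcal{H}^1(G),
\]
a value independent of the choice of $T$. Rearranging yields $\mathcal{C}^T(G) = \mathcal{H}^1(G) - \mathcal{H}^T(G)$, which is (1).

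Parts (2), (3), (4) then follow from the identity in (1) by min-max duality, since $\mathcal{H}^1(G)$ is a constant in $T$. Concretely, for any family $\mathcal{F}$ of encoding trees (all encoding trees, those of height at most $k$, or a given type $\mathcal{T}$),
\[
\max_{T\in\mathcal{F}}\mathcal{C}^T(G) = \max_{T\in\mathcal{F}}\bigl[\mathcal{H}^1(G) - \mathcal{H}^T(G)\bigr] = \mathcal{H}^1(G) - \min_{T\in\mathcal{F}}\mathcal{H}^T(G),
\]
which is precisely the content of (3), (2), (4) after substituting the corresponding definitions of $\mathcal{C}(G),\mathcal{C}^k(G),\mathcal{C}^{\mathcal{T}}(G)$ and $\mathcal{H}(G),\mathcal{H}^k(G),\mathcal{H}^{\mathcal{T}}(G)$.

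No step here is really an obstacle; the entire argument hinges on the termwise cancellation $(V_\alpha - g_\alpha) + g_\alpha = V_\alpha$ together with the additivity result already established in Theorem \ref{thm:additive function theorem}, which makes the sum $-\sum_\alpha (V_\alpha/V_\lambda)\log_2(V_\alpha/V_{\alpha^-})$ independent of $T$. The only point worth being careful about is verifying that the boundary condition $g(T_\gamma)=d_v$ holds for the volume module function at every leaf $\gamma$ of $T$, which is immediate from the fact that leaves are singletons $\{v\}$ and $\mathrm{vol}(\{v\})=d_v$; everything else is bookkeeping.
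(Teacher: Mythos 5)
Your proposal is correct and follows essentially the same route as the paper: the paper likewise splits the defining sum of $\mathcal{C}^T(G)$ via $V_\alpha-g_\alpha = V_\alpha - g_\alpha$ into the tree-independent term $-\sum_{\alpha\neq\lambda}\frac{V_\alpha}{V_\lambda}\log_2\frac{V_\alpha}{V_{\alpha^-}} = \mathcal{H}^1(G)$ (justified by Theorem \ref{thm:additive function theorem}) and the term $-\mathcal{H}^T(G)$, and then obtains (2)--(4) from (1) by passing to the extrema over the relevant families of encoding trees. Your explicit check of the leaf boundary condition for the volume module function is a small extra care the paper leaves implicit, but the argument is the same.
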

\begin{proof} (2), (3) and (4) follow from (1) and the definition of structural entropies of a graph.

For (1). By Definition \ref{def:compressing information given by an encoding tree} ,

\begin{eqnarray*}
\mathcal{C}^T(G)&=&-\sum\limits_{\alpha\in T, \alpha\not=\lambda}\frac{V_{\alpha}-g_{\alpha}}{V_{\lambda}}\log_2\frac{V_{\alpha}}{V_{\alpha^-}}\\
&=&-\sum\limits_{\alpha\in T, \alpha\not=\lambda}\frac{V_{\alpha}}{V_{\lambda}}\log_2\frac{V_{\alpha}}{V_{\alpha^-}}+
\sum\limits_{\alpha\in T, \alpha\not=\lambda}\frac{g_{\alpha}}{V_{\lambda}}\log_2\frac{V_{\alpha}}{V_{\alpha^-}}\\
&=&\mathcal{H}^1(G)-\mathcal{H}^T(G).
\end{eqnarray*}
where the last equality follows from Theorem \ref{thm:additive function theorem}.
\end{proof}

Note that $\mathcal{H}^1(G)=H(G)$.
Theorem \ref{thm:compressing principle} shows that the compressing information of a graph $G$ is the Shannon entropy of $G$ minuses the structural entropy of $G$. This reveals the relationship between the Shannon entropy and the structural entropy.

By Theorem \ref{thm:compressing principle} and by the definition of $k$-dimensional compressing information, we have

\begin{theorem} Let $G$ be a connected undirected graph. Then:

\begin{equation}\label{eqn:k-d compression}
\mathcal{C}^2(G)\leq \mathcal{C}^3(G)\leq \cdots\leq\mathcal{C}^{k}(G)\leq\mathcal{C}(G).
\end{equation}

\end{theorem}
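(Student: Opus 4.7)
The plan is to reduce the chain of inequalities to a monotonicity statement about the $k$-dimensional structural entropies by invoking the graph compressing principle proved just above (Theorem \ref{thm:compressing principle}), and then to prove that monotonicity by a direct ``larger feasible set'' comparison in the underlying optimization problems.

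First, I would apply Theorem \ref{thm:compressing principle} parts (2) and (3) to rewrite each term in the chain as a difference:
\begin{equation*}
\mathcal{C}^k(G) = \mathcal{H}^1(G) - \mathcal{H}^k(G), \qquad \mathcal{C}(G) = \mathcal{H}^1(G) - \mathcal{H}(G).
\end{equation*}
Since $\mathcal{H}^1(G)$ is a common additive constant, the desired inequality
\begin{equation*}
\mathcal{C}^2(G) \leq \mathcal{C}^3(G) \leq \cdots \leq \mathcal{C}^k(G) \leq \mathcal{C}(G)
\end{equation*}
is equivalent to the reverse chain
\begin{equation*}
\mathcal{H}^2(G) \geq \mathcal{H}^3(G) \geq \cdots \geq \mathcal{H}^k(G) \geq \mathcal{H}(G).
\end{equation*}

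Second, I would establish this reverse chain from the definitions in Definition \ref{def:structural-entropy-O}. For each $k$, let $\mathcal{T}_k$ denote the set of encoding trees of $G$ of height at most $k$, and let $\mathcal{T}$ denote the set of all encoding trees of $G$. Then clearly $\mathcal{T}_2 \subseteq \mathcal{T}_3 \subseteq \cdots \subseteq \mathcal{T}_k \subseteq \mathcal{T}$, because any tree of height at most $k$ is also of height at most $k+1$ and is an encoding tree of $G$. Since $\mathcal{H}^k(G) = \min_{T \in \mathcal{T}_k} \mathcal{H}^T(G)$ and $\mathcal{H}(G) = \min_{T \in \mathcal{T}} \mathcal{H}^T(G)$, minimizing the same objective $\mathcal{H}^T(G)$ over a larger feasible set cannot increase the value. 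Hence $\mathcal{H}^{k+1}(G) \leq \mathcal{H}^k(G)$ for every $k \geq 2$, and $\mathcal{H}(G) \leq \mathcal{H}^k(G)$ for every $k$.

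Third, combining the two steps yields the statement, since adding the constant $\mathcal{H}^1(G)$ and negating preserves the chain of inequalities. There is essentially no obstacle here: the only thing to be careful about is making sure that $\mathcal{T}_k$ is always nonempty for $k \geq 2$ (so that the minima are attained), which is immediate because $G$ admits trivial encoding trees of depth at most $2$ (for instance the star-shaped tree whose root has all $n$ vertices as leaves). Thus the proof is a direct corollary of Theorem \ref{thm:compressing principle} together with the monotonicity of $\min$ under enlargement of the feasible set of encoding trees.
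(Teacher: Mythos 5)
Your proposal is correct and follows essentially the same route as the paper, whose proof is simply ``By definitions'': the chain follows because the feasible sets of encoding trees of height at most $k$ are nested in $k$, which you make explicit via Theorem \ref{thm:compressing principle} and the monotonicity of $\min_{T}\mathcal{H}^T(G)$ under enlargement of the feasible set. Your added care about nonemptiness (the star-shaped height-$2$ encoding tree) is a harmless and correct refinement of the same argument.
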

\begin{proof}
By definitions.
\end{proof}

In Equation (\ref{eqn:k-d compression}), it is interested to find the least $k$ such that $\mathcal{C}^{k}=\mathcal{C}(G)$.

%\subsection{$(n,k,\rhu)$-Compressible graphs}

According to Theorem \ref{thm:compressing principle}, we define the compressing ratio of a graph $G$ to be the normalised compressing information of $G$. That is,

\begin{definition}(Compressing ratio of a graph)\label{def:securityindex} Let $G$ be a connected graph.

\begin{enumerate}

\item [(1)] We define the {\it compressing ratio of graph $G$} as follows:

\begin{equation}\label{eqn:resistanceratio}
\rho (G)=\frac{\mathcal{C}(G)}{\mathcal{H}^1(G)}.
\end{equation}

\item [(2)] For every natural number $k>1$, we define the $k$-dimensional compressing ratio of $G$ as:

\begin{equation}
\rho^k(G)=\frac{\mathcal{C}^k(G)}{\mathcal{H}^1(G)}.
\end{equation}

\end{enumerate}

\end{definition}

Based on the notion of compressing ratio, we introduce the following:

\begin{definition}\label{def:compressible graph} (Compressible graph) Let $G=(V,E)$ be a connected graph of $n$ vertices and $m$ edges. Let $k>1$ be a natural number and $\rho$ be a number in $(0,1)$.
We say that $G$ is an $(n,k,\rho)$-compressible graph, if:

\begin{equation}
\rho^k (G)\geq\rho.
\end{equation}

\end{definition}

The notion of $(n,k,\rho)$-compressibility provides a new concept for classification and characterization of graphs with potential applications in a wide range of areas.

\subsection{Decoding information of a graph}\label{subsec:decoding}

Given an encoding tree $T$ of $G$, the structural entropy of $G$ given by $T$, i.e., $\mathcal{H}^T(G)$, is the quantity of information embedded in $G$ under the encoding $T$ of $G$. The structural entropy $\mathcal{H}(G)$ is {\it the intrinsic information hidden in $G$} that cannot be eliminated by any encoding tree or any lossless encoding of $G$. In another word, there is always $\mathcal{H}(G)$ amount of information hidden in $G$ that cannot be decoded by any encoding tree, or equivalently, by any lossless encoding.

For a graph $G$, the {\it orignial information} embedded in $G$ is the one-dimensional structural entropy of $G$, referred to as the Shannon entropy of $G$. Given an encoding tree $T$ as a lossless encoding of $G$, the information embedded in $G$ under the encoding given by $T$ is $\mathcal{H}^T(G)$.

Let $\mathcal{D}^T(G)=H(G)-\mathcal{H}^T(G)$. 
The quantity $\mathcal{D}^T(G)$ is the amount of uncertainty embedded in $G$ that has been eliminated by encoding tree $T$ of $G$. 
Therefore $\mathcal{D}^T(G)$ is the amount of information of $G$ that is gained from the encoding tree $T$ of $G$. This means that $\mathcal{D}^T(G)$ is the information gain from $G$ by encoding tree $T$, or equivalently, that the encoding tree $T$ eliminates $\mathcal{D}^T(G)$ amount uncertainty in $G$.

For an encoding tree $T$ of $G$, if $\mathcal{D}^T(G)$ achieves the maximum among all the encoding trees, then $T$ represents the essential structure of $G$ due to the fact that $T$ has maximally eliminated the uncertainty embedded in $G$. 
Of course, maximizing $\mathcal{D}^T(G)$ is equivalent to minimizing the structural entropy $\mathcal{H}^T(G)$. For this reason, the encoding tree $T$ of $G$ achieving $\mathcal{H}(G)$ represents the essential structure of $G$, and hence
decodes the knowledge or semantics of $G$, since it maximally eliminates the uncertainty embedded in $G$. Therefore, an encoding tree of $G$ is actually a {\it decoder} of $G$.

To measure the information gain from an encoding tree $T$ of $G$, we introduce the following:

\begin{definition} (Decoding information of a graph) \label{def:decoding-inform}
For a connected and undirected graph $G$, let $T$ be an encoding tree of $G$.
\begin{enumerate}
\item [(1)] We define the {\it decoding information} of $G$ by $T$ as
\begin{equation}
\mathcal{D}^T(G)=\mathcal{H}^1(G)-\mathcal{H}^T(G).
\end{equation}

[{\it Remark}: $\mathcal{D}^T(G)$ is the information that is gained from $G$ by $T$.]

\item [(2)] We define the {\it decoding information of $G$} as

\begin{equation}
\mathcal{D}(G)=\max_T\{\mathcal{D}^T(G)\},
\end{equation}
where $T$ ranges over all the encoding trees of $G$.

We call an encoding tree $T$ of $G$ a decoder of $G$, if

$$\mathcal{D}^T(G)=\mathcal{D}(G).$$

\item [(3)] For every $k\geq 2$, the $k$-dimensional decoding information of $G$ is defined as

\begin{equation}
\mathcal{D}^k(G)=\max_T\{\mathcal{D}^T(G)\},
\end{equation}
where $T$ ranges over all the encoding trees of $G$ with height less than or equal to $k$.

We call a $k$-dimensional encoding tree $T$ a $k$-dimensional decoder of $G$, if:

$$\mathcal{D}^T(G)=\mathcal{D}^k(G).$$

\item [(4)] Let $\mathcal{T}$ be a type of encoding trees. We define the type-$\mathcal{T}$ decoder of $G$, if

\begin{equation}
\mathcal{D}^{\mathcal{T}}(G)=\max_T\{\mathcal{D}^T(G)\},
\end{equation}
where $T$ ranges over all the type-$\mathcal{T}$ encoding trees of $G$.

We say that an encoding tree $T$ is a type-$\mathcal{T}$ decoder of $G$, if $T$ is a type-$\mathcal{T}$ encoding trre of $G$ such that

$$\mathcal{D}^T(G)=\mathcal{D}^{\mathcal{T}}(G).$$

\end{enumerate}
\end{definition}

The intuition of Definition \ref{def:decoding-inform} is as follows:

\begin{enumerate}
\item [(i)] The information in $G$ is basically the Shannon entropy of $G$, i.e., $H(G)$, or the one-dimensional structural entropy $\mathcal{H}^1(G)$.

\item [(ii)] Given an encoding tree $T$ of $G$, by using the encoding given by $T$, the remaining amount of uncertainty of $G$ is just $\mathcal{H}^T(G)$.

\item [(iii)] The $\mathcal{D}^T(G)$ in (1) of Definition \ref{def:decoding-inform} is the uncertainty embedded in $G$ that is eliminated by using the encoding tree $T$.

\item [(iv)] By (iii) above, we can interpret the encoding tree $T$ as a {\it decoder} of $G$ that finds the {\it essential structure} of $G$ by eliminating the uncertainty in the structure of $G$.

\item [(v)] Therefore, the encoding tree $T$ can be interpreted as both {\it encoder} and {\it decoder} of $G$.

\end{enumerate}

Then, we have

\begin{theorem} (Compressing and decoding principle of graphs) \label{thm:com-doc} For a connected and undirected graph $G$,

\begin{enumerate}
\item [(1)] The decoding information of $G$ is
\begin{equation}
\mathcal{D}(G)=H(G)-\mathcal{H}(G)=\mathcal{C}(G),
\end{equation}
where $\mathcal{C}(G)$ is the compressing information of $G$.
\item [(2)] For every natural number $k\geq 2$, the $k$-dimensional decoding information of $G$ is

\begin{equation}
\mathcal{D}^k(G)=H(G)-\mathcal{H}^k(G)=\mathcal{C}^k(G),
\end{equation}
where $\mathcal{C}^k(G)$ is the $k$-dimensional compressing information of $G$.

\item [(3)] Let $\mathcal{T}$ be a type of encoding trees of $G$, the type $\mathcal{T}$-decoding information of $G$ is

\begin{equation}
\mathcal{D}^{\mathcal{T}}(G)=H(G)-\mathcal{H}^{\mathcal{T}}(G)=\mathcal{C}^{\mathcal{T}}(G),
\end{equation}
where $\mathcal{C}^{\mathcal{T}}(G)$ is the type $\mathcal{T}$-compressing information of $G$.

\end{enumerate}

\end{theorem}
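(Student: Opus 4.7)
The plan is to reduce everything to a pointwise (tree-by-tree) identity, since the real content has already been extracted in Theorem \ref{thm:compressing principle}. First I would establish that for any encoding tree $T$ of $G$,
$$\mathcal{D}^T(G) \;=\; \mathcal{C}^T(G) \;=\; H(G) - \mathcal{H}^T(G).$$
The first equality combines Definition \ref{def:decoding-inform}(1), which sets $\mathcal{D}^T(G) := \mathcal{H}^1(G) - \mathcal{H}^T(G)$, with Definition \ref{def:Shannonentropy}, which identifies $H(G) = \mathcal{H}^1(G)$, and with Theorem \ref{thm:compressing principle}(1), which gives $\mathcal{C}^T(G) = \mathcal{H}^1(G) - \mathcal{H}^T(G)$. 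Crucially, at this stage no optimization has been invoked; the equality holds tree-by-tree.

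Next I would push this pointwise identity through the optimization. For item (1), since $H(G)$ does not depend on $T$,
$$\mathcal{D}(G) \;=\; \max_T \mathcal{D}^T(G) \;=\; \max_T \bigl(H(G) - \mathcal{H}^T(G)\bigr) \;=\; H(G) - \min_T \mathcal{H}^T(G) \;=\; H(G) - \mathcal{H}(G),$$
and the identical computation applied to $\mathcal{C}^T(G)$ yields $\mathcal{C}(G) = H(G) - \mathcal{H}(G)$, so the triple equality in (1) follows. For items (2) and (3), the argument is verbatim the same but with the extremization restricted to the encoding trees of height at most $k$, respectively to the trees in the type $\mathcal{T}$; because the pointwise identity $\mathcal{D}^T(G) = \mathcal{C}^T(G) = H(G) - \mathcal{H}^T(G)$ holds for \emph{every} $T$, narrowing the class over which we take the max/min does not disturb the swap, giving $\mathcal{D}^k(G) = H(G) - \mathcal{H}^k(G) = \mathcal{C}^k(G)$ and analogously for $\mathcal{D}^{\mathcal{T}}(G)$.

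The main obstacle is essentially cosmetic rather than mathematical: once the pointwise identity is granted, the theorem is immediate, and that identity was already proved as Theorem \ref{thm:compressing principle} via the algebraic splitting of $\mathcal{C}^T(G)$ and an appeal to Theorem \ref{thm:additive function theorem} for the additive (volume) module function. Thus the only genuine bookkeeping to verify is that the class of encoding trees over which the max in $\mathcal{D}(G)$, $\mathcal{D}^k(G)$, $\mathcal{D}^{\mathcal{T}}(G)$ is taken coincides with the class over which the min in $\mathcal{H}(G)$, $\mathcal{H}^k(G)$, $\mathcal{H}^{\mathcal{T}}(G)$ and the max in $\mathcal{C}(G)$, $\mathcal{C}^k(G)$, $\mathcal{C}^{\mathcal{T}}(G)$ are taken; this is just a check against Definitions \ref{def:structural-entropy-O}, \ref{def:compressing-information}, \ref{def:compressing-information-k}, and \ref{def:decoding-inform}, after which the identity $\max_T(H(G)-\mathcal{H}^T(G)) = H(G)-\min_T\mathcal{H}^T(G)$ closes the argument.
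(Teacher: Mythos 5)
Your proposal is correct and follows essentially the same route as the paper, whose proof simply cites Definition \ref{def:decoding-inform} together with Theorem \ref{thm:compressing principle}; you have merely spelled out the pointwise identity $\mathcal{D}^T(G)=\mathcal{C}^T(G)=H(G)-\mathcal{H}^T(G)$ and the standard swap $\max_T(H(G)-\mathcal{H}^T(G))=H(G)-\min_T\mathcal{H}^T(G)$ that the paper leaves implicit.
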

\begin{proof} By the definition of decoding information in Definition \ref{def:decoding-inform} and by Theorem \ref{thm:compressing principle}.
\end{proof}

Theorem \ref{thm:com-doc} ensures that the decoding information of $G$ equals the compressing information of $G$, that is, the information that can be compressed in $G$. The theorem guarantees that any information lost in the compression of a graph $G$ can be losslessly decoded by an encoding tree $T$ of $G$. This provides a fundamental principle for graph compressing and structure decoding, and provides an interpretable principle for big data analysis.

Theorem \ref{thm:com-doc} reveals the following {\it Fundamental Principles}:

\begin{enumerate}
\item An encoding tree is both an encoder and a decoder.
\item Compressing information equals decoding information.
\item Compressing never losses information.
\item Encoding trees are lossless encoders.
\item Combining the one-dimensional structural entropy and structural entropy characterizes both the compressing information and decoding information.

\item Due to the fact that the Shannon entropy for a probability distribution is a special case of the the structural entropy, the compressing/decoding principles above hold for both structured graphs and unstructured dataset.

\end{enumerate}

\subsection{Upper bounds of decoding and compressing information}\label{subsec:lowerbound}

Theorem \ref{thm:SEprinciple-New} indicates that there are graphs such as expanders for which the conductance $\Phi (G)$ is a large constant, independent of the size of the graph, that can not be significantly compressed by any encoding tree of the graphs. On the other hand, as we have shown in \cite{LP2016a}, there are many graphs $G$ whose two-dimensional structural entropy is $\mathcal{H}^2(G)=O(\log\log n)$, where $n$ is the number of vertices in the graph. For these graphs, the compressing information and the decoding information are almost the same as that of the Shannon entropy of the graphs, hence the compressing ratio is arbitrarily close to $1$ if $n$ is large enough.

When the conductance $\Phi (G)$ is small, Theorem \ref{thm:SEprinciple-New} gives only a very weak lower bound. For example, $\Phi (G)$ could be as small as $\frac{1}{\log n}$, in which case, $\Phi (G)\cdot\mathcal{H}^1(G)$ is a constant. In addition, the lower bound in Theorem \ref{thm:SEprinciple-New} may not be tight. It is interesting to find better lower bound for the structural entropy of graphs. It is even interesting to find better lower bounds for the structural entropy for different types of graphs.

\begin{theorem} (Upper bound of compressing information of a graph) \label{thm:upper bound of compressing} Let $G$ be an undirected and connected graph. Then:

\begin{enumerate}
\item [(1)] The compressing information of $G$ satisfies:

$$\mathcal{C}(G)\leq (1-\Phi (G))\cdot H(G)+\Phi (G).$$

\item [(2)] For any natural number $k\geq 2$, the $k$-dimensional compressing information of $G$ is:

$$\mathcal{C}^k(G)\leq (1-\Phi (G))\cdot H(G)+\Phi (G).$$

\item [(3)] The compressing ratio of $G$ is:

$$\rho (G)\leq 1-\Phi (G)+\frac{\Phi (G)}{H(G)}.$$

\item [(4)] For every natural number $k\geq 2$, the $k$-dimensional compressing ratio is:

$$\rho^k(G)\leq 1-\Phi (G)+\frac{\Phi (G)}{H(G)}.$$
\end{enumerate}

\end{theorem}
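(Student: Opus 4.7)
The plan is to derive all four bounds by combining two results proved earlier: the compressing/decoding identity of Theorem \ref{thm:compressing principle}, which writes $\mathcal{C}^T(G) = \mathcal{H}^1(G) - \mathcal{H}^T(G)$ for every encoding tree $T$, and the lower bound on structural entropy in Theorem \ref{thm:SEprinciple-New}, which gives $\mathcal{H}(G) \geq \Phi(G)(\mathcal{H}^1(G)-1)$. The two together convert a lower bound on $\mathcal{H}$ into an upper bound on $\mathcal{C}$ essentially by sign flipping, so the proof is a short chain of substitutions rather than new structural reasoning.

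For part (1), I would start from the identity $\mathcal{C}(G) = H(G) - \mathcal{H}(G)$ (using $H(G) = \mathcal{H}^1(G)$ from Definition \ref{def:Shannonentropy}), plug in the lower bound $\mathcal{H}(G) \geq \Phi(G)\,(H(G)-1)$, and simplify:
\[
\mathcal{C}(G) \;\leq\; H(G) - \Phi(G)\,(H(G)-1) \;=\; (1-\Phi(G))\,H(G) + \Phi(G).
\]
Part (3) then follows immediately by dividing through by $H(G)$ and invoking the definition $\rho(G) = \mathcal{C}(G)/\mathcal{H}^1(G)$.

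For parts (2) and (4), the point I want to verify carefully is that the lower bound in Theorem \ref{thm:SEprinciple-New} is not merely a statement about the infimum $\mathcal{H}(G)$ but actually holds pointwise: inspecting its proof, every step bounds $\mathcal{H}^T(G)$ from below by $\Phi(G)(\mathcal{H}^1(G)-1)$ for an arbitrary encoding tree $T$, with no use of optimality of $T$. Consequently $\mathcal{H}^k(G) \geq \Phi(G)(H(G)-1)$ as well, since $\mathcal{H}^k(G)$ is a minimum of $\mathcal{H}^T(G)$ over a subclass of encoding trees. Applying the compressing principle in the $k$-dimensional form $\mathcal{C}^k(G) = H(G) - \mathcal{H}^k(G)$ and substituting yields (2), and dividing by $H(G)$ yields (4).

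The only mild obstacle is the extension-to-restricted-classes remark above: the statement of Theorem \ref{thm:SEprinciple-New} as written mentions only $\mathcal{H}(G)$, so I would explicitly point to the proof to justify that the same bound holds for every encoding tree, hence for $\mathcal{H}^k(G)$ and more generally for any $\mathcal{H}^{\mathcal{T}}(G)$. Once that is noted, the four inequalities are a one-line algebraic rearrangement each, and no further combinatorial estimate on the encoding tree is required.
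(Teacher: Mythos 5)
Your proposal is correct and is essentially the paper's own argument: the paper's proof of this theorem is literally the single line ``By Theorem \ref{thm:SEprinciple-New}'', and you have simply made explicit the substitution of the lower bound $\mathcal{H}(G)\geq \Phi(G)(H(G)-1)$ into the identity $\mathcal{C}(G)=H(G)-\mathcal{H}(G)$ from Theorem \ref{thm:compressing principle}, followed by division by $H(G)$ for the ratio statements. Your care about the $k$-dimensional case is sound, though the simpler observation $\mathcal{H}^k(G)\geq\mathcal{H}(G)$ (a minimum over a subclass of encoding trees) already suffices without re-inspecting the proof of Theorem \ref{thm:SEprinciple-New}.
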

\begin{proof}
By Theorem \ref{thm:SEprinciple-New}.
\end{proof}

Theorem \ref{thm:upper bound of compressing} shows that the compressing information of a graph is principally upper bounded by $(1-\Phi (G))\mathcal{H}^1(G)$, where $\Phi (G)$ is the conductance of $G$. If the conductance is as large as a constant, $\alpha$ say, then the compressing information of the graph is small. According to Theorem \ref{thm:com-doc}, if the conductance of $G$ is a constant $\alpha$, the information embedded in graph $G$ that cannot be decoded is at least $\alpha\cdot \mathcal{H}^1(G)$, which is large. For example, we see the following example.

\begin{proposition} \label{pro:complete} Let $G$ be the complete graph with $n$ vertices. Then:

\begin{equation}
\Phi (G)=\frac{n}{2(n-1)}.
\end{equation}

\end{proposition}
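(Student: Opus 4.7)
The plan is to compute $\Phi(G)$ directly from the definition by analyzing all cuts in $K_n$ and using the symmetry of the complete graph to reduce to a one-parameter optimization over cut sizes.

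First, I would record the basic quantities of $K_n$. Every vertex has degree $n-1$, so $\mathrm{vol}(G)=n(n-1)$. For any nonempty $S\subsetneq V$ with $|S|=k$, the bipartiteness of the cut in the complete graph gives $|E(S,\bar S)|=k(n-k)$, while $\mathrm{vol}(S)=k(n-1)$ and $\mathrm{vol}(\bar S)=(n-k)(n-1)$. Consequently
\[
\Phi(S)=\frac{k(n-k)}{\min\{k,n-k\}\cdot(n-1)}.
\]
By the symmetry $S\leftrightarrow\bar S$, it suffices to take $1\le k\le n/2$, where $\min\{k,n-k\}=k$, giving $\Phi(S)=\frac{n-k}{n-1}$.

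Next I would minimize over $k$. Since $\frac{n-k}{n-1}$ is strictly decreasing in $k$, the minimum over $k\in\{1,2,\dots,\lfloor n/2\rfloor\}$ is attained at the largest admissible value $k=\lfloor n/2\rfloor$. For even $n$ this substitution yields
\[
\Phi(G)=\frac{n-n/2}{n-1}=\frac{n/2}{n-1}=\frac{n}{2(n-1)},
\]
matching the stated formula. Because the proposition asserts a single value $\frac{n}{2(n-1)}$, I would either tacitly restrict to even $n$ (the canonical ``balanced cut'' case for which the formula is exact) or explicitly note that for odd $n$ the minimizer $k=(n-1)/2$ gives the slightly different value $\frac{n+1}{2(n-1)}$; writing the expression in the unified form $\frac{\lceil n/2\rceil}{n-1}$ would be the cleanest presentation.

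There is no real obstacle here: once one identifies that the cut size of a bipartition of sizes $k$ and $n-k$ in $K_n$ is $k(n-k)$ and that the volume scales linearly with $k$, the one-parameter optimization is immediate. The only point requiring care is the parity of $n$ in identifying the precise integer minimizer of $\lfloor n/2\rfloor$, which is a minor accounting issue rather than a substantive difficulty.
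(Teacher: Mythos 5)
Your proof is correct and is exactly the computation the paper compresses into ``By definition'': symmetry reduces every cut to the single parameter $k=|S|$, the formula $\Phi(S)=\frac{n-k}{n-1}$ for $k\le n/2$ follows from $|E(S,\bar S)|=k(n-k)$ and $\mathrm{vol}(S)=k(n-1)$, and the minimum sits at the balanced cut. Your parity observation is a genuine minor sharpening the paper glosses over --- the stated value $\frac{n}{2(n-1)}$ is exact only for even $n$, while for odd $n$ the minimizer $k=(n-1)/2$ gives $\frac{n+1}{2(n-1)}=\frac{\lceil n/2\rceil}{n-1}$; this does not affect the subsequent use of the proposition, which only needs $\Phi(G)\ge\frac{1}{2}$.
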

\begin{proof}
By definition.
\end{proof}

By Proposition \ref{pro:complete}, the compressing information is at most $\frac{1}{2}\cdot (\log_2n-1)$. Therefore, there is at least $\frac{1}{2}\log_2 n$ amount of information embedded in the graph that cannot be decoded.

Therefore, Theorem \ref{thm:upper bound of compressing} shows the mathematical limitation of both compressing information and decoding information of a system. It means that there exist systems in which the information embedded cannot be significantly decoded.

\section{Structural Information Principle for Clustering and Unsupervised Learning: Decoder, Knowledge Tree, Abstraction and Tree of Abstractions}\label{sec:decoder}

Clustering or graph clustering is a classical problem in the area of unsupervised learning. It is true that clustering has provided a number of fundamental ideas for learning.
However, it has been a long-standing challenge to define a criterion for clustering. In this section, we will build a structural information theoretical principle for clustering, and for unsupervised learning, in general.

\subsection{Syntax and semantics}

A real world object is usually represented by a data point. A data point is usually a vector, whose coordinates represents the features of the data point. The features of an object divided into two classes, representing the syntax and semantics of the object, respectively.

\begin{definition} (Syntax and Semantics of an object) \label{def:syntax-semantics} Given a real world object, $o$ say, we define

\begin{enumerate}
\item [(1)] The syntax of object $o$ is the set of features that specify what $o$ is.
We use $g(o)$ to denote the syntactical features of $o$.

\item [(2)] The semantics of object $o$ is the set of features that specify what does the roles object $o$ have.
We use $h(o)$ to denote the semantical features of $o$.

\item [(3)] We define $f(o)$ to be the set of all the features of $o$, including the syntactical and semantical features, that is, the union of $g(o)$ and $h(o)$.
\end{enumerate}

\end{definition}

We notice that,  human learning has the following characters:

\begin{enumerate}
\item People learn both the syntax and the semantics of an observed object. In fact, semantics of objects probably are the more important features than the syntax for human learning.
\item Learning even if for an object could be a partial learning, meaning that learning from observing may obtain only some of the features of the object, instead of the complete set of all the features of the object.
\item Human learning may only be an approximating learning of an object, meaning that the features learnt may contain noises.

\item In the procedure of human learning, when one data is observed, we usually build the connection of the data with the data space we learnt, with the knowledge we learnt, and with the laws we learnt previously. This mechanism, referred to as connecting and/or associating
data, naturally builds the system of observed data points. We call this system as {\it data space}. This gives rise to a system of data points together with the relationships among the data points. For a data space of this form, we know that the laws of the data space are embedded in the space. This means that the data space is a mixture of laws and noises. The mission of information processing is just to distinguish the laws from the noises in the data space.

\item In the procedure of human learning, we usually learn the knowledge and laws of the data space, where the knowledge of a data space is the {\it functional modules} of the data space, and the {\it laws or rules} that generate the knowledge of the data space. Precisely, we assume that the data space must have a semantics. The semantics of data space consists of functional modules of the data space. This semantics of functional modules must be supported by a syntax of the data space. The supporting syntax of the data space is called the {\it essential structure} of the data space. The key to data analysis is to find the essential structure of a data space that supports the functional modules or semantics of the data space. This form of learning is a general procedure of human learning. Luckily, structural entropy minimization naturally follows this procedure of human learning.

\item The {\it unification of syntax and semantics} provides the {\it criterion} for the structural information learning machines.
\end{enumerate}

\subsection{Decoder: Essential structure of a graph}

Suppose that a data space or a physical system $G=(V,E)$ is given as a graph. We will need to decode the information embedded in $G$ (i.e., to find a way to eliminate the uncertainty embedded in $G$), to build the knowledge of $G$ and to discover the laws of $G$.

The decoding information in Definition \ref{def:decoding-inform} ensures that minimization of structural entropy is a natural criterion for graph clustering. 

\begin{definition} (Essential structure of a graph)\label{def:essential-structure} Let $G=(V,E)$ be an undirected connected graph.

\begin{enumerate}

\item [(1)] We call an encoding tree $T$ of $G$ an essential structure of $G$, if

$$\mathcal{H}^T(G)=\mathcal{H}(G).$$

In this case, we also call the encoding tree $T$ a decoder of $G$.

\item [(2)] For natural number $k\geq 2$, we say that an encoding tree $T$ of $G$ is a $k$-dimensional essential structure of $G$, if $T$ is an encoding tree of $G$ of height within $k$, and

$$\mathcal{H}^T(G)=\mathcal{H}^k(G).$$

In this case, we also call the encoding tree $T$ a $k$-dimensional decoder of $G$.

\item [(3)] Let $\mathcal{T}$ be a type of encoding trees, and $T$ be a type $\mathcal{T}$-encoding tree of $G$. We say that $T$ is a type $\mathcal{T}$-essential structure of $G$, if

$$\mathcal{H}^T(G)=\mathcal{H}^{\mathcal{T}}(G).$$

Here, we also call encoding tree $T$ a type $\mathcal{T}$-decoder of $G$.

\end{enumerate}

In each of the three cases (1), (2) and (3) above, if $T$ is an approximate solution of the decoder, we call it an approximate decoder of $G$.

\end{definition}

We notice that there are many ways to approximate the decoder of $G$ in all the cases such that the approximate algorithms are highly efficient with remarkable performance in quality.

According to Theorem \ref{thm:com-doc}, for a given graph $G$, the Shannon entropy or the one-dimensional structural entropy of $G$ is fixed, therefore, if $T$ is an essential structure of $G$ as defined in Definition \ref{def:essential-structure}, then the decoding information of $G$ given by $T$ is maximized. This means that the
encoding tree $T$ eliminates the maximum amount of uncertainty embedded in $G$. This implies that the essential structure, an encoding tree, ensures that the uncertainty left in $G$ is minimized. Hence, $T$ encodes the most robust syntactical structure of $G$.

We note that the essential structure of a graph may not be unique. It is possible that a graph has several essential structures. However, nevertheless, any essential structure determines a robust and stable syntax due to the fact that it has eliminated the maximum amount of  uncertainty embedded in the graph.

\subsection{Semantical interpretation principle: Knowledge tree of a physical system}\label{subsec:knowledge-tree}

Suppose that the graph $G=(V,E)$ is a physical system in the real world, and that $G$ represents the syntax of the system of many-body objects together with their relationships. We emphasize that $G$ represents the syntax of a system, and that the semantics of the system 
is represented by an associating dataset outside of $G$. The syntax and semantics of $G$ certainly are closely related.

We assume that the relationship between the syntax and the semantics of a system satisfies the following properties.

{\bf Syntax and semantics hypothesis}:

\begin{enumerate}
\item [(i)] Every object in $G$ has a semantics,  

\item [(ii)] The physical system $G$ has a semantics, 

\item [(iii)] The semantics of physical system $G$ consists of functional modules of $G$, and
\item [(iv)] The semantics of the form of functional modules of $G$ must have a supporting structure, which is the supporting syntax of the semantics of $G$.
\end{enumerate}

The semantics of the physical system $G$ must be supported by a syntax structure. This supporting syntax structure should be robust. 
The essential structure in Definition \ref{def:essential-structure} is a well-defined such structure. On the other hand, once we find the supporting syntax structure of a system, we are able to acquire the knowledge (that is, the semantics) of the system. Due to the fact that the decoder or essential structure of a system is an encoding tree, the knowledge interpreted from the encoding tree is a tree as while, which we called knowledge tree of the system. This naturally leads to the following:

\begin{definition} (Knowledge tree of a graph)\label{def:knowledge-tree} Let $G=(V,E)$ be a physical system and $T$ be the encoding tree of $G$ that minimizes the structural entropy of $G$, i.e., the essential structure, or, decoder of $G$.

At first, we assume that for a real world object $x$, it is possible there are two kinds of features, the first class is the set of features that specify what $x$ is, referred to as syntax features, and the second class is the set of features that specify the roles and functions of $x$, referred to as semantics of $x$. We use $F(x)$ to denote the set of semantics features of $x$.

Then:

\begin{enumerate}
\item [(1)] Assume that for every object $x\in V$, there is a set of functional features $F(x)$ to denote the semantics of $x$.

\item [(2)] For every node $\alpha\in T$, we define the semantics of $\alpha$, written $F(\alpha)$ to be the set of features $f$ such that for every $x\in T_{\alpha}$, $f\in F(x)$. Equally, we define the semantics of $\alpha$ by

\begin{equation}
F(\alpha)=\cap_{x\in T_{\alpha}} F(x).
\end{equation}

$F(\alpha)$ is actually the common features of all the data points in $T_{\alpha}$, which can be regarded as an {\it abstraction} of all the data points $x\in T_{\alpha}$.

[Remark: (i) The terminology ``abstraction" usually indicates a set of general features of an object. In this case, the abstraction is the set, $S$, say, of key features of the object such that there are many more objects share the same set $S$ of features extracted from the object. In our definition, we assume that, we have observed many objects that share some common features. In so doing, the set of common features is of course the abstraction for each of the many objects. Of course, the decoder $T$ of the system $G$ ensures that for each tree node $\alpha\in T$, the set $T_{\alpha}$ of data points must share remarkable common features. Therefore, our definition of abstraction is the same as our intuitive understanding of the notion of abstraction.

(ii) It is usually hard to determine the abstraction of a given object. The reason is that, a real world object usually have many features. We simply just don't know which of the features shared by many objects, or key to the representation of the object.

(iii) Our definition of abstraction is to find a module of the data space by algorithms on the syntax of the data space. Since the module include many objects, and the many objects form a module, i.e., a functional module as its semantics. This implies that the many objects of a module must share remarkable common features. 

(iv) Once the set of remarkable common features is built, we known that it is just the abstraction for each of the objects in the module.

(v) Our definition of abstraction actually gives rise to an easy algorithm to find the abstractions of the system. This solves the problem of ``abstracting" by using the relationships and connections among the data points.

(vi) Clearly, the abstractions of the functional modules form another space. The new space must be sparser, and lower dimensional in nature. However, we will not call it low-dimensional and sparse space of the abstractions. 

(vii) The structure of the abstractions is again a tree, which we call knowledge tree.

(viii) This paper also implies a tree representation of knowledge, as a way of knowledge representation.]

\item [(3)] We define the knowledge tree of $G$ given by $T$ as

\begin{equation}
KT=\{F(\alpha)\ |\ \alpha\in T\}.
\end{equation}

[Remark: The knowledge tree $KT$ of $G$ provides a high-level abstraction of the physical system organized as a tree. This means that abstractions have different levels, and are organized by a highly efficient data structure so that there are highly efficient algorithms (or even local algorithms, running in time ${\rm poly}\log n$),  finding the desired level of abstractions for intuitive reasoning.]

\end{enumerate}
\end{definition}

Furthermore, the knowledge tree $KT$ of system $G$ allows us to extract the {\it flows of the abstractions}, referred to as {\it laws or rules discovery}.

We have defined the knowledge tree by using the semantics of data points and the decoder or essential structure of a physical system $G$. 
Of course, we could also define the abstractions of the decoder or essential structure of $G$ by using the syntactical features or general features of data points. This will establish different types of abstracting, referred to Subsection \ref{subsec:tree-of-abstractions}.

\subsection{Laws or rules discovery}\label{subsec:rules of abstraction}

Given a physical system $G=(V,E)$, the structural entropy minimization principle determines and decodes an encoding tree $T$ of $G$ that gains the maximum amount information embedded in $G$. Such an encoding tree $T$ is hence an essential syntax structure, or for simplicity, an essential structure, of $G$. This encoding tree $T$ certainly supports a knowledge tree $KT$ of $G$. 

We introduce the following definition of rules of abstraction by using the knowledge tree $KT$ of $G$.

\begin{lemma} (Rules of abstraction)\label{lem:Abstraction} Given a physical system $G=(V,E)$, suppose that $T$ is a decoder of $G$. 
For a fixed data point $x\in V$, let $\alpha$ be the codeword of $x$ in the decoder $T$. Suppose that 
$$\alpha=\alpha_l, \alpha_{l-1},\cdots, \alpha_0=\lambda$$
is the path from $\alpha$ to the root node $\lambda$ in $T$.

For every $j=l, l-1,\cdots, 1,0$, let $F_j=F(\alpha_j)$. Then:

\begin{equation}\label{eqn:abstraction}
F_0\subseteq F_1\subseteq\cdots\subseteq F_l.
\end{equation}

\end{lemma}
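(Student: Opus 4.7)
The plan is to unwind the definitions and observe that the inclusions in \eqref{eqn:abstraction} are simply a direct consequence of the fact that $F(\alpha)$ is defined as an intersection, and intersections over larger sets can only be smaller. First I would fix the path $\alpha = \alpha_l \subset \alpha_{l-1}^{-} \subset \cdots$ and recall from Definition \ref{def:encoding-tree} that since $\alpha_{j-1}$ is the parent of $\alpha_j$ in the encoding tree $T$, the children of $\alpha_{j-1}$ form a partition of $T_{\alpha_{j-1}}$; in particular
\[
T_{\alpha_j} \subseteq T_{\alpha_{j-1}} \qquad \text{for every } j = 1, 2, \ldots, l.
\]

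Next I would apply the definition $F(\alpha) = \bigcap_{x \in T_\alpha} F(x)$ from Definition \ref{def:knowledge-tree} (2). Since $T_{\alpha_j} \subseteq T_{\alpha_{j-1}}$, the intersection defining $F_{j-1}$ is taken over a (weakly) larger family of sets than the intersection defining $F_j$, hence
\[
F_{j-1} \;=\; \bigcap_{x \in T_{\alpha_{j-1}}} F(x) \;\subseteq\; \bigcap_{x \in T_{\alpha_j}} F(x) \;=\; F_j.
\]
Chaining these inclusions for $j = 1, \ldots, l$ yields the desired tower $F_0 \subseteq F_1 \subseteq \cdots \subseteq F_l$.

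There is essentially no obstacle here: the statement is an elementary monotonicity property of intersections combined with the nested structure guaranteed by encoding trees (items (i)--(iii) of Definition \ref{def:encoding-tree}). The only point worth emphasizing in the write-up is the semantic interpretation, namely that the root $\lambda$, which contains all vertices, yields the smallest set of common semantic features (the most general abstraction), and as we descend toward the leaf codeword of $x$, each $T_{\alpha_j}$ shrinks and the shared feature set can only grow, giving a natural hierarchy of finer and finer abstractions along the branch. This monotonicity is precisely what justifies calling the sequence $F_0 \subseteq F_1 \subseteq \cdots \subseteq F_l$ a \emph{rule of abstraction}, and it sets up the later use of the knowledge tree as a ``tree of abstractions'' supporting intuitive reasoning.
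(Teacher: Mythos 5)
Your proof is correct and follows exactly the route the paper intends: the paper's own proof is just the one-line citation ``By Definition \ref{def:knowledge-tree},'' and your argument is precisely the unwinding of that definition (nested markers $T_{\alpha_j}\subseteq T_{\alpha_{j-1}}$ along a branch, plus antitonicity of intersection over larger index sets). No gaps; you have simply written out the details the paper leaves implicit.
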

\begin{proof}
By Definition \ref{def:knowledge-tree}.
\end{proof}

Usually, the inclusions in Equation (\ref{eqn:abstraction}) are proper, in which case, for every $j$, $|F_j|<|F_{j+1}|$. 

Note that $\alpha=\alpha_l$ is the codeword of $x$ in the decoder $T$. Hence $F(\alpha)$ is the set of features of data point $x\in V$.

According to Lemma \ref{lem:Abstraction}, we introduce the following:

\begin{definition} (Rules Extracting)\label{def:rules-extracting} Assume the notions in Lemma \ref{lem:Abstraction}. For every $j$ with $l>j\geq 0$, we call $F_j$ the $j$-th level abstraction of data point $x$, written $F_j(x)$.
\end{definition}

\begin{definition} (The $j$-th level abstraction)\label{def:j-th level abstraction} Assume the notations in Definition \ref{def:rules-extracting}.
For a fixed $j$, we define the $j$-level abstraction of physical system $G$ to be the class

\begin{equation}
F_j(G)=\{F_j(x)\ |\ x\in V\}.
\end{equation}

\end{definition}

Clearly, for small $j$, $F_j(G)$ can be represented in a low-dimensional space. This means that high-level abstractions of a system can be realized in low-dimensional space, reflecting the essence of data abstraction.

\subsection{The laws of a physical system}

In Subsection \ref{subsec:rules of abstraction}, we have defined the high-level abstractions of the decoder and the knowledge tree of a physical system. 

According to Lemma \ref{lem:Abstraction}, for every data point $x\in V$, the high-level abstractions of $x$ satisfies

\begin{equation}\label{eqn:flow-1}
F_0(x)\subseteq F_1(x)\subseteq\cdots \subseteq F_l(x).
\end{equation}

\begin{definition} (Flow of abstractions)\label{def:flow of abstraction} 

\begin{enumerate}
 \item [(1)] For a data point $x\in V$, assume the notations in Definition \ref{def:rules-extracting}, we define the flow of abstractions of $x$ to be the following sequence:
 \begin{equation}
 {\rm flow}(x)=:\{F_l(x)\supseteq\cdots\supseteq F_1(x)\supseteq F_0(x)\}.
 \end{equation}
\item [(2)] We define the flows of abstractions of $G$ by

\begin{equation}
{\rm Flow}(G)=\{{\rm flow}(x)\ |\ x\in V\}.
\end{equation}

\end{enumerate}
\end{definition}

\begin{proposition}(Flow of abstractions proposition)\label{prop:flow-of-abstractions} The flows of abstractions ${\rm Flow}(G)$ form a tree.
\end{proposition}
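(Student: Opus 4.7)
The plan is to exhibit an explicit tree whose root-to-leaf paths are exactly the flows in $\mathrm{Flow}(G)$, and then observe that a collection of paths sharing a common root naturally assembles into (and is determined by) that tree.

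First, I would unpack what each flow is in terms of the decoder $T$. By Lemma \ref{lem:codeword}, every $x\in V$ has a unique leaf codeword $\alpha=c(x)\in T$, and by Definition \ref{def:rules-extracting} and Lemma \ref{lem:Abstraction}, the flow $\mathrm{flow}(x)=\{F_l(x)\supseteq\cdots\supseteq F_0(x)\}$ is exactly the sequence of semantical abstractions $F(\alpha_l),F(\alpha_{l-1}),\dots,F(\alpha_0)$ read off the unique path from the leaf $c(x)$ back to the root $\lambda$ in the decoder $T$. So each flow is, by construction, a leaf-to-root path in the knowledge tree $KT=\{F(\alpha)\,|\,\alpha\in T\}$ endowed with the tree structure inherited from $T$ (edges $(F(\alpha),F(\alpha^-))$ for every non-root $\alpha$).

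Next, I would verify that this inherited structure really is a tree and that $\mathrm{Flow}(G)$ covers it. Connectedness and acyclicity are immediate, since the map $\alpha\mapsto F(\alpha)$ sends the rooted tree $T$ onto $KT$ preserving the parent relation, which carries connectedness/acyclicity over. To see coverage, note that every internal node $\alpha\in T$ has at least one leaf descendant $\beta$ (by Definition \ref{def:encoding-tree}(iii)), and by Lemma \ref{lem:codeword} there is a vertex $x$ with $c(x)=\beta$; then $F(\alpha)$ appears in $\mathrm{flow}(x)$. Hence every node of $KT$ lies on at least one flow, and two flows $\mathrm{flow}(x),\mathrm{flow}(y)$ share their initial segment up to the branching node $c(x)\cap c(y)$ in $T$ by Lemma \ref{lem:coding tree property}(3), so they fit together consistently. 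The union of all flows is therefore exactly the edge set of $KT$ viewed as a rooted tree, and $\mathrm{Flow}(G)$ is precisely the set of leaf-to-root paths of this tree.

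The one subtle point I would handle carefully is that the map $\alpha\mapsto F(\alpha)$ need not be injective: distinct tree nodes could share the same intersected feature set. I expect this to be the main obstacle to a clean statement. The natural fix is to regard $\mathrm{Flow}(G)$ as a family of sequences indexed by the tree nodes of $T$ (equivalently, as labeled paths), rather than as bare sets, so that the tree underlying $\mathrm{Flow}(G)$ is canonically isomorphic to the decoder $T$ itself; alternatively one can quotient $T$ by the equivalence $\alpha\sim\beta\iff F(\alpha)=F(\beta)$ along sibling branches and check that the quotient is still a rooted tree. Either way, the conclusion follows: $\mathrm{Flow}(G)$, assembled by taking the union of its paths emanating from the common root $F(\lambda)$, forms a tree, completing the proof.
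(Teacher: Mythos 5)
Your argument is correct and takes essentially the same route as the paper, whose entire proof is the single line ``by definition'': the flows are exactly the leaf-to-root paths of the decoder $T$ pushed through the map $\alpha\mapsto F(\alpha)$, so they assemble into a tree. Your additional care about the non-injectivity of $\alpha\mapsto F(\alpha)$ is a genuine point that the paper only handles later, in the construction of the tree of abstractions (Definition \ref{def:tree-of-abstractions}), where nodes with equal feature sets are merged.
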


\begin{proof} By definition.
\end{proof}

We use $T_f(G)$ to denote the tree of flows of abstractions, that is, $T_f(G)$ is exactly ${\rm Flow}(G)$. From the tree of flows, it is easy to find, for any given two vertices $x$ and $y$, the {\it least common abstractions} of $x$ and $y$ will be the set of features located at the node in $T_f(G)$ at which $x$ and $y$ branch.

\begin{definition} (Laws of system $G$) We define the laws of $G$ to be the rules that generate the flow of abstractions of $G$, that is, the rules of ${\rm Flow}(G)$, or the tree $T_f(G)$.
\end{definition}

According to the tree $T_f(G)$, we are able to find the least common abstractions of arbitrarily given sequences $x_1, x_2,\cdots, x_l$ of vertices $x_j\in V$, for $j=1,2,\cdots,l$. This property allows us to operate on the abstractions of objects.

\subsection{Tree of abstractions}\label{subsec:tree-of-abstractions}

Given a graph $G$, we assume that $G$ represents the syntax of a physical system consisting of many bodies together with their relationships, and that every object of the system has a semantics that are associated, but outside of the system. We notice that 
the system $G$ is determined by the syntax of the many bodies, although each of the many bodies has an associated semantics.

Suppose that $T$ is a decoder of $G$ found by the structural entropy minimization principle. Syntactically speaking, $T$ is the encoding tree of $G$ such that $T$ has gained the maximum amount of information embedded in the system $G$, and that the structural entropy of $G$ given by $T$, i.e., $\mathcal{H}^T(G)$ has been already the intrinsic information embedded in $G$ that cannot be decoded by any encoding tree or any lossless encoding of $G$. Due to this feature of $T$, $T$ certainly determines a semantical interpretation of system $G$. 

The knowledge tree in Definition \ref{def:knowledge-tree} has defined a function $F(\alpha)$ associated with tree node $\alpha\in T$ such that for each tree node $\alpha\in T$, $F(\alpha)$ is the set of common features of all the objects in $T_{\alpha}$. In so doing, $F(\alpha)$ is actually an {\it abstraction} for each of the object in $T_{\alpha}$.

By the definition of $T$ and $F$, we have that for any tree nodes $\alpha, \beta$, if $\alpha\subset\beta$, then $F(\alpha)\subseteq F(\beta)$.

We will define the tree of abstractions $T$ such that every node $\alpha\in T$ is associated with a set $F(\alpha)$ of features and such that for any $\alpha,\beta\in T$, if $\alpha\subset\beta$, then $F(\alpha)\subset F(\beta)$.

\begin{definition} (Tree of abstractions)\label{def:tree-of-abstractions} Let $T$ be a decoder of $G$ and $F$ be the associated abstractions. We define the tree of abstractions to be the tree $T^{*}$ obtained from $T$ by the following operations:

For any $\alpha\subset\beta$, if $F(\alpha)=F(\beta)$, then merge $\beta$ to $\alpha$.

Then the tree of abstractions is the pair $\langle T^{*}, F\rangle$ constructed as above.

\end{definition}

\begin{proposition} (Tree of abstractions proposition)\label{prop:tree of abstractions}

For a tree of abstractions $\langle T^{*}, F\rangle$, the following property holds: For any nodes $\alpha,\beta\in T^{*}$, if $\alpha\subset\beta$, then

\begin{equation}
F(\alpha)\subset F(\beta).
\end{equation}

\end{proposition}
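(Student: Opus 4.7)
The plan is to derive the strict inclusion in two steps: first prove the non-strict inclusion $F(\alpha)\subseteq F(\beta)$ for every ancestor--descendant pair in the original decoder $T$, and then argue that the merging step of Definition \ref{def:tree-of-abstractions} deletes exactly those descendants for which equality would occur.

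For the first step I would unfold the definitions. If $\alpha\subseteq\beta$ in $T$, then applying the partition property in Definition \ref{def:encoding-tree}(ii) inductively along the path from $\alpha$ down to $\beta$ gives $T_\beta\subseteq T_\alpha$. Since $F(\gamma)=\bigcap_{x\in T_\gamma}F(x)$ by Definition \ref{def:knowledge-tree}(2), intersecting over the smaller index set $T_\beta$ produces a set at least as large, so $F(\alpha)\subseteq F(\beta)$. This is the monotonicity-of-abstractions ingredient.

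For the second step I would reinterpret the merge rule concretely: call a node $\alpha\in T$ \emph{preserved} if no strict ancestor $\gamma\subset\alpha$ of it in $T$ satisfies $F(\gamma)=F(\alpha)$, and identify the nodes of $T^{*}$ with the preserved nodes of $T$, assigning each preserved $\beta$ its nearest preserved ancestor as parent. Under this description, any ancestor--descendant pair $\alpha\subset\beta$ in $T^{*}$ is still an ancestor--descendant pair in $T$, and the defining property of ``$\beta$ preserved'' forbids $F(\alpha)=F(\beta)$. Combined with the non-strict inclusion from step one, this yields $F(\alpha)\subset F(\beta)$ as required.

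The main obstacle is that the merge operation as stated in Definition \ref{def:tree-of-abstractions} is iterative and somewhat informal: a literal single application only removes one descendant, while chains such as $\alpha_0\subset\alpha_1\subset\alpha_2$ with $F(\alpha_0)=F(\alpha_1)=F(\alpha_2)$ force the rule to be applied transitively, and incomparable descendants sharing their common ancestor's $F$-value must be collapsed simultaneously. The preserved-node reformulation handles all these merges uniformly; I would spend a brief passage verifying that it still produces a well-defined priority tree with a coherent labelling $F$ before drawing the final conclusion.
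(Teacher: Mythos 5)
Your proof is correct and follows the same route the paper intends: the paper's own proof is just ``By definition and construction of $T^{*}$,'' and your argument fills in exactly the two ingredients that justification relies on, namely the monotonicity $F(\alpha)\subseteq F(\beta)$ coming from $T_\beta\subseteq T_\alpha$ and the intersection formula of Definition \ref{def:knowledge-tree}, plus the fact that merging removes all equality cases. Your extra care with the iterative nature of the merge (the preserved-node reformulation handling chains and incomparable descendants) is a genuine improvement in rigor over the paper's terse justification, but it is not a different approach.
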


\begin{proof}
By definition and construction of $T^*$.
\end{proof}

Of course, the definition of tree of abstractions may not be unique, because, the decoder may not be unique.

The concept of tree of abstractions provides us a structure and representation of abstractions. The tree structure of abstractions naturally captures the intuition of human abstraction, and the abstractions in the intuitive reasoning of humans. More importantly, the tree structure of abstractions allows highly efficient algorithms for finding the abstractions desired and for operating on the abstractions.

The concept of tree of abstractions is  perhaps fundamental to understand human reasoning and natural language communications.

The concept of trees of abstractions allows us to explicitly define and represent the high-level abstractions of physical systems or knowledges. The tree representation of abstractions supports highly efficient algorithms to perform reasoning in different-level of abstractions. The idea of trees of abstractions shows that abstract concepts such as ``abstracting" can be explicitly defined and represented, and that such representations support machines to perform intuitive reasoning.

The trees of abstractions using semantics, syntax, and general features allow us to establish the relationships between syntax and semantics of systems, realizing the unification of syntax and semantics.

\section{Decoding Information Maximization Principle: Connecting Data}\label{sec:connecting-data}

In Section \ref{sec:decoder}, we proposed the model of learning from a decoder, or essential structure of a physical system, provided that the physical system has been built and given. However, in practice, we don't have a structured data space. 
A grand challenge: How to build the system of data points? What is the principle for us to build the data space?

%\subsection{Constructing the system of dataset}

Suppose that we have observed a set $V=\{x_1, x_2,\cdots, x_n\}$ of data points $x_i$and that the observation for each data point $x=x_i$ includes both a syntactic and a semantic set of features as in human learning.

In practice, there are many ways to define the relationships for any pair $(x_i,x_j)$ for data points $x_i$ and $x_j$. However, we don't know which way is the best for us to construct the data space.

Usually, there is a parameter $\theta$, say, which determines the ways of structuring of the unstructured data points $V$. Let $G_{\theta}$ be the graph constructed from parameter $\theta$. 

According to Definition \ref{def:decoding-inform}, our principle for linking the data points is to find the $\theta$, $\theta_0$, say, satisfying:

\begin{equation}\label{eqn:structuring}
\theta_0=\arg\max_{\theta}\{\mathcal{D}(G_{\theta})\}.
\end{equation}

Then let $G=G_{\theta_0}$. By the choice of $\theta_0$, $G$ allows us to gain the maximum {\it decodable information} from dataset $V$, that is, using $\theta_0$, we are able to gain the maximum amount of information from the dataset. 

Therefore, decoding information maximization is the principle for structuring an unstructured dataset. 

Equation (\ref{eqn:structuring}) provides the principle for structuring unstructured dataset. We call this principle as {\it decoding information maximization principle}. This ensures that, the way we create links allow us to gain the maximum amount of information. This also means that we have eliminated a maximum amount of uncertainty in the dataset, or we have learnt the maximum knowledge from the dataset.

\section{Decoding Information Maximization Principle: Learning from Observing, Connecting and Associating}\label{sec:observing-connecting-associating}

In Section \ref{sec:decoder}, we proposed the model of learning from a decoder, or essential structure of a physical system, provided that the physical system has been built and given.
In Section \ref{sec:connecting-data}, we introduced the principle for connecting data points to build a data space. After the data space is built, then we may decode the data space using the method in Section \ref{sec:decoder} to learn the knowledge and laws of the data space.

 However, in practice, human learns dynamically. In the procedure of human learning, when one observes a new data, he/she first tries to find the relationship between the newly observed data and the data space he/she observed previously. In particular, when one observes a new data, he/she may have many evidences or even imagination to build the connection between the newly observed data and the data space he/she has built before. This is of course a fundamental mechanism of human learning.

 Clearly, different methods of building connections between newly observed data points to existing data space may directly determine the performance of learning.

In the procedure of human learning, when we observed a data, we immediately link the data to the data space, use the knowledge and laws we have learnt previously to reason, and update our knowledge and laws by using the new observation and our old knowledge. This means that building the data space is the result of human learning, and more importantly, human links the newly observed data to the existing data space by even associating, which would be something from imagination.

 Intuitively, the connections between the newly observed and the existing data space should not be too many, and not too less. More importantly, the connections between newly observed data points and existing data space should be helpful for us to form new knowledge and to extract new laws.

  What is the mathematical principle to realize the intuition above?

To better understand the role of linking data or associating in human learning, let us look at the procedure of human observing. A human {\it learner} proceeds as follows:

\begin{enumerate}

\item Observing  a data point $x$

%In this step, different people may have different abilities in observing a real world object. 

\item Building the connections between $x$ and the data space $S$ built previously

\end{enumerate}

%This leads to a grand challenge: How to build the system of data points? What is the principle for us to build the data space? 

%\subsection{Constructing the system of dataset}

%Suppose that we have observed a set $V=\{x_1, x_2,\cdots, x_n\}$ of data points $x_i$. The observation for each data point $x=x_i$ includes both a syntactic and a semantic set of features. In practice, there are many ways to define the relationships for any pair $(x_i,x_j)$ for data points $x_i$ and $x_j$. However, we don't know which way is the best for us to construct the data space.

Suppose that there is a parameter $\theta$ that determines the connections between newly observed data points $X$ and existing data space $S$. Let $G_{\theta}$ be the graph constructed from $S$ by connecting $X$ to $S$ using parameter $\theta$. 

According to Definition \ref{def:decoding-inform}, our principle for linking the data points $X$ to data space $S$ is to find the $\theta$, $\theta_0$, say, satisfying:

\begin{equation}\label{eqn:associating}
\theta_0=\arg\max_{\theta}\{\mathcal{D}(G_{\theta})\}.
\end{equation}

Then let $G=G_{\theta_0}$. By the choice of $\theta_0$, $G$ allows us to gain the maximum decodable information from the connections between $X$ and data space $S$.

%dataset $V$, that is, using $\theta_0$, we are able to gain the maximum amount of information from the dataset. 

Therefore, decoding information maximization is the principle for associating data points to existing data space.

%structuring an unstructured dataset. 

Equation (\ref{eqn:associating}) provides the principle for linking data and for associating in learning. We call this principle as {\it decoding information maximization principle}. This ensures that, the way we create links between new data points and existing data space allow us to gain the maximum amount of information. This also means that we have eliminated a maximum amount of uncertainty, or we have learnt the maximum knowledge.

\section{Merging of Logical Reasoning and Intuitive Reasoning: Tree Method of Reasoning}\label{sec:logical-intuitive-reasoning}

\subsection{General method of logical and intuitive reasoning}

During the procedure of human learning, a person is constructing decoders $T_1, T_2,\cdots, T_N$, knowledge trees $K_1, K_2,\cdots, K_N$ and trees of abstractions $A_1, A_2,\cdots, A_N$ of data spaces $S_1, S_2,\cdots, S_N$. In the process of learning, one may observe a new data point, $x$ say. During the observation of $x$, one may observe a set of syntactical features $G(x)$, and a set of semantical features $F(x)$, of $x$, respectively. Due to the fact that $N$ is small, and the trees of abstractions $A_1, A_2,\cdots, A_N$ are highly sparse. It is easy to identify an $i$ and the longest node $\alpha$ in tree $A_i$ of abstractions such that the syntactical abstraction associated with $\alpha$, denoted by $G(\alpha)$ is a subset of the set of the syntactical features of $x$, that is, $G(\alpha)\subseteq G(x)$. By the choice of $\alpha$, the data point $x$ should be encoded in $T_i(\alpha)$, that is, module $\alpha$ of decoder $T_i$. This step is an intuitive reasoning by using trees of abstractions. After deciding the module $\alpha$ of decoder $T_i$, we may create links from $x$ to the data points in $T_i(\alpha)$ by using the information gain maximization principle to update data space $S_i$, and then update the decoder $T_i$ of data space $S_i$. The latter steps correspond to a local reasoning of computation step by step, in a local area of data spaces. 

The procedure above is a learning from observing that is highly alike human learning. We will see that this procedure of learning is the most remarkable character of our structural information learning machines. This is perhaps the most fundamental advantage of our structural information learning machines.

To better understand the procedure, we introduce the detailed steps of the learning procedure using a simple example. In the real world applications, one may have sever decoders, knowledge trees, and trees of abstractions corresponding to the learning of different subjects.
Of course, the different trees may be further organized as a general tree such that each of the previous trees as a subtree of the general tree. 

[Remark: It would be highly likely that human learning is just to build some knowledge trees and some trees of abstractions.]

Learning is a dynamically evolving procedure step by step. It proceeds as follows.

Suppose that at the end of time step $t$, we have observed a data space $G=(V,E)$, learnt an essential structure $T$, i.e., a decoder (which is also the encoder) of $G$, a knowledge tree $KT$ of $G$.

At time step $t+1$:

\begin{enumerate}
\item [(1)] (Observing data) Let $x$ be a new data observed at step $t+1$. 

When we observe data point $x$, we obtain simultaneously the syntactical features and semantical features of $x$. 
Let $I(x)$ be the set of syntactical features of $x$, $J(x)$ be the set of semantical features of $x$, and $F(x)$ be the set of all the features of $x$.

[Remark: After observing a data point, it is important to build the connections between the newly observed data point to the data points observed previously. The connections may have two classes, the first class is global connections, and the second class is local connections.]

\item [(2)] (Associating) Let $\theta$ be a parameter that determines the ways to link $x$ to some data points in $G$. 
Let $\theta_0$ be the parameter according to the principle in Equation (\ref{eqn:associating}). 
Then set $G+\{x\}$ to be the graph obtained from $G$ by adding vertex $x$ with ways of edges determined by parameter $\theta_0$.

Whenever we observed a data point, we may have a general method to build the connections between the newly observed data point and the existing data space. For this, we first use the information gain maximization to build the connections.

\item [(3)] (Choosing abstraction) Let $A$ be the tree of abstractions defined from the decoder using either syntactical features or all the features of data points.

Let $\alpha$ be the longest $\beta$ in tree $A$ of abstractions such that $A(\beta)\subseteq I(x)$, where $A(\beta)$ is the set of features associated with node $\beta\in A$.

\item [(4)] (Linking data) We now know that $A(\alpha)$ is the abstraction of $x$. It is possible that $x$ may have some special connections with the data points in $T_{\alpha}$, where $T$ is the decoder.

Suppose that there is another parameter $\theta$ that determines the connections between $x$ and data points in $T_{\alpha}$. The principle for choosing parameter $\theta$ is the maximization of information gain in Equation (\ref{eqn:associating}).  Using the principle, we build some special connections between $x$ and $T_{\alpha}$.

Now we have built the data space with $x$ inserted, written $G+\{x\}$, and we know that $x$ should be encoded in the module of $T_{\alpha}$.

\item [(5)] (Intuitive reasoning) Set

$$T'_{\alpha}\leftarrow T_{\alpha}\cup \{x\}.$$

Step (5) performs the encoding of $x$ into $T_{\alpha}$. However, we have not decided the exact codeword of $x$ in an updated encoding tree $T'$.

\item [(6)] (Locally logical reasoning) Update the branch with top node $\alpha$ to minimizing the structural entropy of the graph $G+x$, and obtain an encoding tree $T'$ of $G+x$. $T'$ is an approximation of the decoder of $G$ together with a newly observed data $x$.

Once we have decided to enumerate $x$ into $T_{\alpha}$, we will need to further determine the codeword of $x$ in a new encoding tree. 
This will be determined by some greedy strategies in the branch with root $\alpha$, which is a local computation, and hence a logical reasoning.

\end{enumerate}

The steps above described a general method of linking data and of performing both intuitive reasoning and logical reasoning. The method is  developed better fitting problems for discrete objects.

\subsection{Using abstractions in classifications}

For continuous objects, the general method above may have some variations. For this, we look at an example.

Figure 5 is the three-dimensional gene map of lymphomas found by using the three-dimensional structural entropy minimization principle, referred to \cite{LP2016a, LYP2016}.

The author and his coauthors developed a method based on structural entropy minimization principle to identify the type and subtypes of 5 tumors \cite{LYP2016}. Figure 5 is one of the tumors, that is, the lymphomas. 

In Figure 5, the horizontal line represents the cell samples of lymphomas, and the vertical line represents the genes. The color in the heat-map represents the gene expression profiles, for which the deeper the color, the higher the expression profiles.

In tumor type/subtype identification, we are given a number of cell samples, $c_1, c_2,\cdots, c_N$, and a list of genes, $g_1, g_2,\cdots, g_n$, where $n>> N$. For every $i=1,2\cdots, n$ and every $j=1,2,\cdots, N$, gene $g_i$ has an expression profile in cell $c_j$. 

Suppose that all the genes are ordered as they are listed below:

$$g_1, g_2,\cdots, g_n.$$

Then for every $j=1,2,\cdots, N$, cell sample $c_j$ is represented by a vector $v_j=(a_1, a_2,\cdots, a_n)$, where $a_i$ is the expression profile of gene $g_i$ in cell sample $c_j$. For two cell samples $c_j$ and $c_{j'}$, if there are some genes that have high expression profiles in both $c_j$ and $c_{j'}$ simultaneously, then the two cell samples $c_j$ and $c_{j'}$ are closely related. Using this intuition, we can define a weight $w$ between $c_j$ and $c_{j'}$ by using the two corresponding vectors $v_j$ and $v_{j'}$. 

Clearly this gives rise to a complete, but weighted graph, $H$ say. It is easy to see that, some weights in $H$ must be significant, but many mores are simply noises or trivial weights. Although each of the noises or trivial weights in $H$ maybe small, the collection of all the noises and trivial weights cause a big noise. Therefore, the first step of analysis is to construct a graph $G$ obtained from $H$ by keeping the significant weights, and by removing the small noises and trivial weights. 

In \cite{LYP2016}, the authors introduced some a method using the one-dimensional structural entropy. This constructs a graph $G$. By using the two- and three-dimensional structural entropy minimization algorithms, we can identify the types and subtypes of tumors. Figure 5 is the result of three-dimensional structural entropy minimization for lymphomas cell sample classification. 

In Figure 5, our algorithm identifies the lymphomas cell samples into types and subtypes such that each subtype has a set of genes such that the set of genes highly express the subtype, and express the subtype only. This means that each subtype found by our algorithm is defined by a set of genes.

In this example, we view each cell sample as a data point, and the genes are features. According to Figure 5, each type or subtype found by our algorithm has a set of genes that defines the type or subtype. For every type or subtype $X$ found by the algorithm, see Figure 5, let $Y$ be the corresponding set of genes that defines $X$. In this case, we say that $Y$ is the set of abstractions of cell samples in $X$.

Suppose that $X_1, X_2,\cdots, X_k$ are all the subtypes, and that $Y_1, Y_2,\cdots,Y_k$ are sets of abstractions of the subtypes $X_1,X_2,\cdots, X_k$, respectively.

With this definition of abstractions, whenever we observed a new lymphomas cell sample, $c$ say, we may simply compute the average 
expression profiles of the sets of abstractions $Y_1,Y_2,\cdots, Y_k$ on $c$. 
For each $j=1,2,\cdots,k$, let $b_j$ be the average expression profiles of $Y_j$ on $c$. 

Let $j$ be such that $b_j=\max\{b_1,b_2,\cdots,b_k\}$. Then we can encode cell sample $c$ into subtype $X_j$.

This is an example of using abstractions to classify a newly observed data point.

In this case, we used the average expression profiles of sets of abstractions to classify.

\begin{figure}
\centering

    \includegraphics[width=0.8\textwidth]{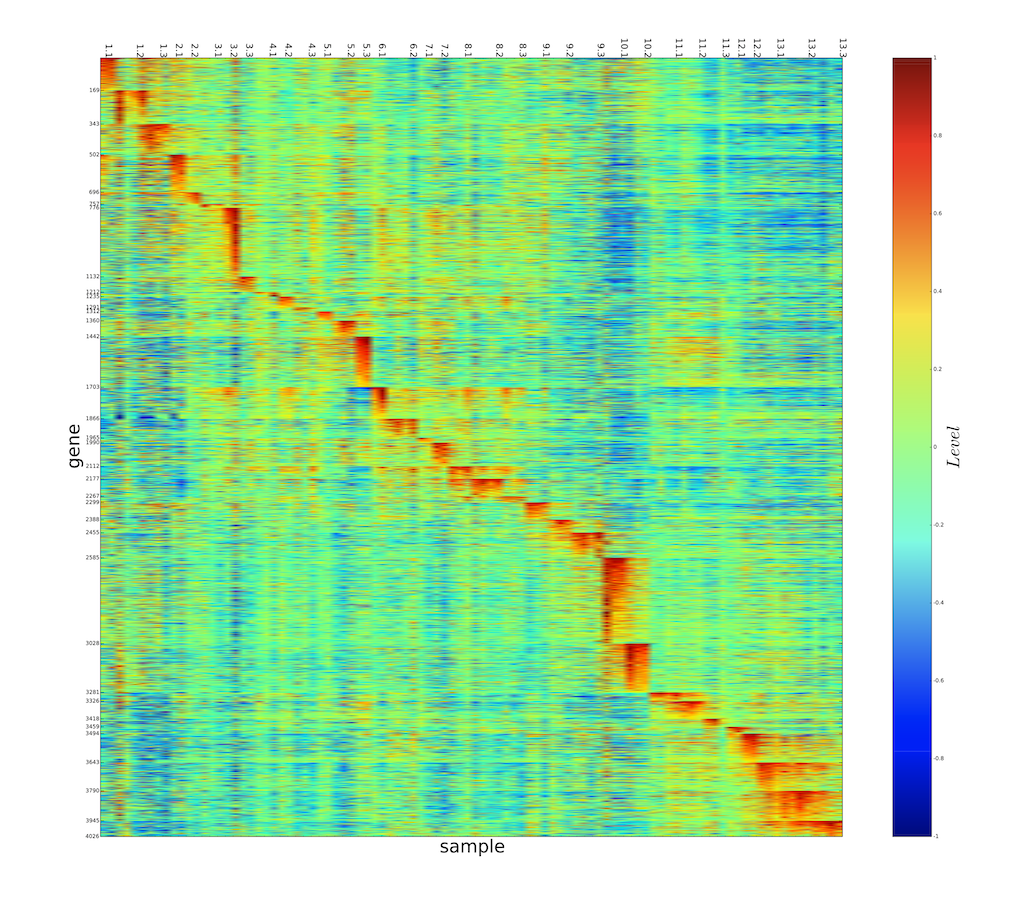}
    \caption{Lymphomas Three-Dimensional Gene Map}
    %\label{fig:Learning Model}
\end{figure}

\subsection{Constructing data space}

It is easy to see, using our structural information learning machines, the above tumor classification problem may be remarkably solved. We notice that our previous work has already been verified by clinical datasets.

New approach to solving the problem above is proposed as an example. Go back to the construction of graph $G$ of cell samples.
After we have defined a complete weighted graph $H$. We construct a graph $G$ as follows:

\begin{enumerate}
\item Let $k$ be a parameter.
\item Let $G_k$ be the graph obtained from $H$ be keeping the edges of the $k$ greatest weighs.
\item Let

\begin{equation*}
k_0=\arg\max_k\{\mathcal{D}(G_k)\}.
\end{equation*} 

Of course $\mathcal{D}(G_k)$ is approximately computed. This does not harm our results.

The choice of $k_0$ ensures that from $G_{k_0}$, we may gain the maximum decodable information.

\item Let $G=G_{k_0}$.
\end{enumerate}

This gives a new construction of cell sample graph.

\subsection{Linking data point to data space}

We will introduce the method of linking data to data space by using the tumor classification problem as an example.

Suppose that $G$ is a cell sample graph as above. When we observed a new cell sample $c$, we build the links between $c$ and cell samples in $G$ as follows.

\begin{enumerate}
\item For every cell sample $c_i$ in $G$, define $w_i$ to be the weight between $c$ and $c_i$.

\item Let $G_k$ be the graph obtained from $G$ by adding $c$ together with the edges of the top $k$ weights between $c$ and cell samples in $G$.

\item Let

\begin{equation*}
k_0=\arg\max_k\{\mathcal{D}(G_k)\}.
\end{equation*} 

%Of course $\mathcal{D}(G_k)$ is approximately computed. This does not harm our results.

%The choice of $k_0$ ensures that from $G_{k_0}$, we may gain the maximum information.

\item Let $G=G_{k_0}$.

\end{enumerate}

This constructed the graph by linking the newly observed cell sample $c$ to the existing data space $G$.

After we have defined the new graph $G$, we can easily find the types and subtypes from $G$ by using the two- and/or three- dimensional structural entropy minimization principle.

We emphasize that our method solving the problem of identification of tumor type or subtype and classification problem of cancers is completely free of hand-made parameter choice. It is principle-based. More importantly, the results of our method are highly consistent with clinical datasets.

\subsection{Learning single cell biology}

Single cell biology has become a grand challenge in both biology and medical sciences. Due to the fact that the Hi-C data points are in a single cell too sparse, it is hard to predict the topologically associating domains (TAD) in a single cell. Based on the structural information theory, the author and his coauthors \cite{L+2018} developed the method of deDoC. The method deDoC discovered for the first time TAD-like domains in 10 single cells. However, the method cannot find significant result from the Hi-C data of a single cell.

By applying the structural information learning machines, we will have a new chance to predict TAD or TAD-like structures from Hi-C data of a single cell. The new approach is as follows.

Given the Hi-C data of a gene whose loci are represented by a sequence $1, 2,\cdots,n$. For a small pair $(i,j)$, the Hi-C data set gives an interaction profiles $m_{i,j}$. We notice that $n$ is very large and the number of pairs $(i,j)$ with $m_{i,j}$ defined is small. This is the reason why we cannot predict the folded structure of the gene from the Hi-C data.

We also note that a loci is actually an interval, instead of a mathematical point. Choosing appropriate binsize $k$ may make sure that the Hi-C data of a single cell is significant in predicting the TAD-like structure in the single cell. The decoding information maximization provides such a method to choose an appropriate binsize $k$. The method proceeds as follows:

\begin{enumerate}

\item For every $k$, let $G_k$ be the graph defined by the Hi-C data of a single cell with binsize $k$.

\item Let 

\begin{equation*}
k_0=\arg\max_k\{\mathcal{D}(G_k)\}.
\end{equation*}

The choice of $k_0$ ensures that we may gain the maximum decodable information from the choice of binsize $k_0$.

\item Let $G=G_{k_0}$

\item By using the deDoc in graph $G$, we may find TAD-like structures in a single cell.

\end{enumerate}

This provides an appealing method to find TAD-like structures from the Hi-C data of a single cell.

\subsection{Learning from small dataset}

The current success of deep learning heavily depends on the availability of massive labelled dataset. However, human learning can always gain something even if from a few data points. For example, even a baby can learn something from what he/she sees.

Our structural information learning machines can learn from even a single data point. The machines do not need massive useful dataset. 

Our model shows that learning from a small dataset is possible.

\subsection{Learning from (accidental) events: Tree method of abstractions}\label{subsec:learning-from-events}

In human history, sometimes, accidents changed the world. The two World Wars started by some seemly accidents. In a society, an unexpected social event sometimes caused a social problem. In nature, some events of small probability, once occurred, may even change 
the nature. 

In human society or nature, people may observe that in certain circumstance, evens of small probability, once occurred, may cause a big change of the society and nature.
What is the principle behind this phenomena? How can we identify these kinds of small probability events?

The problem is a learning problem, learning from observations? It is a problem about the relationship between events and massive dataset.

Intuitively, the reason why some or a single event may change a massive dataset could be as follows:

\begin{enumerate}
\item [(1)] The data points in a massive dataset share some key features, $f_1, f_2,\cdots, f_k$ say, such that if these features are activated, then the massive dataset will change with high probability.
\item [(2)] The small probability event, $E$ say, once occurred, triggers the conditions that the features $f_1, f_2,\cdots, f_k$ are activated.
In this case, the small probability event $E$ plays the role of a variated gene in cancers, for example.

\end{enumerate}

The procedures (1) and (2) above are very similar to the outbreak of cancers or spreading diseases.

Therefore, in nature, society, economics, politics, science, technology and medical sciences etc, it is possible some small probability event causes a big change. Early identification of such small probability events is the key to preventing distresses from happening.

According to the arguments above, learning from small probability events consists of the following

{\bf Abstracting steps}:

\begin{enumerate}
\item (Learning an unexpected event $E$) Suppose that $E$ is an unexpected event. 

\begin{enumerate}
\item Extract as many as possible the features of $E$.
\item Analyze the relationships among the features of $E$.
\item Discover the set of key features of $E$, denoted by $K$ such that $K$ is probably shared by a massive dataset $S$.

Let $S$ be the dataset of all the possible candidate data points that share features $K$.
\item Extract the intrinsic law $L$ of the set of key features $K$.

\end{enumerate}
\item Construct the data space consisting of dataset $S$.

\item Find a tree $A$ of abstractions of data space $S$.

\item Find the longest $\alpha\in A$ such that $A(\alpha)\subseteq K$. 

This tells us, event $E$ may triggers the change for the data points with abstractions $A(\alpha)$.

\item If the laws $L$ of the key features $K$ shared by the dataset associated with $A(\alpha)$, then $A(\alpha)$ may change.

\end{enumerate}

The abstracting steps above is similar to the following intuitive procedure for predicting cancers:

\begin{enumerate}
\item [(i)] Suppose that we have found a cancer cell $C$
\item [(ii)] Suppose that $Y$ is a small set of genes each of which has high expression profile in $C$.
\item [(iii)] We hence suspect that $Y$ is the triggering condition of a type of cancer.
\item [(iv)] We will need to take measure for all the people whose cell samples have high expression profiles in each of the genes in $Y$. 

\end{enumerate}

Our structural information learning machines provide an approach to learning from small probability events as above. Because the essence of the problem is to find a tree of abstractions for a massive dataset, and to extract the key features of an event. Both phases can be learnt by structural information learning machines.

\subsection{Understanding the semantics of natural language}

In Subsection \ref{subsec:learning-from-events}, we have described a framework for learning from events. The key idea of learning from events is the tree of abstractions of the massive dataset that are probably link to the event.

This idea of learning from events may be developed to a learning machine that understands the semantics of natural languages.

A sentence, or a paragraph, or a document of natural language certainly consists of many words each with syntax, semantics and noises together with the order of words. The syntax, semantics of words must have different levels of abstractions. More importantly, the exact semantics of a sentence, or a paragraph or a document can only be better understood in an environment consisting of knowledge, culture, history, background etc. In this sense, the understanding of a sentence, or a paragraph or a document is similar to a learning from an event in  Subsection \ref{subsec:learning-from-events}.

However, the learning depends on the development of some knowledge trees and trees of abstractions of natural languages. Our SiLeM model provides the necessary mechanisms to build such machines.

[Remark: We notice that our SiLeM machines may understand natural languages. However, they do not provide mechanisms for communications in natural language. The reason is that, the machines do not generate answers to their understanding. This argument shows that SiLeM machines solve the problem by learning the laws, but the machines cannot create and design answers. It is for this reason, the author believes that except for the information theoretical definition and model of learning, we will need a new theory, ``structural game theory", to solve the problems of ``creating" and ``designing" in the generation of ``artificial intelligence".]

\section{Mathematical Model of Learning: SiLeM}\label{sec:SiLeM}

The structural information learning machinery (SiLeM, for short) is depicted in Figure 5 below

The key optimization of the learning model consists of three phases:

\begin{enumerate}
\item []Phase 1: {\it The construction of a data space}.

The principle for phase 1 is to construct a data space such that the decoding information of the data space is maximized. This ensures that we are able to gain the maximum amount of information from the constructed data space.

\item []Phase 2: {\it The finding of the decoder, that is, the essential structure of the data space}.

The principle for phase 2 is to find the encoding tree under which the structural entropy of the data space is minimized. From the found encoding tree, we have already gained the maximum amount of information embedded in the data space.

\item []Phase 3: {\it The updating of the decoder when new data is observed by using the knowledge and laws learnt previously}.

The principle for linking a newly observed data point to the existing data space is {\it to maximize the decoding information} of the updated data space.

However, applying the tree of abstractions allows us to realize the goal by {\it local and dynamical algorithms}.

\end{enumerate}

In each of the three phases, the principle for the optimization is {\it to maximize the information gain}. This gives rise to an information theoretical definition, or precisely, structural information theoretical definition, of learning.

\subsection{Mathematical principle of learning}

\begin{definition} (Learning Machine) A learning machine, written $\mathcal{L}$, is a system, satisfying the following properties:

\begin{enumerate}
\item [(1)]  The mathematical essence of a learning machine is to gain information.
\item [(2)] To gain information is to eliminate the uncertainty embedded in a system. 
\item [(3)] Eliminating uncertainty can be reduced to optimization problems, that is, information optimization problems.
\item [(4)] An information optimization problem is to find an encoding tree, or a decoder, or an essential structure of the data space with which the decoding information of the data space has been maximized.
\end{enumerate}

\end{definition}

Our definition does not distinct learning as supervised or unsupervised. Because, either supervised or unsupervised learning is to gain information, that is, to eliminate uncertainty. Traditional supervised learning may gain information by asking questions from a supervisor, and unsupervised learning has no such supervisor to ask. According to our definition, either supervised or unsupervised learning is to gain information by eliminating uncertainty. More importantly, according to our definition, learning is universal, in the sense that, whatever the subject it learns, from mathematical functions, physical objects, biology to social sciences, every learning procedure is to gain information, or equivalently, to eliminate uncertainty.

\subsection{The mechanisms of SiLeM}

The {\it mechanisms} of the structural information learning machinery (SiLeM) are:

\begin{enumerate}
\item {\it Observing data from real world}

Observing is of course the first step of learning.

\item {\it Linking data}
To construct a data space such that decoding information of the data space is maximized.

\item {\it Decoding by optimizing the information gain}

Given the data space constructed, find an encoding tree by minimizing the structural entropy of the data space.

\item {\it Interpreting the decoder to form a knowledge tree}

Due to the fact that the encoding tree finding by using the structural entropy minimization supports the functional modules of the data space. From the encoding tree, i.e., the decoder, the functional modules of the data space can be interpreted.

\item {\it Tree of abstractions}

By extracting the remarkable common features of the functional modules determined by the decoder, we are able to construct a tree of abstractions. The tree of abstractions defined so, provides the basis for intuitive reasoning.

\item {\it Intuitive reasoning}

When new data points are observed, we may use the tree of abstractions to encode the new data points into the encoding tree to construct an updated encoding tree or decoder.

By using the tree of abstractions, the updating of encoding tree or decoder has become a dynamical and local algorithm. 

\end{enumerate}

These mechanisms make our machine SiLeM essentially different from the currently existing machine learning algorithms. Our machines learn from observing, by associating, computing, abstracting, and intuitive reasoning. The criterion of our machines is the semantical interpretation of the decoder, that is, the encoding tree found from the syntax structure of a data space by an information optimization procedure. These features make our learning machines highly similar to human learning.

\subsection{The principles of SiLeM}

The principles of the structural information learning machinery (SiLeM) include:

\begin{enumerate}
\item []{\bf Principle 1}: {\it The combination of syntax and semantics}.

Real world objects certainly consist of a syntax, a semantics and noises. The goal of learning is to discover the laws or rules of real world objects. Therefore, learning must deal with syntax, semantics and noises. Noises are part of uncertainty. Decoding information maximization excludes the perturbation by noises. Laws or rules themselves consist of a syntax that supports the semantics of the real world objects. Semantics is the knowledge of the real world. Semantical interpretation of a syntactical structure of real world objects is the criterion for both human learning and our structural information learning machines.

\item []{\bf Principle 2}: {\it The merging of computation and information}.

The merging of computation and information is the foundation for our learning model.

\item []{\bf Principle 3}: {\it Tree of abstractions}.

Abstracting is perhaps one of the fundamental differences between human and computer. The decoder of our learning machines is an encoding tree, which naturally supports both a knowledge tree and a tree of abstractions. 

Trees of abstractions empower human to understand hugely complex systems such as countries, societies etc. This remarkable character of our model would empower structural information learning machines learn the universe from small datasets to huge massive datasets.

\item []{\bf Principle 4}: {\it The combination of locally logical reasoning and globally intuitive reasoning}.

Intuitive reasoning is perhaps one of the most important abilities of human. Our structural information learning machines explicitly realize the function of intuitive reasoning. More importantly, intuitive reasoning is realized by our learning machines in a way of principle-based, that is, to gain the maximum amount of information.

\item []{\bf Principle 5}: {\it The merging of encoding, decoding and optimizing}.

This realizes the merging of information and computation.

\end{enumerate}

Principle 1 provides us the criterion for our learning machine that the syntax decoder must have a semantical interpretation, and  that the semantical modules must have a supporting structure. This naturally solves the interpretability problem of the learning machines. 
In human learning, people learn by using brain, eyes, hands, ears etc simultaneously. Especially, people learn not only the syntax, but also semantics of an object simultaneously. In addition, semantics plays a crucial role in human reasoning. In most cases, people get the semantics first, then find the formal proof. This means that semantics and syntax play role simultaneously in human reasoning. Our learning machines SiLeM distinct semantics and syntax, explore the roles of both syntax and semantics in learning.
Principle 1 very well captures the nature of unifying of both semantics and syntax together in human learning.

Both information and computation are fundamental to the current computer science and artificial intelligence. As mentioned before, structural information theory is a new theory of the merging of information and computation. The key to our SiLeM is exactly the structural information theory. In this sense, SiLeM is a learning machine built based on the merging of the two fundamental concepts of information and computation. Principle 2 reflects this new character of the Structural information learning machinery.

Principle 3 is represented by the encoding tree and knowledge tree. In human learning, it is obvious that there are trees of abstracting. Different levels of abstractions correspond to a hierarchy of concepts. Our SiLem naturally realizes the hierarchy of abstracting. 

Principle 4 reflects the character of the combination of locally logical reasoning and globally intuitive reasoning of human learning. To understand this, let us check the differences between logical reasoning and intuitive reasoning. In this paper, we interpret ``intuitive reasoning" as the reasoning by using the knowledge and laws a learner has already built. According to this understanding, the structural information learning machines realize this mechanism when new data points are observed. Logical reasoning is a type of computation, and computation is a type of optimization. Since the structural information learning machines optimize the amount of information gained from either constructing a data space or encoding or decoding of a data space. Therefore the structural information learning machines certainly perform logical reasoning. From the point of view of Turing machines, computation is a local operation, in the sense that, during the procedure of a computation, at any time step, Turing machines see only a local area of the configuration of the computation, that is, a few cells of the working tape, one state of the machine, and a few symbols of an alphabet. By this reason, we interpret computation as a local operation, and hence logical reasoning is a local operation. Apparently, the combination of both local reasoning and intuitive reasoning is a remarkable character of our structural information learning machinery. This new character makes our SiLeM completely a new model of learning. It is this character, we know that computation and learning are completely different scientific concepts, although learning can be realized by algorithms.

Principle 5 is a validation of the merging of information and computation for the structural information learning machines. It is interesting to notice that although encoding and decoding are both algorithms in practical applications, the concepts of encoding and decoding are core ideas of information theory. SiLeM demands the combination of encoding, decoding and optimizing, which naturally realizes the merging of information theory and computation theory. This character implies that SiLeM is not only completely new, but also is coming from the merging of two fundamental concepts of information and computation.

\subsection{The goal of SiLeM}

The goal of the structural information learning machinery (SiLeM) is:

\begin{enumerate}

 \item To acquire knowledge of real world, and 
 \item To discover the laws of real world.

\end{enumerate}

According to the model of structural information learning machinery, the goal of learning is achieved by gaining information, or equivalently, by eliminating the uncertainty (or entropy) embedded in a system of observed data. In another word, we acquire knowledge and discover laws of nature by gaining information, i.e., by eliminating uncertainty of a system of observed dataset. This is exactly the mechanism of human learning.

The essence of the information optimization is to distinguish the laws from noises in a complex system of observed dataset. The theoretical limitation of the structural information learning machinery (SiLeM) is hence to discover the laws of nature, which is important, but is insufficient for us to understand the concept of intelligence. Of course, SiLeM is an information theoretical model of learning. Theoretically speaking, SiLeM is able to discover the laws of nature, provided that it eliminates all the uncertainty embedded in a system. 

However, even if SiLeM realizes the theoretical goal of discovering the laws of nature, it is still insufficient to fully capture the essence of intelligence. The reason is that, in the generation of intelligence, the concept of {\it creating} or {\it designing} must play an essential role. In fact, humans {\it create} or {\it construct} many things based on the knowledge and laws they learnt. Human intelligence consists of both discovering the laws of nature and creating things based on laws. This gives rise to a well-defined description of human intelligence. To better understand the concept of intelligence, we need a model to investigate the concept of creating or designing. For this, we need to find the motivation and mechanism for the action of creating and designing. The author of the present article believes that game is the motivation and mechanism of creating and designing. This calls for a new theory of game, {\it structural game theory}, the author proposed.

Figure 6 depicts the framework of the structural information learning machinery.

\begin{figure}
\centering

    \includegraphics[width=0.8\textwidth]{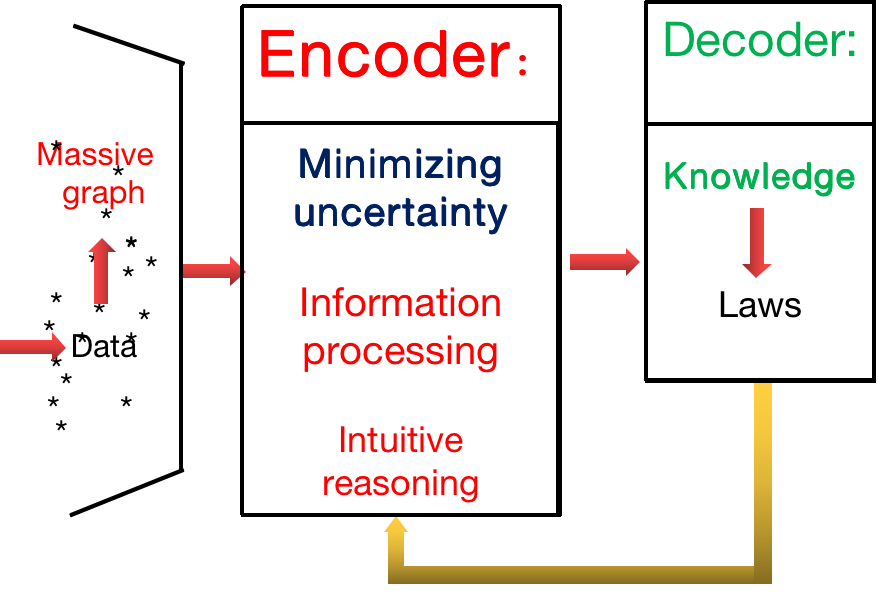}
    \caption{Structural Information Learning Machinery}
    %\label{fig:Learning Model}
\end{figure}

Figure 6 intuitively represents the procedure of a structural information learning machine. It proceeds as follows:

\begin{enumerate}
\item Observing data
\item Linking data to generate a graph, i.e., the data space, by the principle of maximizing the decoding information
\item Encoding the graph by structural entropy minimization principle to find the decoder of the data space
\item Interpreting the semantics from the found encoding tree to generate a knowledge tree
\item Abstracting from the knowledge tree and decoder to discover the laws of the data space
\item When new data is observed, the machine encodes or decodes by using the tree of abstractions. This is a step of intuitive reasoning.

\end{enumerate}

%\subsection{One more useful character of SiLeM: Learning from small dataset}

\section{Encoding Tree Method: Algorithms for Structural Information Learning Machines}\label{sec:encoding-tree-method}

A structural information learning machine $L$ consists of five phases:

\begin{enumerate}
\item [(1)] (Linking data) We construct a data space for a dataset by maximizing the {\it decoding information}, i.e., the decodable information from the construction of the data space. 
\item [(2)] (Decoder) We find the decoder by an encoding tree that minimizes the structural entropy. A decoder is hence the syntactically data structure which ensures that using the data structure, the uncertainty embedded in the original data space is minimized.
\item [(3)] (Knowledge Tree) We build a knowledge tree by  interpreting the semantics of the decoder. 
\item [(4)] (Abstracting) We extract the rules or laws from both the decoder and the knowledge tree of the data space to construct trees of abstractions.
\item [(5)] (Intuitive reasoning) We use the rules or laws from the trees of abstractions obtained from abstracting in future encoding and decoding, when new data points are observed.
\end{enumerate}

We have introduced algorithms for the phases (3), (4) and (5) above. 

For phase (1) and phase (2) above, the key is to find an encoding tree $T$ for a graph $G$, say. 

For phase (1), the optimization problem is to find an encoding tree $T$ such that

\begin{equation}\label{eqn:C}
\max\{\mathcal{C}^T(G)\},
\end{equation}
where $\mathcal{C}^T(G)$ is given in Equation (\ref{eqn:compressing-tree}).

For phase (2), the optimization problem is to find an encoding tree $T$ such that 

\begin{equation}\label{eqn:D}
\min\{\mathcal{H}^T(G)\},
\end{equation}
where $\mathcal{H}^T(G)$ is given in Equation (\ref{eqn:H-T-O}).

For both the optimization problems in Equations (\ref{eqn:C}) and (\ref{eqn:D}), we may design the algorithms by various kinds of tree operations. Due to this reason, we call it the {\it encoding tree method}. The encoding tree method is a general framework, instead of a specific strategy of algorithms, for finding an optimal encoding tree.

The encoding tree method proceeds as follows:

\begin{enumerate}
\item We start with an initial encoding tree $T_0$ of the following form.

The root is $\lambda$, and the root has $n$ many immediate successors, each of which is associated with a singleton $\{v\}$ for some vertex $v$ in $G$.

Suppose that $T$ is the currently constructed encoding tree.

 For two nodes $\alpha,\beta\in T$, we may define different operations by using the branches with roots $\alpha$ and $\beta$, respectively. For example, the merging operations and combining operations in \cite{LP2016a, L+2018}.
 
 \item Choose $\alpha$ and $\beta$ such that the operation using $\alpha$ and $\beta$ gains the maximal increment of information for phase (1) or gets the maximal entropy elimination for phase (2).
 
 \item Let $T^{\alpha,\beta}$ be the encoding tree obtained from $T$ by executing the operations using $\alpha$ and $\beta$.
 \item Set 
 
 - $T\leftarrow T^{\alpha,\beta}$.
 
 \item Go back to step 2 above.

\end{enumerate}

The encoding method is a general framework for information optimization problems corresponding to both phase (1) and phase (2) optimization problems of the structural information learning machinery. The encoding tree method leaves a huge room for information optimization.

Although there is no theoretical guarantee whether the algorithms find the optimum or almost optimum encoding tree, there are simple and efficient greedy algorithms for approximating an optimal encoding tree. It is interesting to notice that, even if simple greedy algorithms have remarkable performance in applications, in both efficiency and quality, see, for instance \cite{LP2016a,L+2018}.

The encoding tree method above has a number of advantages, such as:

\begin{enumerate}
\item [1)] Computation is local

Due to the locality of the operation using two tree nodes $\alpha$ and $\beta$, the computation of information gained or entropy eliminated from encoding tree $T$ to encoding tree $T^{\alpha,\beta}$ depends only on the branches with root $\alpha$ and $\beta$ in $T$. In addition, since the definition of both $\mathcal{C}^T(G)$ and $\mathcal{H}^T(G)$ has the additivity. For a given pair $\alpha$ and $\beta$, the corresponding incremental $\Delta_{\alpha,\beta}$ depends on only a small number of terms, and hence is locally computable.

\item [2)] The number of pairs $(\alpha,\beta)$ is restricted. For instance, we consider only the pairs $(\alpha,\beta)$ such that there are edges between $T_{\alpha}$ and $T_{\beta}$. Since, otherwise, the operations maybe proved inappropriate. In fact, we are able to prove some results showing that, there are only a small number of pairs $(\alpha,\beta)$ of tree nodes $\alpha$ and $\beta$ such that the operations 
with them require attention, all other pairs can be simply ignored. This further speeds up the algorithms for finding the desired encoding tree.

[An important research topic of the encoding tree method is to prove some lemmas to determine some conditions for the pairs $(\alpha,\beta)$ of tree nodes $\alpha$ and $\beta$ that require attention.]

\item [3)] For different types of applications, we may consider only the restricted type of encoding trees. For example, for most real world applications, the heights of the encoding trees are $2$ or $3$, which roughly correspond to the three-dimensional shape of objects in the real world. (In \cite{L+2018}, the algorithm deDoC was developed. It is the first principle-based (no any hand-made parameter) method to find topologically associating domains for genomes. The method deDoC is remarkably better than the currently existing methods.)

In applications of some other areas such as in natural language understanding, 
it would be better to restrain the height of encoding trees to be some number, $5$ say, which reflects the levels of abstractions of human learning.

[This provides the rooms of choices needed for applications in different areas. Of course, the motivation of restricting of the type of encoding trees is mainly for efficiency of algorithms.]

\item [4)] An important property of the encoding tree method is that, a local operation realizes a global benefit.

For example, for the structural entropy minimization problem. Given an encoding tree $T$, we choose $\alpha,\beta$ such that

\begin{equation}
\Delta_{\alpha,\beta}=\mathcal{H}^T(G)-\mathcal{H}^{T^{\alpha,\beta}}(G)
\end{equation}
is maximized, where $T^{\alpha,\beta}$ is the tree obtained from $T$ by operation at $\alpha$ and $\beta$.

We notice that both $T$ and $T^{\alpha,\beta}$ are global quantities. Hence, $\Delta_{\alpha,\beta}$ is actually a global quantity. However, $\Delta_{\alpha,\beta}$ has a local representation. Because, most terms in $\mathcal{H}^T(G)$ and $\mathcal{T^{\alpha,\beta}}(G)$ cancelled each other.
This feature ensures that when we choose $\alpha$ and $\beta$, the operation at $\alpha$ and $\beta$ has realized a global benefit.

Therefore, we realize that {\it a local action} gets {\it a global benefit}.

\item [5)] Another important feature of the encoding tree method is {\it robustness}, in the sense that, the algorithms work almost equally well even if one imputes some noises to the data space. To understand this, we consider the following case. Given a graph $G$, we may find an encoding tree $T$. Suppose that $G'$ is a graph obtained from $G$ by making a small number of changes. Then encoding tree $T$ of $G$ can be easily updated to be an encoding tree $T'$ of $G'$ such that $T'$ is different with $T$ only at some local areas. This feature can be easily observed from the encoding tree method.

\item [6)] The fundamental property of the encoding tree method is the principle of the maximization of decoding information from the encoding. This is an information theoretical principle, and of course, a general principle for our learning machinery. This means that although learning different real world objects has different semantics, there is a universal syntactical principle for the different learnings. This universal principle is to maximize the decoding information from the observed datasets. This principle can be realized because: (1) gaining information is mathematically equivalent to eliminating uncertainty, and (2) according to structural information theory, eliminating uncertainty can be transformed into an information optimizing problem.

\item [7)] According to the definition of the structural information learning machinery (SiLeM), there is no any hand-made parameter choice 
in any step of the learning machines. This is remarkably different from the currently known learning algorithms.

\item [8)] Encoding tree method provided by the structural information learning machinery (SiLeM) would be a general method for information optimization, and for learning and intelligence algorithms.

\item [9)] The most fundamental idea of the structural information learning machinery (SiLeM) is perhaps that the model better explores and captures the mathematical essence of the concept ``learning". This provides a foundation for us to mathematically study the relationships between learning and computing, between learning and intelligence, and between information and intelligence etc.

\end{enumerate}

\section{Theoretical Limitations of the Structural Information Learning Machinery (SiLeM)}\label{sec:limits-of-SiLeM}

Our model SiLeM is a general model of learning based on observation. However, as usual, any mathematical model has limitations. Our structural information learning machinery has its own limitations. Our SiLeM model shows that everything is learnable, provided that dataset can be observed, that many things, especially, non-mathematical objects are hard to be exactly learnt, in the sense that, every bit of information embedded in the object is decoded, and that learning is an approximating of information from a real object, instead of exactly computing the real world object.
This means that, for every real world object, we may always eliminate certain amount of uncertainty of the object, but we may never eliminate all the uncertainty embedded in the object.
Therefore, our structural information learning machinery (SiLeM) is a universal, but never omnipotent, model of learning.

In particular, there are the following limitations of the machines:

\begin{enumerate}
\item The machines learn from observing, and only from observing.

This means that our machines do not say anything about the objects the machines have not observed. On one hand, if the machines observe something of an object, we can gain certain amount information of the object, but on the other hand, if the machines do not observe anything about an object, we can gain only zero information from the object.
Clearly, observed objects form only a small set of the universe.

Our structural information learning machinery (SiLeM) differs from all the existing learning models that are built to try to learn every mathematical function from a hypothesis space, assuming that real world object is represented by a function in the hypothesis space.
Our model does not have such as a hypothesis. We don't assume any function represents a real world object, and we don't assume a certain space contains the function desired. All we assumed is that laws of an object is embedded in the space of the data points observed from the object. The mission of our learning machine is to find the laws from the noisy data space from the observations of the object. 

Our theory also shows that any statement about an object one never observes is groundless and is hence false.

\item Observation is key to our learning model.

As mentioned in 1 above, according to our model, without observation, then without any information learnt. Therefore, observing is the first fundamental mechanism of learning. To learn from the world, we need to observe the world first. Observation is probably incomplete, or insufficient, or even incorrect. This is, of course, a limitation of our structural information learning machinery (SiLeM). This also suggests that understanding human vision is an important direction for learning and for intelligence, which would be the first step for us to build human-like machines in the future.

\item Usually, real world objects are hard to be fully, exactly and completely observed at once. Observation of real world objects could be a cumulative procedure over the time.

\item An observation plays different roles in learning when it is associated with different knowledges.

It is also possible due to the differences of knowledge cumulation, the same observation of an object gives rise to different associations of the observation to the knowledges, and hence plays different roles in the learning procedure. Therefore, our model of SiLeM depends on the knowledge and laws it has already built. This is also a limitation of our SiLeM machines. However, it is reasonably true that people with different backgrounds may have different understandings on the same event observed. Our structural information learning machinery (SiLeM) has the same character.

\item There are systems from which little information can be decoded. 

For example, there are systems satisfying:

\begin{enumerate}
\item there is rich information embedded in the system, and
\item there is no a decoder that significantly eliminates the uncertainty embedded in the system.
\end{enumerate}

Examples of such systems include such as complete graphs or the graphs with a huge number of edges, see Proposition \ref{pro:complete}. 
However, systems observed from the real world are unlikely to be such cases. Physical systems observed from the real world are usually sparse, allowing an encoding tree to significantly decode the information embedded in the systems.

\item Our model does not assume a unique correct answer for learning. Our criterion is the semantical interpretability of a decoder, which is a syntactical structure, found based on the principle of maximization of decoding information (or minimization of structural entropy). This naturally solves the interpretability problem of learning. 

However, the principle above has no a mathematical proof. All we have is a high-level hypothesis, or thesis that a decoder (an encoding tree) has already significantly eliminated syntactical uncertainty of a system, certainly supports a structure of the functional modules (semantics) of the system. For this reason, our model resolves the interpretability problem of learning by using a thesis, instead of a mathematical proof.

Of course, due to the fact that, decoders may not be unique, it is possible for a system to have different semantics. This is also reasonable, similar to the case in model theory, in which an axiomatic mathematical system may have different models, or similar to the case in natural language understanding that a word may have different semantical interpretations.

\item As mentioned before, it is perhaps the major disadvantage of our structural information learning machinery (SiLeM) is that there is no mechanism for {\it creating} or {\it designing} in the model. All the goals of the structural information learning machinery is to gain information from the dataset a machine observed by eliminating uncertainty occurred in the dataset. The machines have no desire to create, and have no principle to decide what it can and will create.

\item The disadvantage in item 7 above indicates that there is a theoretical gap between learning and intelligence.

\end{enumerate}

\section{Conclusions and Discussion}\label{sec:con}

We have proposed a new model of learning, the structural information learning machinery, written SiLeM. A SiLeM machine learns the knowledge and laws of nature by observing the real world. The high-level of the SiLeM machines is the hypothesis that real world 
consists of both laws and noises, and that a real world system is a structure in which laws are embedded in a system of massive noises.
According to this hypothesis, the goal of {\it information processing} is to distinguish the laws from noises in a real world system. Fortunately, the structural information theory \cite{LP2016a,L+2018} provided a mathematical theory supporting such a mission. 
The model SiLeM is built based on the structural information theory. 

The contributions of the structural information learning machines (SiLeM) include:

\begin{enumerate}
\item The SiLeM machines show that learning is to gain information, that to gain information is to eliminate uncertainty, and that to eliminate uncertainty can be reduced to an information optimization problem.

This explores and captures the mathematical essence of learning.

\item The SiLeM learns the laws of nature by observing.

This feature makes the SiLeM like very much human learning.

\item The SiLeM learns by linking, connecting and associating data to data.

So our SiLeM accepts the hypothesis of connectionism. As a matter of fact, SiLeM moves forward further. According to Shannon's theory, we know that entropy is the amount of uncertainty, embedded in a probability distribution or a random variable, and information is the amount of uncertainty that has been eliminated. Shannon's theory characterizes the entropy of a random variable, and shows that information occurs in and only in transformation from one point to another. Let us summarize the Shannon theory as follows: Information is gained only in communication. 

Unfortunately, Shannon's theory only measures the entropy of a random variable or probability distribution. Clearly, uncertainty exists in not only random variables, but also in any complex systems. For the latter, Shannon did not say anything. Our structural entropy measures the amount of uncertainty of a complex system. The metric leads to fundamental theory of the information embedded in physical systems \cite{LP2016a}.

According to our structural information theory, we know that information exists in physical systems, precisely, in the interactions or communications in systems. This means that without interactions or communications in a system, there will be no information gain.

Our theory assumes that any procedure of learning is to gain information, or equivalently, to eliminate the uncertainty embedded in a system. 
This implies that without a system, there will be no learning.

More importantly, we believe that information is certainly one of the key factors of {\it artificial intelligence} (AI). Therefore, the SiLeM provides new insights for us to understand AI. 

The new insight for AI got from the structural information learning machines is that interactions and communications in complex systems are the foundations for intelligence generation. Therefore, a possible mathematical definition of artificial intelligence should be some well-defined metrics capturing the ultimate states of complex systems that interacting and communicating, for which the mechanism is {\it creation based on laws}.

\item The SiLeM learns both syntax and semantics of a data space.

\item The SiLeM can abstract rules and laws from the decoder and knowledge tree to construct trees of abstractions, in which
the concept of knowledge tree could be a general model for knowledge representation, and trees of abstractions would be the core idea of intuitive reasoning.

\item The SiLeM performs not only computation, consisting of local actions, but also performs intuitive reasoning, consisting of global actions. Here we understand the reasoning by using knowledge, laws and trees of abstractions as intuitive reasoning.

\item The SiLeM not only provides new insights for us to understand the mathematical essence of learning and intelligence generation, but also provides new approaches to important new applications such as big data analysis, biological and medical data analysis, and natural language understanding.
\item The SiLeM may provide principles for the hand-made parameter choices in many currently existing learning algorithms. This provides new insights for us to interpret and understand the current learning algorithms.

\item Mathematically speaking, our structural information learning machines show that learning is different from computation. Computation is a mathematical concept, dealing with computable functions and computing devices. Learning is largely a structural information theoretical concept. A learning procedure is to gain information from the datasets observed from real world objects for which the goal is to build knowledge and to discover the laws of the world. Furthermore, the universe of computation is mathematics, the operations of computation are local, the goal of computation is to efficiently compute the mathematically defined object functions. However, the universe of learning is the real world, learning deals with syntax, semantics and noises, the operations of learning are both logical reasoning and intuitive reasoning, the goal is learning is to discover the laws of nature by observing.

\end{enumerate}

The structural information learning machines provide new ideas for us to understand the concept of learning and intelligence. Equally important, the new machines are promising for us to develop new applications in a wide range of artificial intelligence. Theoretically speaking, the new machines provide a wide range of new applications, to name a few below, for example:

\begin{itemize}
\item Natural language understanding and natural language processing

From the point of views of information theory and structural information theory, information exists in communications. This means that if there is no communication, then there is no information gain. We may assume a basic hypothesis that information is key to intelligence and that human beings are highly intelligent. Where does the human intelligence come from? The most obvious phenomenon is that human beings communicate from the first day of birth. From this, we may assume that without communications, it is hard to have intelligence even if for human beings. This is probably true. If a baby was sent to an isolated island where no communications at all, then the baby may not grow up as a person at the level of intelligence of people in a normal society. Human beings communicate through natural languages. The arguments above show that natural language understanding and processing are key to the generation of human intelligence. For this reason, natural language understanding and processing could be crucial for us to capture the essence of human intelligence and then artificial intelligence. 

Natural language understanding naturally involves both syntax and semantics, both logical reasoning and intuitive reasoning, knowledge trees and trees of abstractions, and involves complex systems of languages and knowledges. The structural information learning machines provide all these ingredients for natural language understanding, and provide the mechanisms for learning from observations. 

The arguments above imply that well-defined structural information learning machines have the potential to realize natural language understanding and natural language processing.

\item Biological and medical data analysis

Biology and medical sciences provide a rich sources for data analysis. Analysis of biological and medical datasets should obey the laws of life science. There are plenty of life science laws, of course. However, the fundamental law of life science could be the natural selection, consisting of heredity and variation. Mathematically, heredity and variation intuitively correspond to copy and randomness, respectively. 
Then natural selection is a procedure of growing up by laws in a noisy environment. The result of a natural selection is just a system in which laws embedded in a noisy structure. The structural information theory provides for the first time a principle to distinguish laws from noises in a complex system. This is to say, it seems that the structural information theory naturally obey the laws of natural selection. Therefore, the structural information learning machines may work well on data analysis for biology and medical sciences.

\item Robots that learn

In many cases, robots are required to autonomously plan and act. This requirement can be realized by a machine that learns from observations. This feature is very well captured by our structural information learning machines. This high-level analogy between the requirement of robots and the mechanisms of the structural information learning machines indicates that well-built SiLeM may realize the goals of robots.

\end{itemize}

The structural information learning machinery implies that mathematical understanding of some grand challenges in the area of artificial intelligence is possible. A few examples include:

\begin{enumerate}

\item [(1)] The relationship between learning and intelligence

\item [(2)] The relationship between information and intelligence

\end{enumerate}

For the two challenges above, we have already had the mathematical definitions for learning and information, both of which must be the key ingredients of intelligence. From the point of view of 21st century science, we need a mathematical understanding of artificial intelligence.

\end{document}